\def\eqref#1{equation~\ref{#1}}
\def\1{\bm{1}}
\def\vzero{{\bm{0}}}
\def\vone{{\bm{1}}}
\def\vd{{\bm{d}}}
\def\vg{{\mathbf{g}}}
\def\vv{{\bm{v}}}
\def\vx{{\bm{x}}}
\def\mI{{\bm{I}}}
\def\mW{{\bm{W}}}
\DeclareMathAlphabet{\mathsfit}{\encodingdefault}{\sfdefault}{m}{sl}
\SetMathAlphabet{\mathsfit}{bold}{\encodingdefault}{\sfdefault}{bx}{n}
\def\sN{{\mathbb{N}}}
\def\sR{{\mathbb{R}}}
\newcommand{\E}{\mathbb{E}}
\newcommand{\softmax}{\mathrm{softmax}}
\DeclareMathOperator*{\argmax}{arg\,max}
\definecolor{hanblue}{rgb}{0.27, 0.42, 0.81}
\definecolor{deepred}{HTML}{900C3F}
\definecolor{deepgreen}{HTML}{2F6960}
\declaretheoremstyle[
  headfont=\sffamily\bfseries,
]{sansserif}
\theoremstyle{sansserif}
\newtheorem{theorem}{Theorem}
\newtheorem{proposition}[theorem]{Proposition}
\newtheorem{lemma}[theorem]{Lemma}
\theoremstyle{definition}
\theoremstyle{sansserif}
\newtheorem{assumption}[theorem]{Assumption}
\theoremstyle{remark}
\def\vzero{{\bm{0}}}
\def\vone{{\bm{1}}}
\def\mI{{\bm{I}}}
\def\mW{{\bm{W}}}
\def\calC{{\mathcal{C}}}
\def\calD{{\mathcal{D}}}
\def\calL{{\mathcal{L}}}
\def\calP{{\mathcal{P}}}
\def\calX{{\mathcal{X}}}
\def\calZ{{\mathcal{Z}}}
\def\sN{{\mathbb{N}}}
\def\sR{{\mathbb{R}}}
\def\diff{{\mathrm{d}}}
\newcommand{\Indfunc}{ \mathds{1}}
\newcommand{\diag}{\mathrm{diag}}
\DeclarePairedDelimiter\abs{\lvert}{\rvert}%
\DeclarePairedDelimiter\norm{\lVert}{\rVert}%
\let\oldabs\abs
\def\abs{\@ifstar{\oldabs}{\oldabs*}}
\let\oldnorm\norm
\def\norm{\@ifstar{\oldnorm}{\oldnorm*}}
\newcommand{\dd}{\mathrm{d}}
\def\E{{\mathbb{E}}}
\newcommand{\bx}{\mathbf{x}}
\newcommand{\bX}{\mathbf{X}}
\newcommand{\bZ}{\mathbf{Z}}
\newcommand{\bg}{\mathbf{g}}
\newcommand{\bb}{\mathbf{b}}
\newcommand{\by}{\mathbf{y}}
\newcommand{\bz}{\mathbf{z}}
\DeclareMathOperator{\Id}{\mathbf{I}}
\definecolor{textgray}{HTML}{6E6E73}
\patchcmd{\wrong@fontshape}{\@gobbletwo}{}{}{}
\numberwithin{equation}{section} 
\newcommand{\vX}{\bm{X}}
\newcommand{\vY}{\bm{Y}}
\newcommand{\vT}{\bm{T}}
\newcommand{\vdelta}{\bm{\delta}}
\newcommand{\Cmax}{C^{\mathrm{max}}_\mu}
\newcommand{\TV}{\mathrm{TV}}
\newcommand{\hChisq}{{\widehat{\chi}}^2}
\newcommand{\Chisq}{\chi^2}
\newcommand{\Unif}{\operatorname{Unif}}
\newcommand{\KL}{\mathrm{KL}}
\newcommand{\mynotation}{f_{\bg,\varepsilon}}
\newcommand{\fnoreg}{f_{\bg,0}}
\newcommand{\hatb}{\mathbf{m}}
\newcommand{\hatbscalar}{m}
\newcommand{\bs}{\mathbf{s}}
\newcommand{\rev}[1]{{#1}}
\newcommand{\iidsim}{\overset{{\tiny{\text{i.i.d}}}}{\sim}}
\newcommand{\boldmaxeps}{\bm{\sigma}_{\bb,\varepsilon}}
\renewcommand{\eqref}[1]{\textup{(\ref{#1})}}
\definecolor{light}{RGB}{125, 125, 125}
\crefname{tcb@cnt@pbox}{code}{code}
\Crefname{tcb@cnt@pbox}{Code}{Code}
\crefname{assumption}{assumption}{assumption}
\Crefname{assumption}{Assumption}{Assumptions}
\newtcolorbox[auto counter]{pbox}[2][]{
  colback=white,
  title=Code~\thetcbcounter: #2,
  #1,fonttitle=\sffamily,
  fontupper=\sffamily,
  arc=2pt,
  colframe=bgcolor,
  coltitle=fgcolor,
  colbacktitle=bgcolor,
  toptitle=0.25cm,
  bottomtitle=0.125cm
}
\newcommand\applefootnote[1]{%
  \begingroup
  \renewcommand\thefootnote{}%
  \renewcommand\@makefntext[1]{\noindent##1}%
  \footnote{#1}%
  \addtocounter{footnote}{-1}%
  \endgroup
}
\definecolor{cverbbg}{gray}{0.90}
\title{Flow Matching with Semidiscrete Couplings}
\author[*]{Alireza Mousavi-Hosseini}
\author[*]{Stephen Y. Zhang} 
\author{Michal Klein}
\author{Marco Cuturi}
\affiliation{Apple}
\abstract{
Flow models parameterized as time-dependent velocity fields can generate data from noise by integrating an ODE.
These models are often trained using flow matching, \textit{i.e.} by sampling random pairs of noise and target points $(\bx_0,\bx_1)$ and ensuring that the velocity field is aligned, on average, with $\bx_1-\bx_0$ when evaluated along a time-indexed segment linking $\bx_0$ to $\bx_1$. %
While these noise/data pairs are sampled independently by default, they can also be selected more carefully by matching batches of $n$ noise to $n$ target points using an optimal transport (OT) solver.
Although promising in theory, the OT flow matching (OT-FM) approach \citep{pooladian2023multisample, tong2024improving} is not widely used in practice. 
\citet{zhang2025fitting} pointed out recently that OT-FM truly starts paying off when the batch size $n$ grows significantly, which only a multi-GPU implementation of the \citeauthor{Sinkhorn64} algorithm can handle.
Unfortunately, the pre-compute costs of running \citeauthor{Sinkhorn64} can balloon, requiring $O(n^2/\varepsilon^2)$ operations for every $n$ pairs used to fit the velocity field, where $\varepsilon$ is a regularization parameter that should be typically small to yield better results.
To fulfill the theoretical promises of OT-FM, we propose to move away from batch-OT and rely instead on a \emph{semidiscrete} formulation that can leverage the fact that the target dataset is usually of finite size $N$. The SD-OT problem is solved by estimating a \emph{dual potential} vector of size $N$ using SGD; using that vector, freshly sampled noise vectors at train time can then be matched with data points at the cost of a maximum inner product search (MIPS) over the dataset \rev{with a cost $O(N)$}.
Semidiscrete FM (SD-FM) removes the quadratic dependency on $n/\varepsilon$ that bottlenecks OT-FM. SD-FM beats both FM and OT-FM on all training metrics and inference budget constraints, across multiple datasets, on unconditional/conditional generation, or when using mean-flow models. %

}
\date{\sffamily\today}
\begin{document}

\maketitle

\vspace{-1em}
\section{Introduction}
\vspace{-0.5em}

Flow-based generative models~\citep{rezende2015variational,dinh2016real,kingma2018glow,grathwohl2018ffjord} can gradually transform noise vectors into structured data.
Flow models parameterized as time-dependent velocity fields~\citep{chen2018neural} $\bm{v}_\theta(t,\bx)$ can be efficiently trained using flow matching (FM)~\citep{lipman2023flow, peluchetti2022nondenoising, albergo2023stochastic} to yield state-of-the-art performance~\citep{zheng2024open,esser2024scaling}. The FM training approach proposes to sidestep the need to differentiate through a costly numerical ODE integration at training time by instead minimizing velocity fields locally through a regression loss. In a nutshell, the loss is formed by sampling a pair of noise/data points $(\bx_0,\bx_1)$ and a random time $t\in[0,1]$ to evaluate $\|\bx_1-\bx_0-\bm{v}_\theta(t,\bx_t)\|^2$, where $\bx_t:=(1-t)\bx_0 +t\, \bx_1$ is a barycenter in the $[\bx_0,\bx_1]$ segment.

\paragraph{Coupling Noise$\times$Data.} A crucial ingredient in the FM methodology lies in choosing the \textit{coupling} of noise/data used to sample pairs. Indeed, given a source and target distribution, there are infinitely many probability paths that can interpolate between them; choosing a different coupling yields a different loss and hence a different flow~\citep{albergo2023stochastic,liu2022rectified}.
The default formulation of FM~\citep{lipman2024flow} relies on the \textit{independent} coupling, where noise and data are sampled independently. While simple and cheap, this approach is known to produce ODE paths with high curvature, which requires increased compute at inference time. In practice, the quality of samples is highly dependent on the number of function evaluations (NFEs) used to integrate the ODE.

\paragraph{Optimal Transport Flow Matching.} \citet{tong2024improving} and \cite{pooladian2023multisample} have proposed using an optimal transport (OT) coupling to select noise/data pairs motivated by the dynamic least-action principle of \citeauthor{benamou2000computational}. Both works advocate sampling $n$ noise and data points, compute an optimal $n$-permutation (or $n\times n$ coupling matrix, from which pairs of indices are sampled from), as illustrated in the middle-left plot of Figure~\ref{fig:fig1}.
While appealing in theory, OT-FM yields limited improvements for an additional precompute cost.
In a follow-up work, \citet{davtyan2025faster} have proposed to keep in memory the pairings returned by multiple Hungarian solvers run along training iterations, but this approach is bottlenecked by a price of $O(n^3)$ at each batch.
\citet{zhang2025fitting} hypothesize that the mitigated results of OT-FM are due to the choice of a small $n$: Indeed, optimal matchings on small sample sizes are inherently unstable and cannot reliably approximate matches that would appear for far larger $n$ due to the curse of dimensionality~\citep{hutter2021minimax,chewi2024statistical}.
\citet{zhang2025fitting} propose to instead use the \citeauthor{Sinkhorn64} algorithm with significantly larger $n$ (from $256$ used originally to $\approx 10^6$) using a multi-GPU-node implementation, carefully ablating the role of $\varepsilon$ regularization.
They argue that these computations are \textit{precompute} costs, happening independently of FM training, and therefore can be cached beforehand. However, that cost grows as $O(n^2/\varepsilon^2)$ ~\citep{pmlr-v80-dvurechensky18a,lin2019efficient} for every $n$ pairs fed to FM training. Their finding that larger $n$ / smaller $\varepsilon$ yield even better results hinders the adoption of OT-FM as currently implemented (see also our discussion in Table~\ref{tab:compute}).

\begin{figure}
    \centering
    \includegraphics[width=\textwidth]{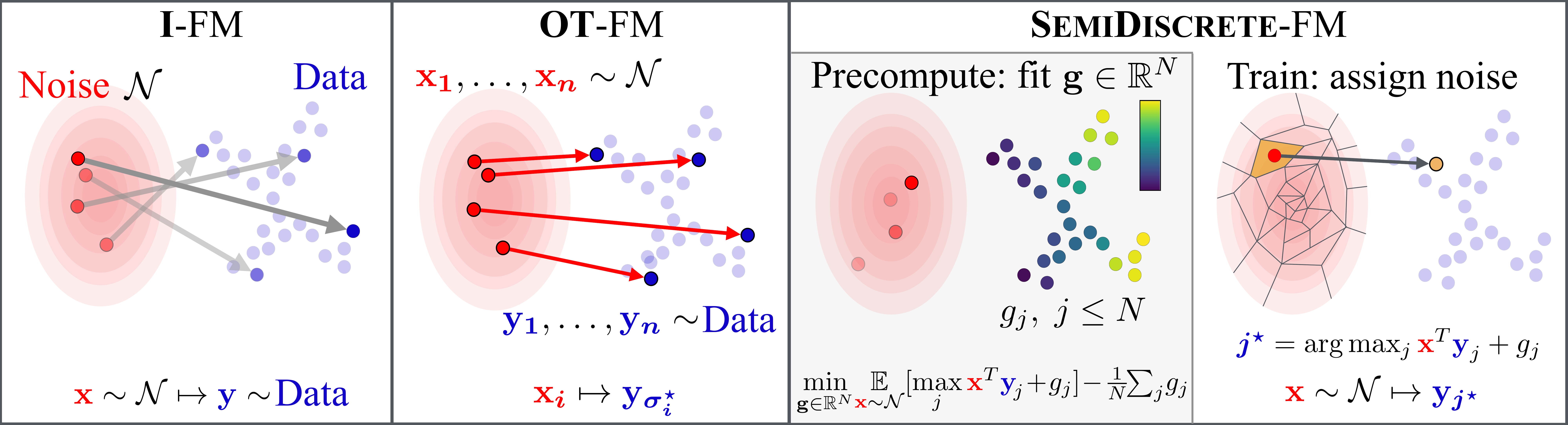}
    \caption{\textbf{I}-FM \textit{(left)} assigns noise to data purely at random.
    \textbf{OT}-FM \textit{(middle-left)} samples batches of $n$ noise and $n$ data points and re-aligns them with an optimal matching permutation $\sigma^\star$. These matches are, however, inherently unstable, as $n$ points do not reflect the whole noise distribution nor the dataset. Increasing drastically $n$ can mitigate this issue~\citep{zhang2025fitting}, but at a significant cost.
    Our method, \textbf{SD}-FM \textit{(right)}, solves these issues in two steps: in a precompute phase, the semidiscrete OT problem (parameterized as a vector of size $N$, the dataset size) is solved using SGD. At FM train time , each newly sampled noise is assigned to a data point using a maximum inner product search, \textit{Laguerre} cells~\citep{merigot2011multiscale} being illustrated in the plot. Our figure uses no entropic regularization ($\varepsilon=0$) and a neg-dot-product cost for simplicity, see~\eqref{eq:soft-min} for more generality.}
    \label{fig:fig1}
    \vskip-.5cm
\end{figure}
\paragraph{The Semidiscrete Approach.} 
Our work proposes an alternative route to OT-guided FM that does go to the costly process of optimally pairing batches of noises to data.
We leverage the entropy regularized \textit{semidiscrete} (SD) formulation of OT~\citep{Oliker1989,merigot2011multiscale, cuturi2018semidual,an2019ae}, which studies OT from a \textit{continuous} (noise) to a \textit{discrete} (data) distribution. SD-OT relies on a potential vector of size $N$ of the target, fitted using SGD~\citep{genevay2016stochastic}. At FM train time, freshly sampled noise is paired to a point in the dataset in $O(N)$ time, as summarized in Figure~\ref{fig:fig1}. %
Following the presentation of background material in \Cref{sec:bg},
\begin{itemize}[leftmargin=.2cm,itemsep=.0cm,topsep=0cm,parsep=2pt]
    \item We propose in \Cref{sec:SD} a new convergence criterion and convergence analysis of SGD for regularized SD-OT that is applicable to both the $\varepsilon=0$ and $\varepsilon>0$ cases.
    \item We introduce in \Cref{sec:semidiscrete_FM} our semidiscrete FM (SD-FM) method, comparing it to FM and OT-FM in terms of memory and compute. We propose a generalization of the Tweedie formula~\citep{robbins1956empirical} for flows trained using SD-FM, with a correction $\vdelta_\varepsilon$ that vanishes when $\varepsilon\approx 0$ or $\varepsilon\approx \infty$. We use it to sample from a geometric mixture of distributions using their velocity flows.
    \item We illustrate in \Cref{sec:experiments} in a varied set of (un)conditional generative tasks the advantages of SD-FM over OT-FM and FM. We show that SD-FM results in a better pairing of noise to data (as seen in better metrics) for a negligible computational overhead relative to the cost of FM training.
\end{itemize}

\section{Background and related work}\label{sec:bg}
\paragraph{Stochastic interpolants and flow matching.}
For two probability distributions $\mu,\nu\in\calP(\mathbb{R}^d)$, let $\Gamma(\mu,\nu)\subset\calP(\mathbb{R}^d \times \mathbb{R}^d)$ be the set of couplings between $\mu$ and $\nu$, i.e. all joint distributions having $\mu$ as $\nu$ as their first and second marginal, respectively. 
Consider a source and target pair of distributions $\mu_0, \mu_1$ and $\pi \in \Gamma(\mu_0,\mu_1)$ a prescribed coupling between the two.
For an interpolant function $(t, \bx, \by) \mapsto \varphi_t(\bx, \by) \in \mathbb{R}^d$ such that $\varphi_0(\bx, \by) = \bx$ and $\varphi_1(\bx, \by) = \by$, ~\citet[Definition 2.1]{albergo2023stochastic} prove that to a coupling $\pi$ and an interpolant function $\varphi_t$ corresponds a flow generating a continuous \emph{probability path} $\rho_t \in \mathcal{P}(\mathbb{R}^d)$ where $\rho_t := \varphi_t\# \pi$, where $\#$ is the pushforward operator, and $\rho_0=\mu_0,\,\rho_1=\mu_1$.
That interpolant can be written equivalently as a \emph{probability flow ODE} with a random initial condition, for a suitable time-parameterized velocity field $\bm{v}_t$:
\begin{equation}
    \diff \vX_t = \bm{v}_t(\vX_t) \diff t, \quad \vX_0 \sim \mu_0. \label{eq:pfode}
\end{equation}
Access to $\bm{v}_t$ then allows for samples $\vX_0 \sim \mu_0$ to be transformed into samples $\vX_1 \sim \mu_1$ through evolution of \eqref{eq:pfode}. 
Training a neural approximation $\bm{v}_\theta$ for the flow field $\bm{v}$ can be done by sampling $(\vX_0, \vX_1) \sim \pi$ and $t \in [0, 1]$, setting $\vX_t := \varphi_t(\vX_0, \vX_1)$ and minimizing
\begin{equation}
    \min_{\theta} \: \E_{t, (\vX_0, \vX_1) \sim \pi} \: \mathcal{L}(\vX_1 - \vX_0, \bm{v}_\theta(t,\vX_t)).  \label{eq:fm_loss}
\end{equation}
The \textit{linear} interpolant $\varphi_t(\bx, \by) := (1-t) \bx + t \by$ and the squared Euclidean distance for $\mathcal{L}$ are commonly used~\citep[Eq. 2.3]{lipman2024flow}. \citet[Thm. 2.7]{albergo2023stochastic} prove that the global minimum in \eqref{eq:fm_loss} recovers $\bm{v}_t$ from \eqref{eq:pfode}. Other interpolants or loss functions~\citep{song2024improved,kim2024simple} 
as well as penalizations facilitating one-step generation~\citep[and references therein]{boffi2025flow} 
have also been proposed.

\paragraph{Straighter flows...}
While the independent product coupling $\pi_I := \mu_0 \otimes \mu_1$ is the most widely used in practice \citep[\S4.8.3]{lipman2024flow}, it may not be efficient in the sense that it induces \emph{curved} flows. This means that adaptive ODE solvers consuming many neural function evaluations (NFEs) must be used to generate data at inference time.
This is hardly surprising from an OT perspective~\citep[\S4]{santambrogio2015optimal}, because the independent coupling $\pi_I$ is known to incur a high transportation cost, where for any valid coupling $\pi\in\Gamma(\mu_0,\mu_1)$ that cost is usually defined as $ \mathcal{C}(\pi):=\mathbb{E}_{(\vX_0,\vX_1)\sim\pi_I}\|\vX_0-\vX_1\|^2$.
Concretely, although the flow $\bm{v}_t$ learned from $\pi_I$ might validly link $\mu_0$ to $\mu_1$, its \emph{kinetic energy} $\smallint_0^1 \| \bm{v}_t \|_{L^2(\rho_t)}^2 \diff t$ is high.
\citet{benamou2000computational} show how optimal transport arises naturally as the least-action dynamics that transports $\mu_0$ to $\mu_1$. That map is exactly the \citeauthor{Monge1781} map, and the corresponding optimal flow paths are straight lines that would be trivial to integrate. To benefit from this insight, two families of works have emerged. 

\paragraph{... using Reflow.} \citet{liu2022rectified} proposes to straighten a \textit{pretrained} flow model~\citep{liuflow} using \textit{Reflow}, a method that forms noise/generated-data pairs, used subsequently to improve that model using FM. For Reflow to work, the pretrained flow model must be good enough to generate approximately the target distribution, and significant compute must be spent on generating at each reflow step (through ODEs) a large number of ``virtual'' data points. While successful~\citep{liu2024instaflow}, Reflow is a costly recursive \textit{post-training} procedure that uses FM training as an \textit{inner} iteration.

\paragraph{... using pairs sampled from OT couplings.} \citet{pooladian2023multisample} and~\citet{tong2024improving} (see also discussion in~\citealt[\S4.9.2]{lipman2024flow}) propose OT-FM, a much lighter approach that simply replaces $\pi_I$ by a coupling $\pi$ that should, ideally, approximate the OT coupling $\pi^\star=\arg\min_{\pi\in\Gamma(\mu_0,\mu_1)}\mathcal{C}(\pi)$. 
Compared to Reflow, which uses FM as an \textit{inner} step, OT-FM adds a \textit{pre-processing} effort to FM training, to select better noise/data pairs. Because the ground-truth OT coupling $\pi^\star$ is never known, however, they use OT \textit{matrices}: sampling $n \approx 256$ points from both $\mu_0$ and $\mu_1$, they match them using an OT solver (e.g. the Hungarian algorithm,~\citealt{kuhn1955hungarian}) and feed these pairs to the flow loss~\eqref{eq:fm_loss}.
\citet{davtyan2025faster} proposed then LOOM-CFM, a hybrid approach that stores buffers of paired noise/data samples (using the Hungarian algorithm).
Unfortunately, OT-FM yields only modest gains.~\citet{zhang2025fitting} posit that this is due to the well-known statistical bias arising when approximating continuous OT couplings using samples~\citep{hutter2021minimax}. To reduce the effect of this bias, they use significantly higher $n$ handled with a multi-GPU, multi-node implementation of the \citeauthor{Sinkhorn64} algorithm, carefully ablating the role of the $\varepsilon$ regularization parameter. While encouraging, their results show better performance as $n$ grows and $\varepsilon$ shrinks to 0, an explosive cocktail of hyperparameters since the cost of running~\citet{Sinkhorn64} grows as $O(n^2/\varepsilon^2)$.
Fundamentally, these approaches are bottlenecked by their reliance on the \emph{discrete} view of OT problems -- that is, they rely repeatedly on matching $n$ i.i.d. samples of noise and data.

\paragraph{Conditional generation using OT.}
\citet{chemseddine2024conditional, kerrigan2024dynamic, hosseini2025conditional, baptista2024conditional} provided a noteworthy extension of OT-FM in the context of \emph{conditional} generation, where each data point $\bx \in \mathbb{R}^d$ is paired with a corresponding condition $\bz \in \mathbb{R}^p$. The \textit{data} is now an \textit{augmented} random variable $(\vX_1,\bm{Z}_1)\sim\tilde{\mu}_1\in\mathcal{P}(\sR^{d+p})$, with a marginal distribution $\tilde{\mu}_{1,\bm{Z}}$ over \textit{conditions}. The goal remains to generate samples from noise in $\sR^d$, but conditioned on a vector $\bz\in\sR^p$ of interest.
OT-FM has a natural generalization to this setting, using the notion of condition-preserving \emph{triangular} maps $T : (\bx, \bz) \mapsto (T_\bz(\bx), \bz)$ that couples conditional noise distribution $\tilde{\mu}_0 := \mathcal{N}(0, \Id) \otimes \tilde{\mu}_{1,\bm{Z}}\in \mathcal{P}(\sR^{d+p})$ with augmented data in $\tilde{\mu}_1$.
\citet[Prop. 1]{kerrigan2024dynamic} (see also \citealt[Theorem 2.4]{baptista2024conditional}) state that such triangular maps
can be reduced to transport of distributions supported on the product space $\mathbb{R}^{d+p}$ following exactly OT-FM principles, implemented as ODE trajectories using a \textit{conditional} vector field $\dot{\vX}_t = \bm{v}(t, \vX_t | \bz)$. OT-FM then consists in sampling $n$ noise and condition vectors, along with $n$ data points and their known condition, which are then optimally paired using an \emph{augmented} cost: 
\begin{equation}
     c((\bx, \bz), (\bx', \bz')) = c_\calX(\bx, \bx') + \beta c_\calZ(\bz, \bz') , \quad \beta > 0 \,.\label{eq:cost_conditional}
\end{equation}

\paragraph{Semidiscrete optimal transport.}
In the scenarios envisioned in this work, the noise measure $\mu_0$ is continuous, while the target measure $\mu_1$ of data is finite.
This setting fits the stochastic optimization approach outlined in~\citep[\S2]{genevay2016stochastic}, defined for two probability measures $\mu, \nu \in \calP_2(\mathbb{R}^d)$, a cost function $c : \sR^d \times \sR^d \to \sR$ and entropic regularization $\varepsilon\geq 0$ as:
\begin{equation}\label{eq:primal}
    \min_{\pi \in \Gamma(\mu,\nu)} \mathcal{C}_{\varepsilon}(\pi):=\E_{(\vX,\vY)\sim \pi} \left[ c(\vX,\vY) \right] + \varepsilon\KL(\pi | \mu\otimes\nu),
    \tag{$\mathrm{P}_\varepsilon$}
\end{equation}
This problem can be written in a \textit{semidual} form~\citep{cuturi2018semidual}, leveraging the fact that $\nu$ is a finite measure, $\nu:=\sum_{j=1}^N b_j \delta_{\by_j}$ where $\bb=(b_1,\dots,b_N)$ is a probability vector. For a potential vector $\bg=(g_1,\dots, g_N)\in\mathbb{R}^N$, we define the soft-$c$ transform ~\citep[\S5.3]{PeyCut19} as:
\begin{equation}\label{eq:soft-min}
    \mynotation:\bx \in\mathbb{R}^d\mapsto \begin{cases}
    -\varepsilon \log \left[ \sum_{j=1}^N b_j \exp\left( \frac{g_j - c(\bx, \by_j)}{\varepsilon} \right) \right], & \varepsilon > 0 \\ 
    -\max_{j=1}^N \left[ g_j - c(\bx, \by_j)\right], & \varepsilon = 0.
    \end{cases}\in\sR
\end{equation}
One can solve \eqref{eq:primal} by solving the \textit{concave maximization} semidual problem parameterized entirely by $\bg$, that \citet[Alg. 2]{genevay2016stochastic} proposed to solve using averaged SGD (our \Cref{alg:sgd})
\begin{equation}\label{eq:semidual}
    \max_{\bg \in \sR^N} F_{\varepsilon}(\vg) \coloneqq \E_{\vX\sim\mu} [\mynotation(\vX)] + \langle\bb, \bg\rangle\,.
    \tag{$\mathrm{S}_\varepsilon$}
\end{equation}
For any $\varepsilon \geq 0$, the values of \eqref{eq:primal} and \eqref{eq:semidual} are equal. Moreover for $\varepsilon > 0$, given a maximizing potential $\bg^\star=(g^\star_1,\dots,g^\star_N)$ of \eqref{eq:semidual}, one can recover a minimizer $\pi^\star_{\varepsilon}$ of \eqref{eq:primal} that has density
\begin{equation} 
    \dd\pi^\star_{\varepsilon}(\bx,\by_j) = \exp\left(\frac{\mynotation(\bx) + g^\star_j -c(\bx,\by_j)}{\varepsilon}\right)b_j\:\dd\mu(\bx).\label{eq:optimal_coupling_entropic}
\end{equation}

\section{Improved Optimization for Semidiscrete Optimal Transport}\label{sec:SD}
We \rev{adapt the approach of \citet{genevay2016stochastic} to the much larger scales used in this work through two novel contributions: we propose an unbiased convergence criterion to monitor convergence, and a theoretical analysis to inform the computation of SD-OT that works \textit{also} for the case $\varepsilon=0$. \citet{genevay2016stochastic} had not considered that case, which turns out to be the most promising when using SD-OT within the SD-FM methodology presented next in \S\ref{sec:semidiscrete_FM}.}

\paragraph{Marginal Estimation for SD Couplings.} The primal-dual relationship in~\eqref{eq:optimal_coupling_entropic} holds at optimality, but can \textit{also} be used to create a coupling by plugging a vector $\bg\in\sR^N$ into~\eqref{eq:optimal_coupling_entropic} to define a joint probability that only satisfies the \textit{first} marginal constraint. The exponentiation in~\eqref{eq:optimal_coupling_entropic} can be rewritten using the softmax operator mapping a vector $\bz=(z_1,\dots,z_N)$ to a vector in the simplex $\Delta^N$,
\begin{equation}
    \boldmaxeps(\bz) = \begin{cases} 
        \left[\frac{b_j\exp(z_j/\varepsilon)}{\sum_{k = 1}^N b_k \exp(z_k/\varepsilon)}\right]_j, \:\:\: \varepsilon > 0, \qquad \left[\frac{\Indfunc[j \in \argmax_\ell z_\ell ] b_j }{\sum_{k = 1}^{N} \Indfunc[k \in \argmax_\ell z_\ell ] b_k }\right]_j, \:\:\: \varepsilon = 0.
    \end{cases}
\end{equation}
This helps to define a coupling $\pi_{\varepsilon,\vg}$ that arises from a partial optimization of the dual, as
\begin{equation}\label{eq:pi_eps}
    \dd\pi_{\varepsilon,\vg}(\bx,\by_j) \: \coloneqq [s_{\varepsilon,\vg}(\bx)]_j \dd\mu(\bx)\,, \text{ where } \bs_{\varepsilon,\vg}(\bx):= \boldmaxeps([g_j- c(\bx, \by_j)]_j) \in \Delta^N.
\end{equation}
 By construction, the first marginal of $\pi_{\varepsilon,\vg}$ is $\mu$: Indeed, the measure $\pi_{\varepsilon,\vg}(\bx,\by_j)$ in \eqref{eq:pi_eps} sums to $\dd\mu(\bx)$ when integrated against the data for a fixed $\bx$, since summing w.r.t. index $j$ is akin to summing a weighted soft-max distribution.
On the other end, $\pi_{\varepsilon,\vg}$ would be an OT coupling if and only if its 
second marginal, defined as 
\begin{equation}
    \hatb(\vg) := [\pi_{\varepsilon,\vg}(\sR^d,\by_j)]_j\! =\!\int_{\vx\in\sR^d}\!\!\!\!\!\!\!\!\!\! \bs_{\varepsilon,\vg}(\bx)\dd\mu(\bx) \in \Delta^N.\label{eq:marginal}
\end{equation}
coincided with $\bb$, by analogy to the \citeauthor{Sinkhorn64} algorithm~\citep[\S4.14]{PeyCut19}. Note that $\pi_{\varepsilon,\vg}$ solves \eqref{eq:primal} if $\vg$ is a maximizer of \eqref{eq:semidual}.

\begin{wrapfigure}{r}{0.45\textwidth}
\begin{minipage}[t]{0.45\textwidth}
    \vspace{-7mm}
    \centering
    \includegraphics[width=\linewidth]{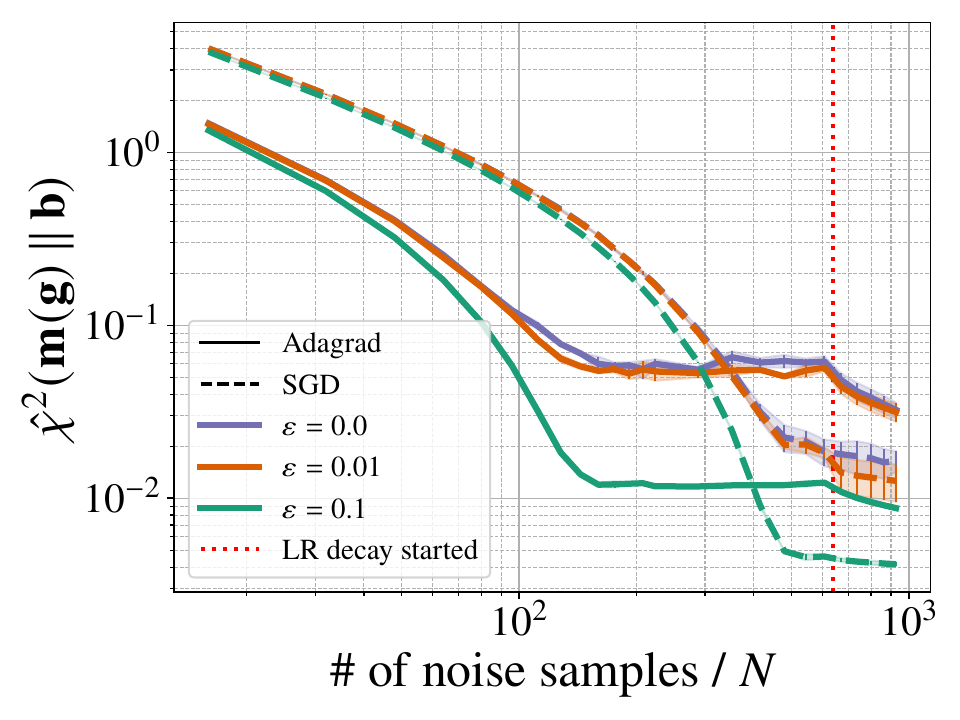}
    \caption{\textbf{\textsf{SD-OT convergence}}: $\hChisq$-divergence vs.\ SGD optimization steps for ImageNet-32, averaged over 3 seeds. See \Cref{sec:appendix_experiments} for details.} 
    \label{fig:chisq_vs_iter}
\end{minipage}
\end{wrapfigure}

\paragraph{Convergence Criterion.}
To our knowledge, no convergence criterion has been considered in the stochastic SD-OT literature \citep[Alg.1]{an2019ae}, \citep[Alg.2]{genevay2016stochastic}. Recall that $\vg$ is the solution to the dual problem if and only if $\hatb(\vg) = \bb$, i.e.\ the $\nu$-marginal constraint is satisfied.

A good criterion for convergence could be given by a distance between $\hatb(\vg)$ and $\bb$.  A natural candidate could be total variation, 
$\TV(\hatb(\vg),\bb) = \frac{1}{2}\|\hatb(\vg) - \bb\|_1$, as often used to track the convergence of the \citeauthor{Sinkhorn64} algorithm.
Unfortunately for continuous $\mu$, the expectation used in $\hatb(\bg)$ would need to be replaced with an empirical mean, leading to a biased estimate, because that expectation would be inside the norm. Reducing that bias would require a number of samples scaling linearly with $N$, making it prohibitively large. Fortunately, one can efficiently obtain an unbiased estimator for the $\Chisq$ divergence, defined for two vectors $\mathbf{p},\mathbf{q} \in \Delta^N$ as $\Chisq(\mathbf{p}\Vert\mathbf{q})=\sum_j(p_j/q_j)^2q_j - 1$. That $\Chisq$ divergence can be formulated as an expectation, as highlighted in the fact below, proved in \ref{subsec:proofs}:
\begin{equation}\label{eq:chisq}
\textsf{\textbf{ Fact 1}}:        \Chisq(\hatb(\vg) \,\Vert\, \bb) = \iint_{\bx,\bx'\in\sR^d} \sum_j \tfrac{1}{b_j} [s_{\varepsilon,\vg}(\bx)]_j [s_{\varepsilon,\vg}(\bx')]_j \dd\mu(\bx) \dd\mu(\bx') - 1.
\end{equation}
When $\nu$ is uniform, by \Cref{lem:tv_chisq}, the above $\Chisq$ divergence is also equal to the rescaled squared norm of the semidual gradient.
From \eqref{eq:chisq}, we propose in \eqref{eq:hchisq} an unbiased estimator using a batch of i.i.d. samples $\bx_1,\dots, \bx_B\sim\mu$ that can be computed in $O(NB)$ time. \Cref{fig:chisq_vs_iter} shows that we are able to effectively track $\hChisq$ as it decays towards zero along SGD updates.
\begin{equation}\label{eq:hchisq}
    \hChisq(\bb(\vg) \,\Vert\, \bb) = \frac{1}{B(B-1)}\sum_{j=1}^N \frac{1}{b_j}\left(\left(\sum_i [s_{\varepsilon,\vg}(\bx_i)]_j\right)^2 - \sum_i [s_{\varepsilon,\vg}(\bx_i)]_j^2\right) - 1.
\end{equation}
\vspace{-0.3em}
\paragraph{Convergence Analysis.}
Studying the case $\varepsilon = 0$ requires some additional regularity assumptions, that we use to state the convergence guarantee of SGD for SD-OT in Theorem~\ref{thm:sgd_convergence}.
\begin{assumption}\label{assump:reg}
    Suppose $c(\bx,\by) = -\bx^\top\by$, $\delta \coloneqq \min_{j \neq j'}\|\by_j - \by_{j'}\| > 0$, and $\mu$ admits a density w.r.t.\ the Lebesgue measure. Additionally, suppose the maximum $\mu$-surface area of any convex subset of $\sR^d$ is bounded by $\Cmax$, \textit{i.e.}
    $\smallint_{\partial A}\mu(\bx)\dd S(\bx) \leq \Cmax$ for all convex $A \subseteq \sR^d$, $S$ being the $d-1$-dimensional Euclidean surface measure. When $\mu = \mathcal{N}(0,\mI_d)$, $\Cmax \leq 4 d^{1/4}$~\citep{ball1993reverse}.
\end{assumption} 

\begin{theorem}\label{thm:sgd_convergence}
    Suppose either $\varepsilon > 0$ or Assumption \ref{assump:reg} is satisfied. Let $L_\varepsilon \coloneqq 1/\varepsilon$ for $\varepsilon > 0$ and $L_0 \coloneqq \Cmax/\delta$ else. For any $K \in \sN$, let $\eta_k = \sqrt{\Delta/(L_\varepsilon  K)}$ be a constant learning rate, where $\Delta \coloneq F^\star_{\varepsilon} - F_{\varepsilon}(\vzero)$. Let $\vg_k$ denote the SGD iterates with $\vg_0 = \vzero$, and let $t \sim \Unif(\{0,\hdots,K-1\})$. Let $\vg^\star_{\varepsilon}$ be an optimal dual solution to \Cref{eq:semidual}. Then, for any $\varepsilon \geq 0$, and taking expectations w.r.t.\ randomness in noise samples and $t$,\\
    \[
    \E[\Chisq(\hatb(\bg_t) \,\Vert\, \bb)] \lesssim \tfrac{1}{\min_j b_j}\sqrt{\tfrac{L_\varepsilon\Delta}{K}} \quad \text{and} \quad \E[\calC_{\varepsilon}(\pi_{\varepsilon,\vg_t})] - \calC^\star_{\varepsilon} \lesssim \left(\norm{\vg^\star_{\varepsilon}}^2 + \tfrac{\Delta}{L_\varepsilon}\right)^{1/2} \cdot \left(\tfrac{L_\varepsilon\Delta}{K}\right)^{1/4}.
    \]
\end{theorem}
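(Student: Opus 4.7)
The plan is to reduce both target bounds to the squared dual gradient norm $\E\|\nabla F_\varepsilon(\vg_t)\|^2$ via the envelope identity $\nabla F_\varepsilon(\vg) = \bb - \hatb(\vg)$, obtained by differentiating \eqref{eq:soft-min} through the definition \eqref{eq:marginal}. The stochastic gradient $\bb - \bs_{\varepsilon,\vg}(\vX)$ is unbiased (on a $\mu$-full-measure set, even when $\varepsilon = 0$, since the $\argmax$ in \eqref{eq:soft-min} is unique $\mu$-almost surely under \Cref{assump:reg}) and bounded in norm by $2$ because $\bs_{\varepsilon,\vg}(\vX)\in\Delta^N$. Given $L_\varepsilon$-smoothness of $F_\varepsilon$, the descent lemma applied to $-F_\varepsilon$ and standard non-convex SGD bookkeeping yield
\[
\tfrac{1}{K}\sum_{k=0}^{K-1}\E\|\nabla F_\varepsilon(\vg_k)\|^2 \;\lesssim\; \sqrt{L_\varepsilon\Delta/K}
\]
for the prescribed constant step size $\eta_k = \sqrt{\Delta/(L_\varepsilon K)}$, which is exactly $\E\|\nabla F_\varepsilon(\vg_t)\|^2$ for uniformly random $t \in \{0,\ldots,K-1\}$.

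\paragraph{Smoothness of $F_\varepsilon$.}
The main obstacle is establishing $L_\varepsilon$-smoothness. For $\varepsilon > 0$ it is immediate: the Hessian of $\vg \mapsto -f_{\vg,\varepsilon}(\bx)$ equals $\varepsilon^{-1}$ times the covariance of the softmax vector $\bs_{\varepsilon,\vg}(\bx)$, whose operator norm is at most $1/\varepsilon$, and the expectation over $\mu$ preserves the bound. The $\varepsilon = 0$ regime is the truly non-routine part: the per-sample map $f_{\vg,0}(\bx) = -\max_j[g_j + \bx^\top \by_j]$ is only piecewise linear in $\vg$, so smoothness of $F_0$ must come from averaging against a sufficiently regular $\mu$. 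Under \Cref{assump:reg}, $\partial_{g_j} F_0(\vg) = b_j - \mu(L_j(\vg))$, where $L_j(\vg)$ is the $j$-th Laguerre cell; its pairwise interfaces with $L_{j'}$ are hyperplanes with normal $(\by_j - \by_{j'})/\|\by_j - \by_{j'}\|$. Perturbing $\vg$ shifts each interface at speed at most $\|\by_j-\by_{j'}\|^{-1}\leq 1/\delta$, and integrating the induced mass flux against $\mu$ bounds every off-diagonal Hessian entry by $\Cmax/\delta$; a Gershgorin-style argument (with conservation of mass across adjacent cells controlling the diagonal) then controls the operator norm by $O(\Cmax/\delta) = O(L_0)$.

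\paragraph{Deriving the two bounds.}
The $\chi^2$ bound follows from the elementary inequality $\Chisq(\hatb(\vg)\Vert\bb) = \sum_j(\hatb_j(\vg)-b_j)^2/b_j \leq \|\hatb(\vg)-\bb\|^2/\min_j b_j$ combined with $\hatb(\vg) - \bb = -\nabla F_\varepsilon(\vg)$ and the SGD rate above. For the primal gap, direct integration of the density in \eqref{eq:pi_eps} (the $\varepsilon \KL$ term cancels cleanly with $f_{\vg,\varepsilon}$ inside the log-density) gives the identity $\calC_\varepsilon(\pi_{\varepsilon,\vg}) = \E_\mu[f_{\vg,\varepsilon}] + \langle\hatb(\vg),\vg\rangle$, and strong duality $F_\varepsilon(\vg^\star_\varepsilon) = \calC_\varepsilon^\star$ yields
\[
\calC_\varepsilon(\pi_{\varepsilon,\vg}) - \calC_\varepsilon^\star = \langle \hatb(\vg)-\bb,\vg\rangle + F_\varepsilon(\vg) - F_\varepsilon(\vg^\star_\varepsilon).
\]
Applying Cauchy--Schwarz on the first term and concavity $F_\varepsilon(\vg^\star_\varepsilon) - F_\varepsilon(\vg) \leq \|\nabla F_\varepsilon(\vg)\|\,\|\vg^\star_\varepsilon-\vg\|$ on the second bounds the gap by $\|\nabla F_\varepsilon(\vg)\|\cdot(\|\vg^\star_\varepsilon\| + \|\vg-\vg^\star_\varepsilon\|)$. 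The second factor is controlled by the standard SGD distance recursion: concavity forces $\langle\nabla F_\varepsilon(\vg_k),\vg_k-\vg^\star_\varepsilon\rangle \leq 0$, so telescoping the identity $\E\|\vg_{k+1}-\vg^\star_\varepsilon\|^2 = \|\vg_k - \vg^\star_\varepsilon\|^2 + 2\eta\langle \nabla F_\varepsilon(\vg_k), \vg_k - \vg^\star_\varepsilon\rangle + \eta^2 \E\|\text{noise}\|^2$ yields $\E\|\vg_t-\vg^\star_\varepsilon\|^2 \leq \|\vg^\star_\varepsilon\|^2 + K\eta^2 \cdot O(1) \lesssim \|\vg^\star_\varepsilon\|^2 + \Delta/L_\varepsilon$. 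A final Cauchy--Schwarz across the two factors combined with Jensen's inequality $\E\|\nabla F_\varepsilon(\vg_t)\| \leq (\E\|\nabla F_\varepsilon(\vg_t)\|^2)^{1/2} \lesssim (L_\varepsilon\Delta/K)^{1/4}$ produces the stated second bound.
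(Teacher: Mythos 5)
Your proposal is correct and follows essentially the same route as the paper's proof: the gradient identity $\nabla F_\varepsilon(\vg)=\bb-\hatb(\vg)$, the smoothness constants $1/\varepsilon$ and $\Cmax/\delta$ (the $\varepsilon=0$ case via the Laguerre-cell interface Hessian, which the paper takes from Kitagawa et al.'s formula and its graph-Laplacian structure), bounded stochastic gradients ($G\le 2$ as a difference of probability vectors), the smoothness-based gradient-norm rate combined with $\Chisq(\hatb(\vg)\,\Vert\,\bb)\le \|\nabla F_\varepsilon(\vg)\|^2/\min_j b_j$, and the identity $F_\varepsilon(\vg)=\calC_\varepsilon(\pi_{\varepsilon,\vg})+\langle\vg,\nabla F_\varepsilon(\vg)\rangle$ plus weak duality and the convex-SGD distance recursion for the cost gap. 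The only step to tighten is the $\varepsilon=0$ operator-norm bound: Gershgorin requires the \emph{row sums} of the interface-flux entries to be bounded by $\Cmax/\delta$, not merely each entry, which holds because the pairwise interfaces of a fixed (convex) Laguerre cell are disjoint pieces of its boundary --- exactly the observation the paper uses.
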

If the number of iterations is not fixed a priori, one can use the decaying schedule $\eta_k = \sqrt{\Delta/(L_\varepsilon k)}$, which replaces $1/K$ with $\log K / K$ in the bounds above. The above statement shows how the second marginal of $\pi_{\varepsilon,\bg}$ approximates the correct marginal $\nu$, and how its (entropic) transport cost approximates the optimal cost as $K$ grows. 
Interestingly, both entropic $\varepsilon > 0$ and unregularized $\varepsilon = 0$ cases (pending Assumption \ref{assump:reg}) are valid, supporting our use of $\varepsilon=0$ throughout the paper.

\section{Semidiscrete transport for flow matching}\label{sec:semidiscrete_FM}
We discuss the comparative advantages of SD-FM over OT-FM and I-FM. We follow with a generalized Tweedie identity for flows trained with regularized SD-FM that incorporates a correction term. That term vanishes when either $\varepsilon\rightarrow \infty$ (I-FM) but also, more surprisingly, when $\varepsilon\rightarrow 0$. 

\paragraph{SD-FM \textit{vs.} OT-FM \textit{vs.} I-FM.} 
Building on~\Cref{sec:SD}, the OT problem is solved between Gaussian noise $\mu_0$ and the data measure $\mu_1$ supported on $(\bx^{(1)}_1, \dots, \bx^{(N)}_1)$. This framework can also accommodate the conditional generation setting, in which observations are paired with a $p$-dimensional condition, see~\Cref{sec:bg}. We follow ~\citet[\S3]{zhang2025fitting} and use the dot-product cost $c(\bx,\by):=-\bx^T\by$ (augmenting it with temperature $\beta$ for conditioning \eqref{eq:cost_conditional}) and rescale $\varepsilon$ with the cost matrix standard deviation for a sampled reference batch. As illustrated in Figure~\ref{fig:fig1}, SD-FM works in two steps, (1) solve SD-OT between $\mu_0$ and $\mu_1$ \textit{before} flow training, storing $\bg^\star\in\sR^N$ solving \eqref{eq:semidual}; (2) in the FM train loop, pair a freshly sampled noise $\bx_0$ to data $\bx_1^{(k)}$, where
$k\sim s_{\varepsilon,\bm{g}^\star}(\bx_0)\in \Delta^N$.

\begin{wraptable}{r}{0.53\textwidth}
\begin{minipage}[b]{0.53\textwidth}
    \scriptsize
    \centering
    \vskip-.4cm
    \begin{tabular}{|c|c|c|c|}
    \hline
      Coupling              & Memory & Preprocessing            & FM Training                                 \\ \hline
    \textbf{I}-FM            & -      & -                        & $NE\Theta$                            \\ \hline
    \textbf{OT}-FM             & -      & -                        & $NE(\Theta + {\color{blue}O(dn/\varepsilon^{2}}))$ \\ \hline
    \textbf{OT}-FM* & {\color{blue}$2NE$}  & {\color{blue}$O(NE dn/\varepsilon^{2})$} & $NE\Theta$                            \\ \hline
    \textbf{SD}-FM              & ${\color{red}N}$    & ${\color{red}NdK}$                     & $NE(\Theta + {\color{red}dN})$ \\ \hline
    \end{tabular}
    \caption{\textsf{\textbf{Complexity of I/OT/SD-FM}} given $N$: dataset size, $d$: data dimension, $E$: FM training epochs, $n$: \citeauthor{Sinkhorn64} batch size, $\varepsilon$: entropic regularization, $\Theta$: cost of computing FM loss' gradient for a pair. $K$: SD SGD steps. OT-FM* is the cached variant of OT-FM. $\Theta$ dominates all costs, $N \geq n$, $\varepsilon\approx0$.}
    \label{tab:compute}
\end{minipage}
\end{wraptable}
As recalled in~\eqref{eq:pi_eps}, for $\varepsilon > 0$, this amounts to a \textit{categorical} sampling among $N$ points, while for $\varepsilon = 0$ this reduces to an assignment to the $k^\star$-th point in the dataset,  $k^\star=\arg\max_k g^\star_k + \langle\bx_0,\bx^{(k)}_1\rangle$ (or sample uniformly among ties if they arise). The latter operation is slightly faster than categorical sampling, as observed in Fig.~\ref{fig:img32_64_fid_vs_time}. %
The differences between these three approaches are highlighted in Algorithms \ref{alg:ifm}, \ref{alg:otfm-cached}, \ref{alg:otfm}, \ref{alg:sdfm}. In a nutshell, SD-FM uses a simple SGD precompute effort to fit $\bg$, with a minor lookup effort to pair each noise with a datapoint, rather than making potentially costly calls to \citeauthor{Sinkhorn64} in OT-FM. That lookup is a MIPS procedure~\citep{NIPS2014_c98e7c4b} when $\varepsilon=0$, \rev{solved exactly, incurring a $Nd$ cost (noise $\times$ data dot-products) and a max over $N$ values. This might be sped-up with the use of \textit{approximate} MIPS, left for future work.}

\paragraph{Score Estimation and Guidance}
A key property that connects flow matching and diffusion models is that learning the marginal velocity field under independent coupling and Gaussian noise is equivalent to estimating the marginal score along the probability path. Namely, let $\rho_t$ denote the law of $\vX_t = (1-t)\vX_0 + t\vX_1$, where $\vX_0 \sim \rho_0 = \mu$ and $\vX_1 \sim \rho_1 = \nu$ independently. %
Recall that,
\begin{equation}\label{eq:score_indep}
    \nabla \log \rho_t(\bx) = \E[\nabla_{\bx_t} \log \rho_{t|1}(\vX_t \,|\, \vX_1) \,|\, \vX_t = \bx] = \tfrac{t\vv_t(\bx) - \bx}{1-t}.
\end{equation}
Note that the above is Tweedie's formula since the flow matching velocity that transports $\rho_0$ to $\rho_1$ satisfies $(1-t)\vv_t(\bx) = \E[\vX_1 - \vX_t \,|\, \vX_t = \bx]$. While  \eqref{eq:score_indep} relates score and velocity, the second equality above relies on the independence of $\vX_0$ and $\vX_1$; it may no longer hold under general couplings. 
Surprisingly, we find that for SD-OT couplings \eqref{eq:score_indep} still holds approximately for $\varepsilon\approx 0$.

\begin{proposition}[Generalized Tweedie's Formula]\label{prop:score_formula}
    For any $\vg \in \sR^N$, let $\vX_0,\vX_1 \sim \pi_{\varepsilon,\vg}$, let $\rho_t$ denote the density of $\vX_t = (1-t)\vX_0 + t\vX_1$ for $t < 1$, and define $\vv_t(\bx) \coloneqq \E[\vX_1 -\vX_0 \,|\, \vX_t = \bx]$.
    Suppose $\rho_0 = \mathcal{N}(0,\mI_d)$. Then,
    $\nabla \log \rho_t(\bx) = \tfrac{t\vv_t(\bx) - \bx + (1-t)\vdelta_\varepsilon}{(1-t)^2}$ holds for any $\varepsilon \geq 0$, where  
    \[
        \vdelta_\varepsilon = \tfrac{1}{\varepsilon}\E\left[-\nabla_{\bx_0}c(\vX_0,\vX_1) + \E[\nabla_{\bx_0}c(\vX_0,\vX_1) \,|\, \vX_0] \,|\, \vX_t = \bx\right]
    \]
    for $\varepsilon > 0$, and $\vdelta_0 = \vzero$. 
\end{proposition}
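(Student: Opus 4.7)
My strategy is to write $\rho_t$ explicitly as a marginalization over the $N$ data atoms, differentiate via a change of variables, and repackage the resulting gradient terms as conditional expectations given $\vX_t = \bx$. The two key inputs are the score of the Gaussian source and the softmax gradient of $[s_{\varepsilon,\vg}(\cdot)]_j$.

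Since $\vX_1$ is supported on $\{\by_1,\dots,\by_N\}$ and the map $\vX_0\mapsto(1-t)\vX_0+t\vX_1$ is a bijection for each fixed $\vX_1$ when $t<1$, the change of variables $\bm{u}_j \coloneqq (\bx - t\by_j)/(1-t)$ gives
\[
    \rho_t(\bx) \;=\; \frac{1}{(1-t)^d}\sum_{j=1}^N \rho_0(\bm{u}_j)\,[s_{\varepsilon,\vg}(\bm{u}_j)]_j.
\]
Taking $\nabla_\bx\log$ and using $\nabla_\bx \bm{u}_j = (1-t)^{-1}\mI_d$ yields a weighted average with weights $w_j(\bx)\propto \rho_0(\bm{u}_j)[s_{\varepsilon,\vg}(\bm{u}_j)]_j$; a short computation identifies $w_j(\bx)$ with the posterior $\Prob(\vX_1=\by_j\mid \vX_t=\bx)$ under $\pi_{\varepsilon,\vg}$. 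Since $\vX_0$ is deterministic given $(\vX_t,\vX_1)$ and equals $\bm{u}_j$ when $\vX_1=\by_j$, sums weighted by $w_j$ convert directly to conditional expectations given $\vX_t = \bx$.

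The two gradient terms are then routine. The Gaussian score $\nabla_{\bm{u}}\log\rho_0(\bm{u})=-\bm{u}$ produces a $-\E[\vX_0\mid\vX_t=\bx]$ contribution; rewriting it with the linear constraint $(1-t)\E[\vX_0\mid\vX_t]+t\E[\vX_1\mid\vX_t]=\bx$ together with $\vv_t(\bx)=\E[\vX_1-\vX_0\mid\vX_t=\bx]$ produces the ``straight'' Tweedie piece involving $t\vv_t(\bx)-\bx$. For $\varepsilon>0$, the standard softmax identity
\[
    \nabla_{\bm{u}}\log[s_{\varepsilon,\vg}(\bm{u})]_j \;=\; \tfrac{1}{\varepsilon}\bigl(\E[\nabla_{\bx_0} c(\vX_0,\vX_1)\mid \vX_0 = \bm{u}] - \nabla_{\bm{u}} c(\bm{u},\by_j)\bigr),
\]
combined with $w_j$-averaging and the tower property, reproduces exactly $\vdelta_\varepsilon$ as defined in the statement. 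Assembling the pieces and matching denominators delivers the claimed identity.

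The main obstacle is the $\varepsilon=0$ case, where $[s_{0,\vg}(\bm{u})]_j = \Indfunc[\bm{u}\in L_j]$ is the indicator of a Laguerre cell and is discontinuous at cell boundaries, so the chain-rule step above is not directly applicable. My plan is to argue that for Lebesgue-a.e.\ $\bx\in\sR^d$, each preimage $\bm{u}_j$ lies in the interior of its Laguerre cell; the exceptional set of $\bx$ for which some $\bm{u}_j$ sits on a boundary is a finite union of affine images of $(d-1)$-dimensional polyhedral sets and hence has Lebesgue measure zero. On the good set the pointwise gradient of $[s_{0,\vg}(\bm{u})]_j$ at $\bm{u}=\bm{u}_j$ vanishes, so the softmax contribution disappears and the formula reduces to the form with $\vdelta_0=\vzero$, consistent with the definition in the statement.
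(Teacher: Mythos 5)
Your argument is correct, and it splits naturally into a part that mirrors the paper and a part that is genuinely different. For $\varepsilon>0$, your explicit mixture representation $\rho_t(\bx)=(1-t)^{-d}\sum_j \rho_0(\bm{u}_j)[s_{\varepsilon,\vg}(\bm{u}_j)]_j$ with $\bm{u}_j=(\bx-t\by_j)/(1-t)$, differentiated term by term and repackaged via the posterior weights $w_j(\bx)=\Prob(\vX_1=\by_j\mid\vX_t=\bx)$, is the same computation as the paper's: the paper invokes the denoising score-matching identity $\nabla\log\rho_t(\bx)=\E[\nabla\log\rho_{t|1}(\vX_t\mid\vX_1)\mid\vX_t=\bx]$ and then differentiates $\log\rho_{0|1}\propto\log\rho_0+\log[s_{\varepsilon,\vg}]_j$, which is exactly your Gaussian-score plus softmax-log-gradient decomposition, yielding the same $\vdelta_\varepsilon$. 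For $\varepsilon=0$ the routes diverge: the paper switches to a global change-of-variables argument through the deterministic map $\vT_t=(1-t)\operatorname{Id}+t\vT_1$, using Brenier's theorem to get invertibility and the a.e.\ identity $D\vT_t=(1-t)\mI_d$, whereas you stay inside the mixture computation and observe that $[s_{0,\vg}]_j$ is a Laguerre-cell indicator, locally constant away from the measure-zero union of (affinely mapped) cell boundaries, so the extra term simply drops and $\vdelta_0=\vzero$. Your route is more unified (one computation covers both regimes) and avoids Brenier's theorem; the paper's route gives as a byproduct the clean pushforward formula $\rho_t(\bx)=\rho_0(\vT_t^{-1}(\bx))/(1-t)^d$ and the deterministic relation $\vX_0=\vT_t^{-1}(\vX_t)$. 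Two small fixes: phrase your a.e.\ condition as ``each $\bm{u}_j$ avoids $\partial L_j$'' (it lies in the interior of $L_j$ \emph{or} of its complement; it cannot lie in the interior of its own cell for every $j$), and be aware that your assembly, like the paper's own proof, produces
\begin{equation*}
    \nabla\log\rho_t(\bx)=\frac{t\vv_t(\bx)-\bx+\vdelta_\varepsilon}{1-t},
\end{equation*}
which is consistent with the independent-coupling formula \eqref{eq:score_indep} but cannot be turned into the $(1-t)^2$-denominator display of the proposition by ``matching denominators''; that display appears to be a typo in the statement, so you should simply report the $1-t$ form rather than claim to recover the printed one.
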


For small $\varepsilon$ and ignoring other constants, one can show that $\norm{\vdelta_\varepsilon} \lesssim e^{-1/\varepsilon}/\varepsilon$, thus going to zero as $\varepsilon \rightarrow 0$. Intuitively, this happens because $\vX_1$ becomes increasingly determined by $\vX_0$ as $\varepsilon \to 0$. Note that $\varepsilon \to \infty$ recovers the independent coupling, and we obtain the same score as expected.

\paragraph{Correcting the Guidance.} To see a direct application of Proposition~\ref{prop:score_formula}, consider the case where we have two flow models $(\rho_{1,t})_{t=0}^1$ and $(\rho_{2,t})_{t=0}^1$ that are generated by velocity fields $\vv_{1,t}$ and $\vv_{2,t}$ respectively. For simplicity, we consider the case where both flows start from $\rho_0$.
Our goal is to sample from $\rho^{(\gamma)}_t = \rho^\gamma_{1,t}\rho^{1-\gamma}_{2,t} / {Z^{(\gamma)}_t}$ for some $\gamma \geq 0$, where $Z^{(\gamma)}_t$ is the normalizing constant. One example is classifier-free guidance \citep{ho2022classifier} where $\rho_1$ is a conditional model and $\rho_2$ is unconditional or, more generally, autoguidance \citep{karras2024guiding}, where $\rho_2$ is a weaker model compared to $\rho_1$.
While in practice we often directly combine the corresponding velocity fields, the samples are not guaranteed to be from $\rho^{(\gamma)}_t$, and we need additional ``correction'' as laid out below.
\begin{proposition}[Informal]\label{prop:cfg_informal}
    Let $(\vX^{(i)}_0)_{i=1}^{r} \stackrel{\mathrm{i.i.d.}}{\sim} \rho_0$, and define $(\vX^{(\gamma,i)}_t)_{t=0}^1$ by the ODE
    \[
    \dd \vX^{(\gamma,i)}_t = \{\gamma\vv_{1,t}(\vX^{(\gamma,i)}_t) + (1-\gamma)\vv_{2,t}(\vX^{(\gamma,i)}_t)\}\dd t, \quad \forall i \in [r].
    \]
    Furthermore, define the weights
    \[
    w^{(\gamma,i)} \coloneqq \gamma(\gamma - 1)\int \big\langle \vv_{1,t}(\vX^{(\gamma,i)}_t) - \vv_{2,t}(\vX^{(\gamma,i)}_t), \nabla \log \rho_{1,t}(\vX^{(\gamma,i)}_t) - \nabla \log \rho_{2,t}(\vX^{(\gamma,i)}_t)\big\rangle\dd t.
    \]
    Draw $I \sim \softmax(\{w^{(\gamma,i)}\}_{i=1}^r)$, and let $\vX_t = \vX^{(\gamma,I)}_t$. Then, $\operatorname{Law}(\vX_t) \to \rho^{(\gamma)}_t$ as $r \to \infty$.
\end{proposition}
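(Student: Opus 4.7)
The plan is to recognize the weights $w^{(\gamma,i)}$ as the log of a Radon--Nikodym derivative and reduce the statement to standard consistency of self-normalized importance sampling (SNIS). Let $\tilde\rho^{(\gamma)}_t$ denote the law of $\vX^{(\gamma,i)}_t$. By construction this is the marginal of the ODE with velocity $\vv^\gamma_t := \gamma\vv_{1,t}+(1-\gamma)\vv_{2,t}$ starting at $\rho_0$, so it satisfies the continuity equation with velocity $\vv^\gamma_t$ and matches $\rho^{(\gamma)}_0=\rho_0$ at $t=0$. Along any characteristic this yields $\frac{d}{dt}\log\tilde\rho^{(\gamma)}_t(\vX^{(\gamma,i)}_t)=-\nabla\cdot\vv^\gamma_t(\vX^{(\gamma,i)}_t)$.

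Next I would compute the analogous total derivative for the target $\rho^{(\gamma)}_t=\rho^\gamma_{1,t}\rho^{1-\gamma}_{2,t}/Z^{(\gamma)}_t$ along the \emph{same} characteristic. Using the continuity equations for $\rho_{1,t}$ and $\rho_{2,t}$ one has $\partial_t\log\rho^{(\gamma)}_t=-\gamma\,(\vv_{1,t}\!\cdot\!\nabla\log\rho_{1,t}+\nabla\!\cdot\!\vv_{1,t})-(1-\gamma)(\vv_{2,t}\!\cdot\!\nabla\log\rho_{2,t}+\nabla\!\cdot\!\vv_{2,t})+C_t$, with $C_t:=-\partial_t\log Z^{(\gamma)}_t$ a deterministic, $\bx$-independent constant. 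Together with $\nabla\log\rho^{(\gamma)}_t=\gamma\nabla\log\rho_{1,t}+(1-\gamma)\nabla\log\rho_{2,t}$, expanding $\vv^\gamma_t\!\cdot\!\nabla\log\rho^{(\gamma)}_t$ and summing yields the key cancellation $\frac{d}{dt}\log\rho^{(\gamma)}_t(\vX^{(\gamma,i)}_t)=-\gamma(1-\gamma)\langle\vv_{1,t}-\vv_{2,t},\,\nabla\log\rho_{1,t}-\nabla\log\rho_{2,t}\rangle-\nabla\!\cdot\!\vv^\gamma_t+C_t$, where the cross-terms $\gamma^2-\gamma$ and $(1-\gamma)^2-(1-\gamma)$ both collapse to $-\gamma(1-\gamma)$.

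Subtracting the two derivatives, the $\nabla\!\cdot\!\vv^\gamma_t$ terms cancel, and integrating from $0$ to $t$ (using $\log[\rho^{(\gamma)}_0/\tilde\rho^{(\gamma)}_0]=0$) gives $\log\frac{\rho^{(\gamma)}_t(\vX^{(\gamma,i)}_t)}{\tilde\rho^{(\gamma)}_t(\vX^{(\gamma,i)}_t)}=w^{(\gamma,i)}+\int_0^t C_s\,ds$. Because $\int_0^t C_s\,ds$ does not depend on $i$, $\softmax(\{w^{(\gamma,i)}\}_i)$ coincides with the SNIS weights having proposal $\tilde\rho^{(\gamma)}_t$ and target $\rho^{(\gamma)}_t$. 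Since the $\vX^{(\gamma,i)}_t$ are i.i.d.\ from the proposal, standard SNIS consistency (e.g.\ Chatterjee--Diaconis) yields $\operatorname{Law}(\vX^{(\gamma,I)}_t)\to\rho^{(\gamma)}_t$ weakly as $r\to\infty$, under the mild integrability assumption that the likelihood ratio has finite second moment under $\tilde\rho^{(\gamma)}_t$ (which is precisely the ``informal'' part of the statement).

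The main obstacle is the algebraic cancellation in step two: one has to carry the full expansion of $\vv^\gamma_t\!\cdot\!\nabla\log\rho^{(\gamma)}_t$ and verify that the divergence terms and the ``diagonal'' $\vv_{i,t}\!\cdot\!\nabla\log\rho_{i,t}$ terms combine exactly to leave the symmetric inner product $\langle\vv_{1,t}-\vv_{2,t},\nabla\log\rho_{1,t}-\nabla\log\rho_{2,t}\rangle$. Regularity issues (smoothness of $\vv_{i,t}$, non-vanishing of $\rho_{i,t}$, integrability of $C_t$, and existence of characteristics up to time $t$) are routine but are what make the statement genuinely ``informal''; they can be handled by standard assumptions on the flows.
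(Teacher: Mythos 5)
Your proof is correct, and it reaches the result by a genuinely different (Lagrangian) route than the paper's (Eulerian) one. The paper first shows that $\rho^{(\gamma)}_t=\rho_{1,t}^{\gamma}\rho_{2,t}^{1-\gamma}/Z^{(\gamma)}_t$ solves an advection--reaction PDE $\partial_t\rho_t=-\nabla\cdot(\rho_t\vv^{(\gamma)}_t)+\rho_t\bigl(\alpha_t-\smallint\alpha_t\,\dd\rho_t\bigr)$ with $\alpha_t=\gamma(\gamma-1)\langle\vv_{1,t}-\vv_{2,t},\nabla\log\rho_{1,t}-\nabla\log\rho_{2,t}\rangle$, and then invokes a separate lemma (proved by differentiating against test functions) stating that solutions of such PDEs admit the reweighted-pushforward representation $\smallint\phi\,\dd\rho_t=\smallint e^{w_t}\phi\circ\psi^{(\gamma)}_t\,\dd\rho_0\big/\smallint e^{w_t}\,\dd\rho_0$; the passage to finitely many particles is then the same informal law-of-large-numbers step you use. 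You instead track the material derivatives of $\log\rho^{(\gamma)}_t$ and of the proposal density along the guided characteristics, observe that the divergence terms cancel, and conclude that $w^{(\gamma,i)}$ equals $\log\bigl(\rho^{(\gamma)}_t/\tilde\rho^{(\gamma)}_t\bigr)$ at the particle up to an $i$-independent constant $\smallint_0^t C_s\,\dd s=-\log Z^{(\gamma)}_t$, so that the softmax weights are exactly self-normalized importance-sampling weights and standard SIR/SNIS consistency finishes the proof. The core algebra is the same in both arguments (your cross-term collapse to $\gamma(\gamma-1)\langle\vv_{1,t}-\vv_{2,t},\nabla\log\rho_{1,t}-\nabla\log\rho_{2,t}\rangle$ is precisely the paper's product-rule computation of $\partial_t\tilde\rho^{(\gamma)}_t$), but your construction is constructive rather than verify-the-answer, and it buys a cleaner probabilistic interpretation: the weights are explicitly a Radon--Nikodym derivative between target and proposal at time $t$, which makes the $r\to\infty$ step a citation rather than an ad hoc limit; the paper's PDE lemma, on the other hand, is more general (any reaction term $\alpha_t$) and avoids assuming invertibility and smoothness of the flow map or positivity of the densities along characteristics, which your argument needs implicitly. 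Two small notes: only a first moment of the likelihood ratio (i.e.\ absolute continuity, which gives $\E_{q}[p/q]=1$) is needed for consistency, not a second moment; and your use of $\tilde\rho^{(\gamma)}_t$ for the proposal clashes with the paper's use of the same symbol for the unnormalized target, so rename one of them if you merge the texts.
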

This result can be seen as the flow matching variant of correctors used for diffusions \citep{bradley2024classifier,skreta2025feynmankac}. Concretely, to sample from $\rho^\gamma$, one must generate a number of i.i.d.\ samples using the combined velocity field, and draw the outcome according to the weights.
However, calculating weights requires knowledge of the scores $\nabla \log \rho_t$. Thanks to \Cref{prop:score_formula}, either for the case of independent coupling ($\varepsilon = \infty$), or unregularized semidiscrete couplings ($\varepsilon = 0$), we can calculate them using the velocity model and obtain
\[
w^{(\gamma,i)} = \gamma(\gamma - 1)\int \frac{t}{1-t}\norm{\vv_{1,t}(\vX^{(\gamma,i)}_t) - \vv_{2,\gamma}(\vX^{(\gamma,i)}_t)}^2\dd t.
\]

\vspace{-1.5em} 
\section{Experiments}\label{sec:experiments}
\begin{figure}[t]
    \centering
    \includegraphics[width=\linewidth]{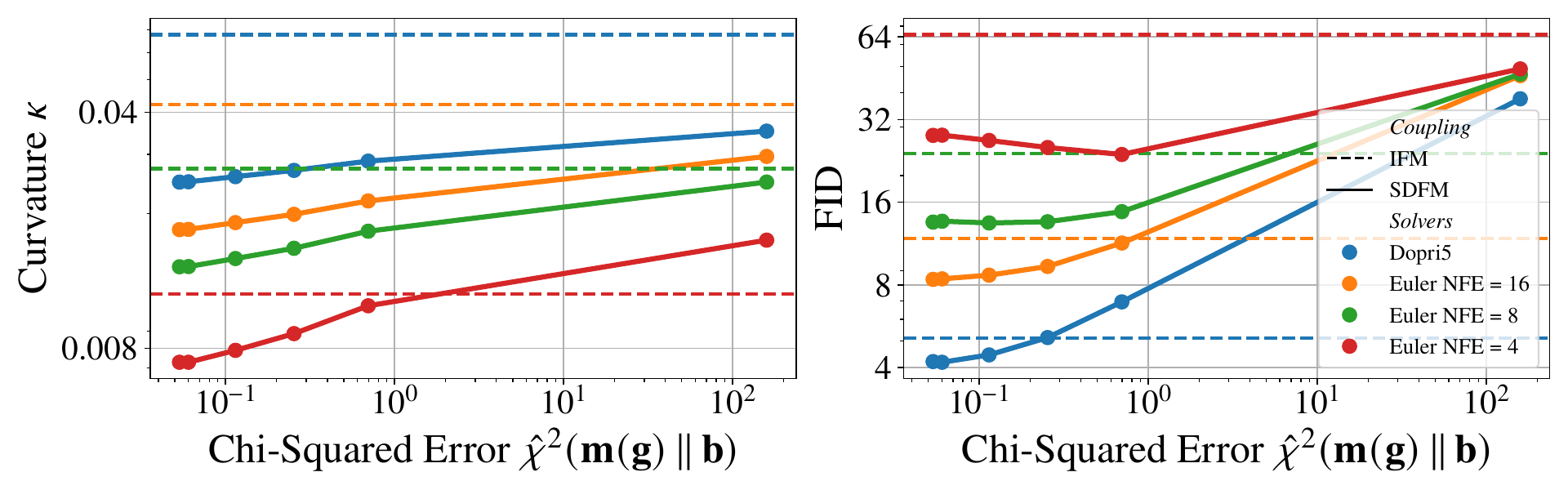}
    \caption{\textsf{\textbf{Better SD Potential Estimation = Better Curvature and FID}}: On ImgN32, convergence of dual potential $\bg$ \textit{vs.} SD-FM ($\varepsilon=0$) curvature and FID; I-FM is shown as lines. Note that curvatures of different solvers are computed on different trajectories, hence they are not comparable.} %
    \label{fig:fidcurv_chi}
\end{figure}
\vspace{-0.5em}
\subsection{ImageNet \& Petface: Unconditional and Class-conditional Generation.}
We consider generation in raw pixel space for the ImageNet (ImgN) train fold~\citep{deng2019imagenet}, augmented by horizontal flipping ($N$=2.56 M images) in (32,32) $d = \num{3072}$, and (64,64) $d = \num{12288}$ resolutions, and the PetFace dataset \citep{shinoda2024petface} of animal faces ($N$= 1.27 M images) in size (64,64). We measure generation quality using FID~\citep{heusel2017gans} and flow curvature \citep{lee2023minimizing}. I-FM, OT-FM and SD-FM training differ \textit{exclusively} in the way noise-data pairs are sampled, all other aspects of FM training stay identical, using ~\citeauthor{pooladian2023multisample}'s setup.

\vspace{-0.5em}
\paragraph{FID \textit{vs.} Potential Quality.} We investigate whether spending precompute effort to get better potential $\bg$ for SD-FM translates into better model performance on ImgN-32. Concretely, we record six different iterates of $\bg$ along the SGD iterations in \Cref{alg:sgd}, tracking their convergence criteria $\hChisq(\hatb(\bg)\|\bb)$. We train six SD-FM models using these six vectors to assign noise to data. We plot in \Cref{fig:fidcurv_chi} the FID and curvature across 4 ODE solvers. We observe that FID improves and curvature shrinks with lower $\hChisq$, confirming the main hypothesis of our paper that sharper SD-OT provides a more useful coupling for FM, with gains that seem to saturate when $\hChisq\approx0.05$. The runtime needed for SGD to converge is of the order of 12 hours on a node of 8$\times$H100 GPUs.

\vspace{-0.5em}
\paragraph{FID \textit{vs.} Pairing Time Cost.} 
We consider the FIDs on ImgN returned by I-FM, replicate the setting in~\citep{zhang2025fitting} for OT-FM using $\varepsilon\in\{0.01, 0.1\}$ and $n \in \{ 2^{15}, 2^{17},  2^{19}\}$, and finally SD-FM using a fixed $K$ budget for SGD with $\varepsilon\in\{0, 0.01, 0.1\}$. We show uncurated samples generated by I-FM and SD-FM in Figures \ref{fig:imagenet32_grid_uncond} and \ref{fig:imagenet64_grid_uncond}. We provide results (second line) in ImgN-32 where \textit{coupling} computations happen in the $k=500$ principal components of the data, as proposed by~\citep{zhang2025fitting}: this impacts \textit{all} complexities presented in Table \ref{tab:compute} by substituting $k$ for $d$. We stress that PCA is \emph{only} used to speed up noise/data pairings -- FM training is \textit{unchanged}. For ImgN-64, the costs of running OTFM for large $n$ is too large in full $d=\num{12288}$, we only report the PCA ($k=500$) setting. SD-FM improves FID compared to I-FM, specially for cheaper fixed-step Euler solvers, for a far smaller time overhead compared to OT-FM.
To put these overhead costs in perspective, we display in a red-dashed line the average time $\Theta$ (see Table~\ref{tab:compute}) needed to compute the gradient of the FM loss on one pair. In summary, for a pairing overhead cost negligible w.r.t. $\Theta$, SD-FM delivers dramatically better performance than I-FM, especially for small inference budgets.
\begin{figure}[ht]
    \centering
    \includegraphics[width=\textwidth]{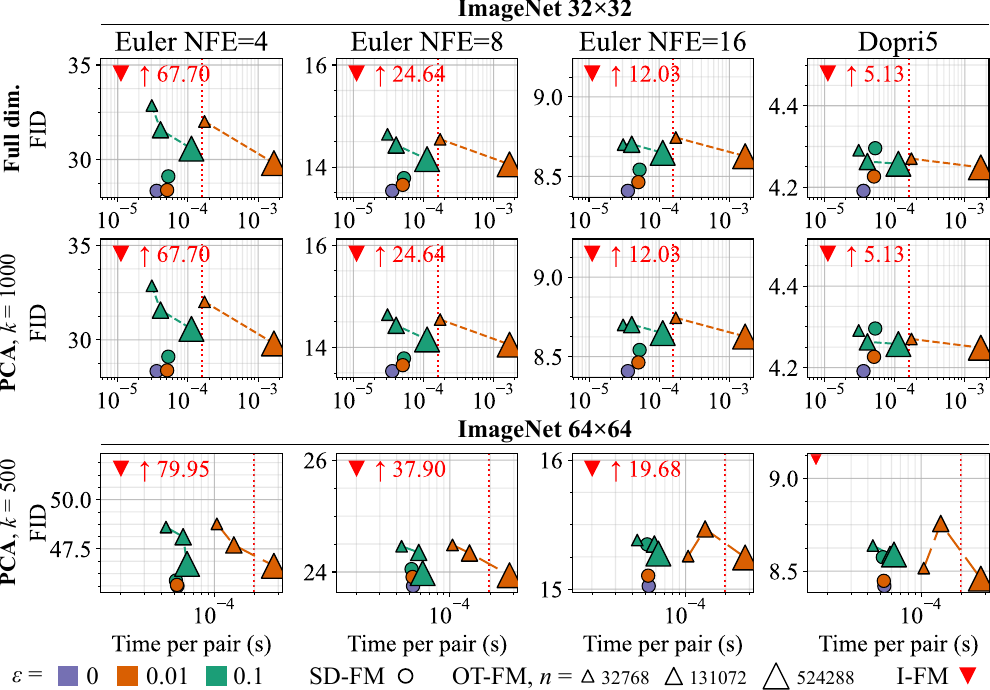}
    \caption{\textsf{\textbf{FID vs. time needed to form a pair}} when training \textbf{I}-FM, \textbf{OT}-FM (varying batch sizes $n$) and \textbf{SD}-FM. We use $\varepsilon = 0$ (SD only), $0.01, 0.1$. Couplings are computed using full $d$ or PCA space. Red-dashed lines show the per-sample time $\Theta$ needed to compute the gradient of the loss for one pair. SD-FM yields significant improvements for a negligible overhead.} %
    \vspace{-1em}
    \label{fig:img32_64_fid_vs_time}
\end{figure}

\begin{table}
    \centering
    \scriptsize
    \begin{tabular}{lllllllllll}
        \toprule
        & &  & \multicolumn{4}{c}{\textbf{FID} $\downarrow$ (Unconditional)} & \multicolumn{4}{c}{\textbf{FID} $\downarrow$ (Class-conditional)} \\ 
        \cmidrule{4-7} \cmidrule{8-11}
        & &  & Euler 4 & Euler 8 & Euler 16 & \multicolumn{1}{c|}{Dopri5} & Euler 4 & Euler 8 & Euler 16 & Dopri5 \\
        \midrule
        \multirow[c]{4}{*}{ImageNet-64} & I-FM  & -                                                & 79.95 & 37.90 & 19.68 & \multicolumn{1}{c|}{9.10} & 34.51 & 12.95 & 7.39 & 3.91 \\
        & \multirow[c]{3}{*}{\shortstack[l]{SD-FM\\ (PCA 500)}} & $\varepsilon = 0$   & \textbf{45.62} & \textbf{23.75} & \textbf{15.02} & \multicolumn{1}{c|}{\textbf{8.42}} & 26.04 & \textbf{12.06} & \textbf{7.32} & \textbf{3.63} \\
        & & $\varepsilon = 0.01$                                   & 45.68 & 23.91 & 15.10 & \multicolumn{1}{c|}{8.45} & 26.15 & 12.26 & 7.51 & \textbf{3.63} \\
        & & $\varepsilon = 0.1$                                    & 45.90 & 24.05 & 15.35 & \multicolumn{1}{c|}{8.58} & \textbf{25.92} & 12.16 & 7.42 & 3.64 \\
        \midrule
        \multirow[c]{4}{*}{PetFace} & I-FM & - & 56.53 & 26.85 & 13.38 & \multicolumn{1}{c|}{1.26} & 47.66 & 21.91 & 11.56 & 1.09 \\
        & \multirow[c]{3}{*}{\shortstack[l]{SD-FM\\ (full $d$=12k)}} & $\varepsilon = 0$ & \textbf{20.54} & \textbf{12.77} & \textbf{7.50} & \multicolumn{1}{c|}{1.26} & \textbf{19.10} & \textbf{12.10} & 7.13 & 1.05 \\
                                         & & $\varepsilon = 0.01$ & 20.58 & 12.91 & 7.67 & \multicolumn{1}{c|}{1.23} & 19.32 & \textbf{12.10} & \textbf{7.06} & \textbf{1.04} \\
                                         & & $\varepsilon = 0.1$ & 20.96 & 12.96 & 7.59 & \multicolumn{1}{c|}{\textbf{1.18}} & 19.46 & 12.20 & 7.18 & \textbf{1.04} \\
        \bottomrule
    \end{tabular}
    \caption{FID for \textbf{ImageNet} 64x64 and \textbf{PetFace} 64x64 unconditional/class-conditional generation.} 
    \label{tab:imagenet_petface_results}  
    \vspace{-2em}
\end{table}

\vspace{-0.5em}
\paragraph{Class-conditional ImgN and PetFace.} We train class-conditional flow models using SD-FM, following the principles presented in \Cref{sec:semidiscrete_FM}, taking the condition $\bz$ to be a one-hot vector. ImgN and PetFace consist respectively of 1000 and 13 classes. As in the unconditional case and Figure~\ref{fig:img32_64_fid_vs_time}, we show FID for ImgN as a function of time-per-pair for generation in Figures \ref{fig:img32_cond_fid_vs_time} and \ref{fig:img64_fid_vs_time}. 
Compared to the unconditional setting, we find that the gap between OT-FM and SD-FM is even more pronounced, owing to the slower convergence of Sinkhorn. 
At 64x64 resolution, the cost of matching points for OT-FM at large $n$ sizes becomes too great (over 10 days) and is therefore not shown. See results in \Cref{tab:imagenet_petface_results}. Generated samples are shown in Figures \ref{fig:imagenet32_grid_cond}, \ref{fig:imagenet64_grid_cond}, \ref{fig:petface64_grid_cond}. 
Class-conditional results in Table~\ref{tab:imagenet_petface_results} are consistent with our observations for ImgN, with SD-FM outperforming I-FM in all cases.

\vspace{-0.5em}
\subsection{Continuous-conditional generation: \rev{CelebA} super-resolution }
\begin{wrapfigure}{r}{0.54\textwidth}    
    \begin{minipage}[b]{0.54\textwidth}
        \vskip-.5cm
        \scriptsize
        \begin{tabular}{ccc|c|c|c|c|c}
        \toprule
        & & \multicolumn{3}{c}{Noise $\sigma=$ \textbf{0.1}} & \multicolumn{3}{c}{Noise $\sigma=$ \textbf{0.2}} \\\midrule
        & & I-FM & \multicolumn{2}{c|}{SD-FM} & I-FM & \multicolumn{2}{c}{SD-FM}\\
        & $\varepsilon$ &  & $0$ & \multicolumn{1}{|c|}{$0.01$} & & $0$ & $0.01$ \\
        \midrule
        \multirow{2}{*}{4x SR} & PSNR ($\uparrow$) & 21.17 & 21.38 & \multicolumn{1}{c|}{\textbf{21.41}} & 20.04 & \textbf{20.44} & 20.42 \\
        & SSIM ($\uparrow$) & 0.67 & \textbf{0.69} & \multicolumn{1}{c|}{\textbf{0.69}} & 0.61 & \textbf{0.63} & \textbf{0.63} \\
        \midrule %
        \multirow{2}{*}{8x SR} & PSNR ($\uparrow$) & 17.52 & 17.87 & \multicolumn{1}{c|}{\textbf{17.94}} & 16.50 & \textbf{17.35} & 17.34 \\
        & SSIM ($\uparrow$) & 0.50 & 0.51 & \multicolumn{1}{c|}{\textbf{0.52}} & 0.44 & \textbf{0.48} & \textbf{0.48} \\
        \bottomrule
        \end{tabular}
        \captionof{table}{\textsf{\textbf{CelebA}} super-resolution results.}
        \label{tab:superres}
    \end{minipage}
    \vspace{-1.5em}
\end{wrapfigure}
As an example application to \emph{continuous}-valued conditions, we apply SD-FM to train flow models for conditional sampling in image super-resolution (SR) problems. We use the CelebA dataset \citep{liu2015faceattributes} rescaled to 64x64 resolution, containing 162k train images. We downsize them to 16x16 (4x SR) or 8x8 (8x SR) and apply Gaussian noise with standard deviation $\sigma \in \{ 0.1, 0.2 \}$ to form a condition $\bz$ and run SD-FM with  continuous data and conditions as described in Section \ref{sec:semidiscrete_FM}. 
We compare to I-FM trained by sampling $\bx_0 \sim \mathcal{N}(0, \Id_d) \otimes \tilde{\mu}_{1, \bZ}$ and from the augmented data distribution of images $\bx$ paired with their corrupted version $\bz$. %
We sample from the trained model, conditioned on corrupted images from the CelebA validation set to obtain a high-resolution reconstruction.

\subsection{Guidance\rev{: ImgN32}}
\begin{wrapfigure}{r}{0.34\textwidth}
    \vskip-1.6cm
    \centering
    \includegraphics[width=\linewidth]{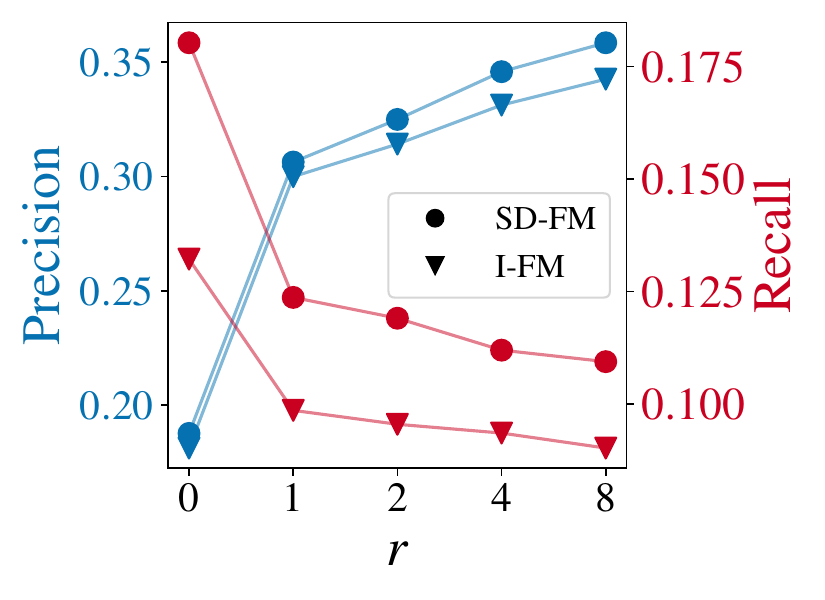}
    \vskip-.3cm
    \caption{\textsf{\textbf{Precision \& recall}} on ImgN-32 \textit{vs.}\ \# of guidance samples $r$. $\gamma=2$, NFE $= 4$, $\varepsilon=0$.}
    \label{fig:precision_recall}
    \vskip-.7cm
\end{wrapfigure}
Since the ground-truth image $\bx$ is known, we rely on the peak signal-to-noise ratio (PSNR) and structural similarity index measure (SSIM, \citet{wang2004image}) to measure reconstruction accuracy. Results are summarized in Table \ref{tab:superres} and we show sampled reconstructions from the validation set in Figure \ref{fig:celeba_samples}. 

For ImgN-32, \Cref{fig:precision_recall} shows the effect of $r$, where $r=1$ is the CFG baseline without any correction and $r=0$ is generating from the conditional model without guidance. Increasing $r$ improves generation quality (higher precision) but degrades diversity (lower recall). We use the precision and recall calculation given by \citet{kynkaanniemi2019improved}.

\subsection{One-Step Generation with Mean-Flow\rev{: Latent Space ImgN256}}
\begin{wrapfigure}{r}{0.45\textwidth}    
\centering
\begin{minipage}[b]{0.45\textwidth}
    \centering
    \includegraphics[width=\textwidth]{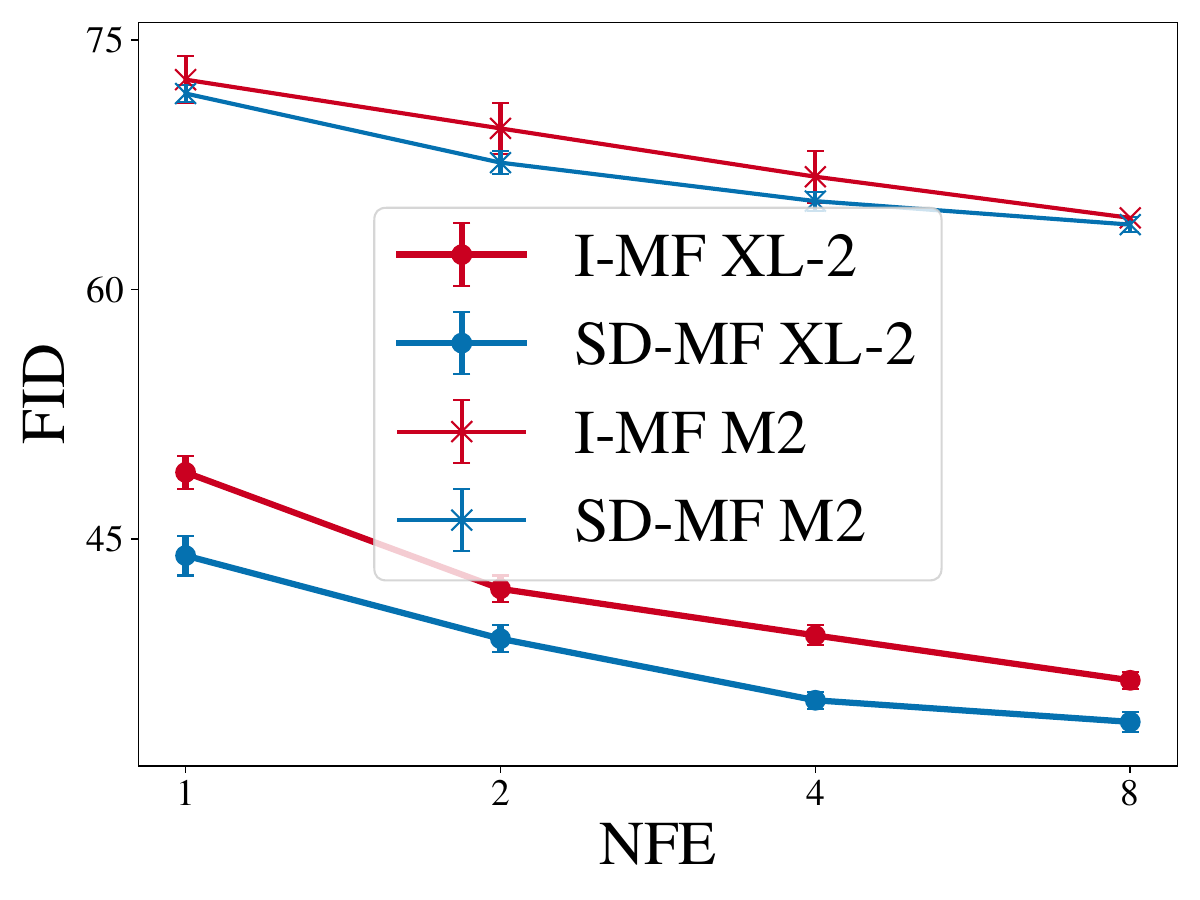}
    \vspace{-2.5em}
    \caption{FID on ImgN-256 vs. NFE for mean-flows.}
    \label{fig:meanflow}
    \vspace{-1em}
\end{minipage}
\end{wrapfigure}
\vspace{-0.2em}
Recently, consistency models (models that incorporate trajectory consistency regularizers in their training objective) have become a popular choice for few-step generation with diffusions or flows \citep{song2023consistency,song2024improved}. 
We demonstrate that the benefits of SD-OT couplings go beyond FM by testing them on the MeanFlow (MF) model of \citet{geng2025mean}, noting that different consistency formulations can be unified and treated similarly \citep{boffi2025flow,sabour2025align}. \Cref{fig:meanflow} compares SD-OT and independent couplings for training MF; we use the same setup as \citet{geng2025mean} to train an unconditional DiT-XL/2 model for 300 epochs on \rev{ImgN-256x256 in latent space and M/2 for 40 epochs}. \rev{Note that SDOT in latent space may be easier than pixel space owing to the Gaussianity promoted in auto-encoder latents, see Fig.\ref{fig:reb1}.}

\vspace{-0.5em}
\subsection*{Limitations and Discussion}
Some aspects of the SD-FM pipeline presented here leave room for improvement.
Figure~\ref{fig:chisq_vs_iter} shows that computing the SD-OT potential to a reasonable degree of precision impacts the performance of SD-FM.
That SD-OT preprocessing cost occurs only once prior to FM training; it only depends on the noise distribution and data, and is typically much cheaper than the FM training cost itself.
However, if SD-FM were to be applied to datasets of billions $N$ of points, this could become a challenge (though still less so than FM training).
Because SD-OT is a concave maximization problem, one could leverage recipes such as batching, momentum or $\varepsilon$-tempering.
For $\varepsilon>0$, categorical sampling over a $N$-sized softmax vector would be costly, which is why we highlighted the $\varepsilon=0$ case that can leverage MIPS and fast retrieval tools (both to train $\bg$ and to pair fresh noise). \rev{For these reasons, and due to only very minor differences in performace, we would recommend that users use SD-FM with $\varepsilon=0$.}
Finally, having in mind dataloader efficiency, one may want to flip the tables in our sampling approach: Start from the $j$-th data point in memory, and match it with a Gaussian noise restricted~\citep{wu2024a} to be in its Laguerre cell $\mathcal{L}_j=\{\bx : \bx^T(\bx^{(j)}_1 - \bx_1^{(k)})+g_j - g_k\geq 0, \forall k\ne i\}$.
While we have shown successful applications of the SD coupling preprocessing approach to extended FM methods such as mean-flow or guidance, SD couplings could be of course used \textit{within} even more complex approaches building upon FM, such as Reflow. Our focus was on ablating the impact of the SD coupling over the independent coupling in the most widely used FM setup. Assessing how SD couplings interact with more advanced or orthogonal methods is left for future work.

\vspace{-0.5em}
\subsection*{Conclusion}
OT-FM can been used to guide the training of flow models by optimally pairing batches of $n$ noise $n$ data points, with results only truly paying off for very large $n$. We proposed the more cost-efficient semidiscrete OT paradigm, building on a two-step approach (fit potential vector as a one-off cost, assign noise when training through a MIPS) that is not bottlenecked by a batch size $n$. Our experiments in various settings, both in unconditioned and conditioned scenarios, show much improved metrics over I-FM and orders of magnitude cheaper compute compared to OT-FM.

\setlength{\bibsep}{5pt}
\bibliography{biblio}
\bibliographystyle{plainnat}

\appendix
\newpage

\section{Omitted Proofs}
In this section, we provide missing derivations and proofs from the main text.
\subsection{Properties of Semidiscrete Couplings}\label{subsec:proofs}
We begin by stating a quick proof of \Cref{eq:chisq}
\begin{proof}[Proof of \Cref{eq:chisq}]
    By definition
    \begin{align*}
        \Chisq(\hatb(\vg) \,\Vert\, \bb) &= \sum_{j=1}^N \left(\frac{\hatbscalar(\vg)_i}{b_i}\right)^2b_i - 1.\\
        &= \sum_{j=1}^N \frac{1}{b_j}\left(\int s_{\varepsilon,\vg}(\bx)_j\dd\mu(\bx)\right)^2 - 1\\
        &= \sum_{j=1}^N\int s_{\varepsilon,\vg}(\bx)_js_{\varepsilon,\vg}(\bx')_j\dd\mu(\bx)\dd\mu(\bx') - 1.
    \end{align*}
\end{proof}

Next, we present the proof of the score formula for our semidiscrete couplings.
\begin{proof}[Proof of \Cref{prop:score_formula}]
    We first consider the case $\varepsilon > 0$. In this case, the conditional law $\vX_0 \,|\, \vX_1 = \bx_1$ admits a density.
    We use the general denoising score-matching identity
    \begin{equation}\label{eq:denoising_sm}
        \nabla \log \rho_t(\bx) = \E[\nabla_{\bx_t} \log \rho_{t | 1}(\vX_t \,|\, \vX_1) \,|\, \vX_t = \bx],
    \end{equation}
    which does not depend on the $\vX_0,\vX_1$ coupling. For the sake of completeness, we present the derivation of the above identity:
    \begin{align*}
        \nabla \log \rho_t(\bx) = \frac{\nabla \rho_t(\bx)}{\rho_t(\bx)} &= \frac{1}{\rho_t(\bx)}\int \nabla \rho_{t|1}(\bx\,|\,\bx_1) \rho_1(\dd\bx_1)\\
        &= \int \nabla \log \rho_{t|1}(\bx\,|\,\bx_1)\frac{\rho_{t|1}(\bx\,|\,\bx_1)\rho_1(\dd\bx_1)}{\rho_t(\bx)}\\
        &= \int \nabla \log \rho_{t|1}(\bx \,|\, \bx_1) \rho_{1|t}(\dd\bx_1 \,|\, \bx).
    \end{align*}
    Now, using the definition $\vX_t \coloneqq (1-t)\vX_0 + t\vX_1$ and the change of variables formula, we have
    \[
        \rho_{t|1}(\bx \,|\, \bx_1) = \frac{1}{(1-t)^d}\rho_{0|1}\left(\frac{\bx - t\bx_1}{1-t} \,|\, \bx_1\right).
    \]
    As a result,
    \[
        \nabla \log \rho_{t|1}(\bx \,|\, \bx_1) = \frac{1}{1-t}\nabla \log \rho_{0|1}\left(\frac{\bx - t\bx_1}{1-t} \,|\, \bx_1\right).
    \]
    Consequently,
    \[x
        \nabla \log \rho_t(\bx) = \frac{1}{1-t}\E\left[\nabla_{\bx_0} \log \rho_{0|1}(\vX_0\,|\, \vX_1) \,|\, \vX_t = \bx\right].
    \]
    Let us define $\nu(\bX_1) = b_I$ and $g(\bX_1) = g_I$ where $I$ is the index of the random vector $\bX_1$, i.e. $\bX_1 = \bx^{(I)}_1$.
    Recall that for $\varepsilon > 0$, the coupling is given by
    \[
        \rho_{0,1}(\vX_0,\vX_1) = \frac{e^{\frac{g(\vX_1) - c(\vX_0,\vX_1)}{\varepsilon}}}{\sum_{\bx_1}e^{\frac{g(\bx_1) - c(\vX_0,\bx_1)}{\varepsilon}}\nu(\bx_1)}\mu(\vX_0)\nu(\vX_1).
    \]
    Thus, for $\mu = \mathcal{N}(0,\mI_d)$, we have
    \begin{align*}
        \nabla_{\bx_0}\log \rho_{0|1}(\vX_0 \,|\, \vX_1) &= -\vX_0 - \frac{1}{\varepsilon}\nabla_{\bx_0}c(\vX_0,\vX_1) + \frac{1}{\varepsilon}\cdot \sum_{\bx'_1}\frac{\nabla_{\bx_0}c(\vX_0,\bx'_1)e^{\frac{g(\bx'_1) - c(\vX_0,\bx'_1)}{\varepsilon}}\nu(\bx'_1)}{\sum_{\bx_1}e^{\frac{g(\bx_1) - c(\vX_0,\bx_1)}{\varepsilon}}\nu(\bx_1)}\\
        &= -\vX_0 - \frac{1}{\varepsilon}\nabla_{\bx_0}c(\vX_0,\vX_1) + \frac{1}{\varepsilon}\E\left[\nabla_{\bx_0}c(\vX_0,\vX_1) \,|\, \vX_0\right].
    \end{align*}
    Plugging this back into \eqref{eq:denoising_sm}, we obtain
    \begin{align*}
        \nabla \log \rho_t(\bx) &= -\frac{\E[\vX_0 \,|\, \vX_t = \bx]}{1-t} + \frac{1}{\varepsilon(1-t)}\E\left[-\nabla_{\bx_0}c(\vX_0,\vX_1) + \E[\nabla_{\bx_0}c(\vX_0,\vX_1) \,|\, \vX_0] \,|\, \vX_t = \bx\right]\\
        &= \frac{-\E[\vX_0 \,|\, \vX_t = \bx] + \vdelta_\varepsilon}{1-t}.
    \end{align*}
    Moreover, since the optimal velocity field in flow matching is given by the expectation of the conditional velocity field \citep{lipman2023flow}, we have
    \[
        \vv_t(\bx) = \E\left[\frac{\vX_t - \vX_0}{t} \,|\, \vX_t = \bx\right] = \frac{\bx - \E[\vX_0 \,|\, \vX_t = \bx]}{t}.
    \]
    Combining the two identities above finishes the proof for the $\varepsilon > 0$ case.

    For the $\varepsilon = 0$ case, we define $\vT_1(\bx_0) = \argmax_{\bx_1 \in \{\bx^{(i)}_1\}_{i=1}^N}  g(\bx_1) - c(\bx_0,\bx_1)$, where we can arbitrarily break ties, by e.g. picking the maximizer with the smallest index. Then, with probability 1 over the draw of $\vX_0$, the samples $\vX_0,\vX_1 \sim \rho_{0,1}$ are given by $\vX_1 = \vT_1(\vX_0)$.
    As a result, we can write
    \[
        \vX_t = \vT_t(\vX_0) \coloneqq (1-t)\vX_0 + t \vT_1(\vX_0).
    \]
    Recall that by Brenier's theorem, $\vT_1$ can be written as a gradient of a convex function, therefore $\vT_t$ is strictly monotone and we can write $\vX_0 = \vT_t^{-1}(\vX_t)$ for all $t < 1$. Further, we observe that $\vT_1(\vX_0)$ is a piecewise constant function, thus its Jacobian is $\vzero$ almost everywhere, i.e.\ we have $D \vT_t(\vX_0) = (1-t)\mI$ almost surely, where $D$ denotes Jacobian.

    For any $\bx_t$, define $\bx_0 = \vT_t^{-1}(\bx_t)$. We can now use the change of variables formula
    \[
        \rho_t(\bx_t) = \frac{\rho_0(\bx_0)}{\abs{\det D\vT_t(\bx_0)}} = \frac{\rho_0(\vT_t^{-1}(\bx_t))}{(1-t)^d}.
    \]
    As a result,
    \begin{align*}
        \nabla \log \rho_t(\bx_t) = D \vT_t^{-1}(\bx_t) \nabla \log \rho_0(\vT_t^{-1}(\bx_t)) &= (D \vT_t(\bx_0))^{-1}\nabla \log \rho_0(\bx_0)\\
        &= \frac{1}{1-t}\nabla \log \rho_0(\bx_0)\\
        &= -\frac{\bx_0}{1-t},
    \end{align*}
    where we plugged in $\rho_0 = \mu = \mathcal{N}(0,\mI_d)$ in the last step. Furthermore, since $\vX_t$ is $\vX_0$-measurable, we have
    \[
    \vv_t(\bx_t) = \E\left[\frac{\vX_t - \vX_0}{t} \,|\, \vX_t = \bx_t\right]  = \frac{\bx_t - \vT_t^{-1}(\bx_t)}{t} = \frac{\bx_t - \bx_0}{t}.
    \]
    Combining the above equations completes the proof.
\end{proof}

The following is the formal statement of \Cref{prop:cfg_informal}.

\begin{proposition}\label{prop:cfg_formal}
    Suppose $\vv_{1,t}$ and $\vv_{2,t}$ generate the probability paths $\rho_{1,t}$ and $\rho_{2,t}$ respectively. Let
    \[
    \vv^{(\gamma)}_{1,t}(\bx) \coloneqq \gamma \vv_{1,t}(\bx) + (1-\gamma)\vv_{2,t}(\bx),
    \]
    and define $\psi^{(\gamma)}_t(\bx)$ as the solution of the ODE $\frac{\dd \psi^{(\gamma)}_t(\bx)}{\dd t} = \vv^{(\gamma)}_t(\bx)$ with the initial condition $\psi^{(\gamma)}_0(\bx) = \bx$, i.e. $\psi^{(\gamma)}_t(\bx)$ is the position of a particle initialized from $\bx_0 = \bx$, moving along the velocity field $\vv^{(\gamma)}_t$. Define the weight
    \[
    w_t(\bx) \coloneqq \gamma(\gamma-1)\int_{s=0}^t \big(\vv_{1,s}(\psi^{(\gamma)}_s(\bx)) - \vv_{2,t}(\psi^{(\gamma)}_s(\bx))\big)^\top \big(\nabla \log \rho_{1,s}(\psi^{(\gamma)}_s(\bx)) - \nabla \log \rho_{2,s}(\psi^{(\gamma)}_s(\bx))\big)\dd s.
    \]
    Then, for any test function $\phi : \sR^d \to \sR$,
    \[
    \int \phi(\bx)\dd \rho^{(\gamma)}_t(\bx) = \frac{\int e^{w_t(\bx)}\phi(\psi^{(\gamma)}_t(\bx))\dd \rho_0(\bx)}{\int e^{w_t(\bx)}\dd \rho_0(\bx)}.
    \]
\end{proposition}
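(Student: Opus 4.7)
}
The plan is to recognize the weighted pushforward on the right-hand side as a Feynman–Kac representation of the geometric mixture on the left, by showing both densities solve the same continuity equation with the same source term and the same initial condition. Let $h_t(\bx) \coloneqq \rho_{1,t}(\bx)^\gamma \rho_{2,t}(\bx)^{1-\gamma}$ denote the unnormalized geometric mixture (so $\rho^{(\gamma)}_t = h_t / \int h_t$), and let $\xi_t(\bx) \coloneqq \gamma(\gamma-1)(\vv_{1,t}(\bx)-\vv_{2,t}(\bx))^T(\nabla\log\rho_{1,t}(\bx)-\nabla\log\rho_{2,t}(\bx))$ denote the integrand appearing in $w_t$.

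First, I would derive the PDE satisfied by $h_t$. Taking $\partial_t \log h_t = \gamma\partial_t\log\rho_{1,t} + (1-\gamma)\partial_t\log\rho_{2,t}$ and using the continuity equations $\partial_t\rho_{i,t} = -\nabla\cdot(\rho_{i,t}\vv_{i,t})$ gives $\partial_t\log\rho_{i,t} = -\nabla\cdot\vv_{i,t} - \vv_{i,t}^T\nabla\log\rho_{i,t}$. Combining with the expansion $\nabla\cdot(h_t\vv^{(\gamma)}_t)/h_t = \nabla\cdot\vv^{(\gamma)}_t + (\vv^{(\gamma)}_t)^T\nabla\log h_t$, the divergence terms cancel (as $\vv^{(\gamma)}_t = \gamma\vv_{1,t}+(1-\gamma)\vv_{2,t}$ and $\nabla\log h_t = \gamma\nabla\log\rho_{1,t}+(1-\gamma)\nabla\log\rho_{2,t}$), and a direct algebraic simplification of the four-term cross product collapses the coefficients via the identities $-\gamma+\gamma^2 = -\gamma(1-\gamma) = -(1-\gamma)+(1-\gamma)^2$ to yield
\[
\partial_t h_t + \nabla\cdot(h_t\,\vv^{(\gamma)}_t) \;=\; h_t\,\xi_t.
\]

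Next, I would write down the Feynman–Kac representation of this equation. The flow map $\psi^{(\gamma)}_t$ transports mass along $\vv^{(\gamma)}_t$, so if $g_t$ denotes the unnormalized density of the weighted pushforward $g_t(\bm{y}) = \E[e^{w_t(\vX_0)}\delta(\psi^{(\gamma)}_t(\vX_0)-\bm{y})]$ with $\vX_0\sim\rho_0$, then a standard computation via the Liouville equation (using that $w_t$ accumulates $\xi_s$ along trajectories, so $\tfrac{d}{dt}w_t(\vX_0) = \xi_t(\psi^{(\gamma)}_t(\vX_0))$) shows that $g_t$ also satisfies $\partial_t g_t + \nabla\cdot(g_t\vv^{(\gamma)}_t) = g_t\xi_t$, with initial condition $g_0 = \rho_0 = \rho_{1,0} = \rho_{2,0} = h_0$. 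By uniqueness of solutions to this linear transport-with-source PDE (assuming standard regularity on the velocities and scores), $g_t \equiv h_t$. Normalizing both sides by $\int g_t = \int h_t$ and rewriting the left integral as an expectation over $\vX_0\sim\rho_0$ against the test function $\phi$ yields the claimed formula.

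The main obstacle is the cancellation in the first step: verifying that the PDE for $h_t$ carries exactly the source term $h_t\xi_t$ with coefficient $\gamma(\gamma-1)$ requires carefully tracking the four cross-terms $v_{i,t}^T\nabla\log\rho_{j,t}$ and using $-\gamma(1-\gamma) = \gamma(\gamma-1)$; once this is in place, the Feynman–Kac identification and the normalization step are routine. A minor technical point is justifying uniqueness for the PDE and the absolute continuity of the pushforward $\psi^{(\gamma)}_t{}_\#\rho_0$, which can be handled under mild smoothness of $\vv_{i,t}$ and $\nabla\log\rho_{i,t}$; the informal statement \Cref{prop:cfg_informal} then follows by Monte-Carlo approximation of the numerator and denominator and a law-of-large-numbers / self-normalized importance sampling argument as $r\to\infty$.
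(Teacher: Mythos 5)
Your proposal is correct and takes essentially the same route as the paper: the identical computation produces the source term $\gamma(\gamma-1)\langle\vv_{1,t}-\vv_{2,t},\,\nabla\log\rho_{1,t}-\nabla\log\rho_{2,t}\rangle$ for the geometric mixture, and the weighted pushforward along $\vv^{(\gamma)}_t$ is then identified as the solution of the resulting advection--reaction equation. The only cosmetic difference is that you work with the unnormalized density and invoke uniqueness for the linear transport-with-source PDE before normalizing, whereas the paper normalizes first and verifies weakly that the Feynman--Kac formula solves the nonlinear PDE containing the mean-field term $-\rho_t\int\alpha_t\,\dd\rho_t$ (with the same uniqueness assumption left implicit there).
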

Before presenting the proof of \Cref{prop:cfg_formal}, we comment on its informal interpretation, \Cref{prop:cfg_informal}. Let $\rho^{(r)}_0$ denote the stochastic empirical law of $k$ i.i.d.\ random vectors $(\bX^{(i)}_0)_{i=1}^r$ from $\rho_0$.
Then, if $\rho_0$ is sufficiently concentrated, we have almost surely for all test functions
\begin{equation}\label{eq:weak_conv}
    \frac{\int e^{w_t(\bx)}\phi(\psi^{(\gamma)}_t(\bx))\dd\rho^{(r)}_0(\bx)}{\int e^{w_t(\bx)}\dd\rho^{(r)}_0(\bx)} \to \frac{\int e^{w_t(\bx)}\phi(\psi^{(\gamma)}_t(\bx))\dd\rho_0(\bx)}{\int e^{w_t(\bx)}\dd\rho_0(\bx)}
\end{equation}
as $r \to \infty$. Define $\rho^{(r,\gamma)}_t$ as the measure that satisfies
\[
\int\phi(\bx)\dd\rho^{(r,\gamma)}_t(\bx) = \frac{\int e^{w_t(\bx)}\phi(\psi^{(\gamma)}_t(\bx))\dd\rho^{(r)}_0(\bx)}{\int e^{w_t(\bx)}\dd\rho^{(r)}_0(\bx)},
\]
for all test functions $\phi$. Then, sampling from $\rho^{(r,\gamma)}_t$ is exactly equivalent to drawing from $(\psi^{(\gamma)}_t(\bX^{(i)}_0))_{i=1}^r$ according to (unnormalized) weights $e^{w_t(\bX^{(i)}_0)}$.
Finally, as $r \to \infty$, we have
\[
\int\phi(\bx)\dd\rho^{(r,\gamma)}_t(\bx) = \frac{\int e^{w_t(\bx)}\phi(\psi^{(\gamma)}_t(\bx))\dd\rho^{(r)}_0(\bx)}{\int e^{w_t(\bx)}\dd\rho^{(r)}_0(\bx)} \to \frac{\int e^{w_t(\bx)}\phi(\psi^{(\gamma)}_t(\bx))\dd\rho_0(\bx)}{\int e^{w_t(\bx)}\dd\rho_0(\bx)} = \int \phi(\bx)\dd \rho^{(\gamma)}_t(\bx)
\]
as $r \to \infty$. Therefore, $\rho^{(r,\gamma)}_t$ converges almost surely, weakly to $\rho^{(\gamma)}_t$. This is the sense in which the convergence of \Cref{prop:cfg_informal} holds. We keep the argument informal since we do not study the exact concentration properties required for \eqref{eq:weak_conv} to hold.

To prove \Cref{prop:cfg_formal}, we recall this result about the solution of an advection-reaction PDE.
\begin{lemma}
    Suppose $(\rho_t)_{t \geq 0}$ satisfies
    \begin{equation}\label{eq:advect_react_pde}
        \partial_t\rho_t = -\nabla \cdot (\rho_t \vv_t) + \rho_t(\alpha_t - \int \alpha_t \dd\rho_t),
    \end{equation}
    where $\vv_t : \sR^d \to \sR^d$ and $\alpha_t : \sR^d \to \sR$. Let $\psi_t(\bx)$ be the solution to the ODE $\frac{\dd \psi_t(\bx)}{\dd t} = \vv_t(\psi_t(\bx))$ with initial condition $\psi_0(\bx) = \bx$. Further define
    \begin{equation}
        w_t(\bx) \coloneqq \int_{s=0}^t \alpha_s(\psi_s(\bx))\dd s.
    \end{equation}
    Then, for any $t \geq 0$ and any test function $\phi : \sR^d \to \sR$,
    \begin{equation}\label{eq:pde_sol}
    \int \phi(\bx) \dd\rho_t(\bx) = \frac{\int e^{w_t(\bx)}\phi(\psi_t(\bx))\dd\rho_0(\bx)}{\int e^{w_t}(\bx)\dd\rho_0(\bx)}.
    \end{equation}
\end{lemma}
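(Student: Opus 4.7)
The plan is to introduce an auxiliary \emph{linear} advection--reaction PDE, solve it by the method of characteristics (i.e., a Feynman--Kac-type representation), and then recover the stated formula for $\rho_t$ by normalization. Specifically, let $\tilde\rho_t$ denote the solution of the linear PDE $\partial_t\tilde\rho_t = -\nabla\cdot(\tilde\rho_t\vv_t) + \alpha_t\tilde\rho_t$ with the same initial condition $\tilde\rho_0 = \rho_0$. The strategy is to show (i) that $\tilde\rho_t$ admits the pushforward-with-weight representation $\int\phi\,\dd\tilde\rho_t = \int e^{w_t(\bx)}\phi(\psi_t(\bx))\,\dd\rho_0(\bx)$, and (ii) that the normalized measure $\rho_t \coloneqq \tilde\rho_t/Z_t$, with $Z_t \coloneqq \int \dd\tilde\rho_t$, satisfies the nonlinear PDE \eqref{eq:advect_react_pde}. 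Dividing the representation of $\tilde\rho_t$ by $Z_t$ then yields \eqref{eq:pde_sol}.

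For (i), differentiating $e^{w_t(\bx)}\phi(\psi_t(\bx))$ in $t$ via the chain rule gives $\tfrac{\dd}{\dd t}[e^{w_t(\bx)}\phi(\psi_t(\bx))] = e^{w_t(\bx)}\bigl[\alpha_t(\psi_t(\bx))\phi(\psi_t(\bx)) + \nabla\phi(\psi_t(\bx))^\top \vv_t(\psi_t(\bx))\bigr]$, where I used $\dot w_t(\bx) = \alpha_t(\psi_t(\bx))$ and $\dot\psi_t(\bx) = \vv_t(\psi_t(\bx))$. Integrating against $\rho_0$ and recognizing the right-hand side gives $\tfrac{\dd}{\dd t}\int\phi\,\dd\tilde\rho_t = \int(\alpha_t\phi + \vv_t^\top\nabla\phi)\,\dd\tilde\rho_t$, which is exactly the weak form of the linear PDE tested against $\phi$. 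Together with the matching initial condition $\tilde\rho_0 = \rho_0$ and standard uniqueness for the linear transport--reaction equation (under Lipschitz $\vv_t$ and bounded $\alpha_t$), this identifies $\tilde\rho_t$ with the claimed candidate.

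For (ii), testing the linear PDE against $\phi \equiv 1$ yields $\dot Z_t = \int\alpha_t\,\dd\tilde\rho_t = Z_t\int\alpha_t\,\dd\rho_t$. The quotient rule then gives $\partial_t\rho_t = Z_t^{-1}\partial_t\tilde\rho_t - \tilde\rho_t Z_t^{-2}\dot Z_t = -\nabla\cdot(\rho_t\vv_t) + \alpha_t\rho_t - \rho_t\int\alpha_t\,\dd\rho_t$, matching \eqref{eq:advect_react_pde}. Substituting the representation of $\tilde\rho_t$ from step (i) into $\rho_t = \tilde\rho_t/Z_t$ produces \eqref{eq:pde_sol}.

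The main obstacle is uniqueness for the nonlinear equation \eqref{eq:advect_react_pde}, since the normalization term makes it non-local in $\rho_t$. I would resolve this by a decoupling trick: for any solution $\rho_t$ of \eqref{eq:advect_react_pde}, define the scalar $f(t) \coloneqq \int_0^t\int\alpha_s\,\dd\rho_s\,\dd s$ and set $\hat\rho_t \coloneqq e^{f(t)}\rho_t$. A direct calculation shows $\hat\rho_t$ satisfies the linear PDE $\partial_t\hat\rho_t = -\nabla\cdot(\hat\rho_t\vv_t) + \alpha_t\hat\rho_t$ with $\hat\rho_0 = \rho_0$, so by step (i) we have $\hat\rho_t = \tilde\rho_t$. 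Because $\rho_t$ is a probability measure, $e^{-f(t)} = 1/Z_t$, forcing $\rho_t = \tilde\rho_t/Z_t$. This establishes uniqueness and hence identifies the constructed $\rho_t$ with the solution in the statement, completing the proof.
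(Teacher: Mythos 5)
Your proof is correct, but it takes a different route from the paper's. The paper argues in one step: it differentiates the candidate formula $t \mapsto \int e^{w_t(\bx)}\phi(\psi_t(\bx))\dd\rho_0(\bx) \big/ \int e^{w_t(\bx)}\dd\rho_0(\bx)$ in time, uses $\dot w_t(\bx) = \alpha_t(\psi_t(\bx))$ and $\dot\psi_t(\bx) = \vv_t(\psi_t(\bx))$, and identifies the result with the weak form $\partial_t\int\phi\,\dd\rho_t = \int \langle\nabla\phi,\vv_t\rangle\dd\rho_t + \int\phi\,(\alpha_t - \int\alpha_t\dd\rho_t)\dd\rho_t$, i.e.\ it verifies that the weighted-pushforward measure solves the nonlinear PDE, leaving the identification with \emph{the} solution $\rho_t$ (i.e.\ uniqueness) implicit. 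You instead pass through the unnormalized \emph{linear} advection--reaction equation: you establish the Feynman--Kac-type representation $\int\phi\,\dd\tilde\rho_t = \int e^{w_t(\bx)}\phi(\psi_t(\bx))\dd\rho_0(\bx)$ for the linear flow, recover the nonlinear equation by normalizing with $Z_t$ and the identity $\dot Z_t = Z_t\int\alpha_t\dd\rho_t$, and then close the loop with the exponential-tilting argument $\hat\rho_t = e^{f(t)}\rho_t$, which maps any solution of the nonlinear, non-local PDE back to the linear one and forces $\rho_t = \tilde\rho_t/Z_t$. What your route buys is an explicit treatment of the converse direction actually asserted by the lemma (``any $\rho_t$ satisfying the PDE has this representation''), reduced to standard well-posedness of the linear transport--reaction equation under Lipschitz $\vv_t$ and bounded $\alpha_t$; what the paper's route buys is brevity, at the cost of tacitly assuming that uniqueness. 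Both computations check out (your decoupling step is easily verified: $\partial_t\hat\rho_t = -\nabla\cdot(\hat\rho_t\vv_t) + \alpha_t\hat\rho_t$ with $\hat\rho_0 = \rho_0$, and $e^{f(t)} = Z_t$ since $\rho_t$ is a probability measure), so your write-up is, if anything, the more complete of the two.
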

\begin{proof}
    We proceed by differentiating \eqref{eq:pde_sol} against time to obtain \eqref{eq:advect_react_pde}. We have
    \begin{align*}
        \partial_t\int \phi(\bx)\dd\rho_t(\bx) =& \frac{\int e^{w_t(\bx)}\langle \nabla\phi(\psi_t(\bx)), \vv_t(\psi_t(\bx))\rangle\dd\rho_0(\bx)}{\int e^{w_t(\bx)\dd\rho_0(\bx)}} + \frac{\int e^{w_t(\bx)}\alpha_t(\psi_t(\bx))\phi(\psi_t(\bx))}{\int e^{w_t(\bx)}\dd\rho_0(\bx)}\\
        & - \frac{\int e^{w_t(\bx)}\phi(\psi_t(\bx))\dd\rho_0(\bx)}{\int e^{w_t}(\bx)\dd\rho_0(\bx)} \cdot \frac{\int e^{w_t(\bx)}\alpha_t(\psi_t(\bx))\dd\rho_0(\bx)}{\int e^{w_t(\bx)}\dd\rho_0(\bx)}\\
        =& \int \langle \nabla \phi(\bx), \vv_t(\bx)\rangle \dd\rho_t(\bx) + \int \phi(\bx)\left(\alpha_t(\bx) - \int \alpha_t\dd\rho_t\right)\dd\rho_t(\bx),
    \end{align*}
    where the last equality holds by the change of variables formula. This is exactly the weak sense of \eqref{eq:advect_react_pde}.
\end{proof}

As a result, to prove \Cref{prop:cfg_formal}, we only need to show that $\rho^{(\gamma)}_t$ satisfies a PDE of the form \eqref{eq:advect_react_pde}.

\begin{proof}[Proof of \Cref{prop:cfg_formal}]
    Let $\tilde{\rho}^{(\gamma)}_t \coloneqq \rho_{1,t}^{\gamma}\rho_{2,t}^{1-\gamma}$ be the unnormalized measure, and recall $Z^{(\gamma)}_t = \int \tilde{\rho}^{(\gamma)}_t(\bx)\dd\bx$ is the normalizing constant. We have
    \begin{align*}
        \partial_t \tilde{\rho}^{(\gamma)}_t =& \gamma \rho_{1,t}^{\gamma - 1}\rho_{2,t}^{1-\gamma}\frac{\partial_t \rho_{1,t}}{\partial t} + (1-\gamma)\rho^{\gamma}_{1,t}\rho_{2,t}^{-\gamma}\frac{\partial \rho_{2,t}}{\partial t}\\
         =& -\gamma \rho_{1,t}^{\gamma-1}\rho_{2,t}^{1-\gamma}\nabla \cdot (\rho_{1,t}\vv_{1,t}) - (1-\gamma)\rho_{1,t}^\gamma\rho_{2,t}^{-\gamma}\nabla \cdot (\rho_{2,t}\vv_{2,t})\\
         =& -\nabla \cdot (\tilde{\rho}^{(\gamma)}_t \vv^{(\gamma)}_t) + \gamma(\gamma - 1)\tilde{\rho}^{(\gamma)}_t\langle\vv_{1,t} - \vv_{2,t}, \nabla \log \rho_{1,t} - \nabla \log \rho_{2,t}\rangle,\\
    \end{align*}
    where we recall $\vv^{(\gamma)}_t = \gamma \vv_{1,t} + (1-\gamma)\vv_{2,t}$, and the third identity follows from the product rule $\nabla \cdot (m\vv) = \nabla m \cdot \vv + m \nabla \cdot \vv$. Define
    \[
    \alpha_t(\bx) \coloneqq \gamma(\gamma-1)\langle \vv_{1,t}(\bx) - \vv_{2,t}(\bx), \nabla \log \rho_{1,t}(\bx) - \nabla \log \rho_{2,t}(\bx)\rangle.
    \]
    Then,
    \[
    \partial_t \rho^{(\gamma)}_t(\bx) = \frac{1}{Z^{(\gamma)}_t}\partial_t \tilde{\rho}^{(\gamma)}_t - \frac{\rho^{(\gamma)}_t}{Z^{(\gamma)}_t}\partial_t Z^{(\gamma)}_t.
    \]
    Note that $Z^{(\gamma)}_t$ is constant in $\bx$, and $\partial_t \int \rho^{(\gamma)}_t(\bx)\dd\bx = 0$. Therefore,
    \[
    \partial_t Z^{(\gamma)}_t = \partial_t \int \tilde{\rho}^{(\gamma)}_t(\bx)\dd\bx = Z^{(\gamma)}_t \int \alpha_t(\bx)\dd\rho_t(\bx).
    \]
    Therefore,
    \[
    \partial_t \rho^{(\gamma)}_t = -\nabla \cdot (\rho^{(\gamma)}_t\vv^{(\gamma)}_t) + \rho^{(\gamma)}_t \left(\alpha_t - \int \alpha_t \dd\rho_t\right).
    \]
    We have obtained our desired PDE, which concludes the proof.
\end{proof}

\subsection{Properties of the Semidual Loss}
Since we are interested in the semidiscrete case, the optimal transport plan always admits a density with respect to the product measure $\mu \otimes \nu$, regardless of entropic regularization. 
Therefore, we restrict the optimization problem to those that admit a density, and abuse the notation by using $\pi$ as the density of a coupling with respect to $\mu \otimes \nu$.
Recall $\dd \pi_{\varepsilon,\vg}(\bx,\by_j) = s_{\varepsilon,\vg}(\bx)_j$. Therefore, we can rewrite $\calC_{c,\varepsilon}(\pi)$ as
\[
    \calC_{\varepsilon}(\pi_{\varepsilon,\vg}) = \sum_{j=1}^N\int c(\bx,\by_j)s_{\varepsilon,\bg}(\bx)_j\dd\mu(\bx) + \varepsilon\sum_{j=1}^N\int s_{\varepsilon,\vg}(\bx)_j\log\left(\frac{s_{\varepsilon,\vg}(\bx)_j}{b_j}\right)\dd\mu(\bx).
\]

We first relate the dual objective $F_{\varepsilon}$ to the distance between $\nu_\varepsilon$ and $\nu$.
\begin{lemma}\label{lem:tv_chisq}
    For any $\vg \in \sR^N$ and $\varepsilon \geq 0$, we have $\nabla F_\varepsilon(\vg) = \bb - \hatb(\vg)$ (we use subgradient for $\varepsilon = 0$). As a result,
    \[
    \Chisq(\hatb(\vg) \,\Vert\, \bb) = \sum_{j=1}^N\frac{1}{b_j}(\partial_{g_j}F_{\varepsilon}(\vg))^2 \quad \text{and} \quad \TV(\hatb(\vg),\nu) = \frac{1}{2}\norm{\nabla F_{\varepsilon}(\vg)}_1.
    \]
    In particular, for $\bb = \vone_N / N$ we have $\Chisq(\nu_\varepsilon[\vg] \,\Vert\, \bb) = N\Vert \nabla F_{\varepsilon}(\vg)\Vert^2.$
\end{lemma}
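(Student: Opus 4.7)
}
The plan is to first establish the gradient identity $\nabla F_\varepsilon(\vg)=\bb-\hatb(\vg)$, and then derive each of the three corollaries (the $\chi^2$, TV, and uniform identities) by purely algebraic manipulation. The linear term $\langle\bb,\vg\rangle$ in \eqref{eq:semidual} contributes $\bb$ to the gradient, so the whole problem reduces to computing $\partial_{g_j}\mynotation(\bx)$ coordinate-wise and integrating against $\mu$. For $\varepsilon>0$ this is the classical softmax derivative: differentiating the log-sum-exp in \eqref{eq:soft-min} gives $\partial_{g_j}\mynotation(\bx) = -\frac{b_j\exp((g_j-c(\bx,\by_j))/\varepsilon)}{\sum_k b_k\exp((g_k-c(\bx,\by_k))/\varepsilon)} = -[\bs_{\varepsilon,\vg}(\bx)]_j$, which matches exactly the softmax definition of $\bs_{\varepsilon,\vg}$. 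Integrating and adding the linear term then gives the claim.

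For the $\varepsilon=0$ case, $\mynotation(\bx)=-\max_j[g_j-c(\bx,\by_j)]$ is concave and piecewise linear in $\vg$, so I would invoke Danskin's theorem (the envelope theorem for the maximum) to identify the subdifferential: $\partial_{g_j}\mynotation(\bx) = -[\bs_{0,\vg}(\bx)]_j$ with $\bs_{0,\vg}(\bx)$ supported on the $\argmax$, matching the $\varepsilon=0$ branch of $\boldmaxeps$. The only subtlety is that the $\argmax$ is multi-valued on a measure-zero set of $\bx$ (by \Cref{assump:reg}, the Laguerre tie set has zero Lebesgue/$\mu$-measure), so the integral $\int \bs_{0,\vg}(\bx)\dd\mu(\bx)$ is unambiguous and equals $\hatb(\vg)$; this handles the ``subgradient'' qualification in the statement.

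With $\nabla F_\varepsilon(\vg) = \bb - \hatb(\vg)$ in hand, the $\chi^2$ identity is pure algebra: expanding
\[
\Chisq(\hatb(\vg)\|\bb) = \sum_j \frac{\hatb(\vg)_j^2}{b_j} - 1 = \sum_j \frac{(b_j - \partial_{g_j}F_\varepsilon(\vg))^2}{b_j} - 1,
\]
the cross term gives $-2\sum_j(b_j-\partial_{g_j}F_\varepsilon(\vg)) = -2$ since both $\bb$ and $\hatb(\vg)$ live on the simplex, and the $\sum_j b_j$ piece equals $1$, so after cancellation only $\sum_j(\partial_{g_j}F_\varepsilon(\vg))^2/b_j$ survives. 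The TV identity is immediate from $\TV(\bp,\bq)=\tfrac12\|\bp-\bq\|_1$, and the uniform special case follows by substituting $b_j=1/N$ in the $\chi^2$ identity.

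The only real obstacle is the non-smoothness at $\varepsilon=0$; once this is addressed by a careful Danskin/envelope argument (and the observation that $\mu$-a.e.\ the $\argmax$ is unique under \Cref{assump:reg}), the rest is bookkeeping.
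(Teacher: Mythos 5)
Your proposal is correct and takes essentially the same route as the paper: establish $\nabla F_\varepsilon(\vg)=\bb-\hatb(\vg)$ coordinate-wise (log-sum-exp/softmax derivative for $\varepsilon>0$, envelope-theorem/subgradient selection supported on the argmax for $\varepsilon=0$), then deduce the $\Chisq$, TV, and uniform identities by direct algebra. One bookkeeping nit in the $\Chisq$ step: expanding $\sum_j(b_j-\partial_{g_j}F_\varepsilon(\vg))^2/b_j-1$, the cross term is $-2\sum_j\partial_{g_j}F_\varepsilon(\vg)=0$ (not $-2\sum_j\bigl(b_j-\partial_{g_j}F_\varepsilon(\vg)\bigr)=-2$), and with $\sum_j b_j=1$ cancelling the $-1$ you are left with $\sum_j(\partial_{g_j}F_\varepsilon(\vg))^2/b_j$ as desired.
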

Recall the Pinsker inequality $\TV \leq \sqrt{\tfrac{1}{2}\KL}$ and the fact that $\KL \leq \log(1 + \Chisq)$. Thus we can always upper bound total variation using the chi-squared divergence.

\begin{proof}[Proof of \Cref{lem:tv_chisq}]
    Note that the Chi-Squared and total variation bounds follow immediately from
    \begin{equation}\label{eq:grad_formula}
        \partial_{g_j}F_{\varepsilon}(\vg) = b_j - \hatbscalar(\vg)_j,
    \end{equation}
    since then
    \[
        \sum_{j=1}^N\frac{1}{b_j}(\partial_{g_j}F_{\varepsilon}(\vg))^2 = \sum_{j=1}^N\left(\frac{\hatbscalar(\vg)_j}{b_j}\right)^2b_j - 1 = \Chisq(\hatb(\vg) \,\Vert\, \bb),
    \]
    and
    \[
        \norm{\nabla F_{\varepsilon}(\vg)}_1 = \sum_{j=1}^N\abs{\hatbscalar(\vg)_j - b_j} = 2\TV(\hatb(\vg),\bb).
    \]
    Thus, we turn to showing \eqref{eq:grad_formula}. For $\varepsilon > 0$,
    \[
        \partial_{g_j} F_{\varepsilon}(\vg) = b_j - \frac{e^{\frac{g_j - c(\bx,\by_j)}{\varepsilon}}b_j}{\sum_k e^{\frac{g_k - c(\bx,\by_k)}{\varepsilon}b_k}} = b_j - \int s_{\varepsilon,\vg}(\bx)_j\dd\mu(\bx) = b_j - \hatbscalar(\vg)_j.
    \]
    In fact, one could show the above identity directly using duality and the envelope theorem, as $\bb - \hatb(\vg) = \vzero$ is a constraint of the minimization problem.

    For $\varepsilon = 0$, we similarly have
    \[
    \partial_{g_j} F_{0}(\vg) = b_j - \int \frac{\Indfunc[j \in \argmax_l g_l - c(\bx,\by_l)]}{\sum_{k=1}^N \Indfunc[k \in \argmax_l g_l - c(\bx,\by_l)]}\dd\mu(\bx) = b_j - \int s_{0,\vg}(\bx)_j\dd\mu(\bx),
    \]
    which completes the proof.
    
\end{proof}

The following allows us to turn an approximate stationary point (maximizer) of the dual into an approximate optimal transport plan.

\begin{lemma}\label{lem:cost_bound}
    For any $\vg \in \sR^N$ we have $\calC_{\varepsilon}(\pi_{\varepsilon,\vg}) \leq \calC^*_{\varepsilon} + \norm{\vg}\norm{\nabla F_{\varepsilon}(\vg)}$.
\end{lemma}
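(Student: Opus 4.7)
}
The plan is to show the stronger \emph{identity} $\calC_{\varepsilon}(\pi_{\varepsilon,\vg}) = F_{\varepsilon}(\vg) - \langle \nabla F_{\varepsilon}(\vg), \vg\rangle$ for every $\vg \in \sR^N$, then conclude by weak duality and Cauchy--Schwarz. First I would plug the explicit form of $\pi_{\varepsilon,\vg}$ into the primal cost displayed just before \Cref{lem:tv_chisq}. For $\varepsilon > 0$, the key observation is that, by definition of the soft-$c$ transform,
\[
\varepsilon \log \tfrac{s_{\varepsilon,\vg}(\bx)_j}{b_j} \;=\; g_j - c(\bx,\by_j) + \mynotation(\bx),
\]
so the entropy piece $\varepsilon\sum_j \int s_{\varepsilon,\vg}(\bx)_j \log(s_{\varepsilon,\vg}(\bx)_j/b_j)\dd\mu(\bx)$ cancels the $c$-integral and leaves exactly $\langle \hatb(\vg),\vg\rangle + \E_\mu[\mynotation]$. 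For $\varepsilon = 0$, the support of $s_{0,\vg}(\bx)$ is concentrated on the $c$-transform argmax, where $c(\bx,\by_j) = g_j + \fnoreg(\bx)$, yielding the same identity $\calC_0(\pi_{0,\vg}) = \langle \hatb(\vg),\vg\rangle + \E_\mu[\fnoreg]$. Thus in both regimes
\[
\calC_{\varepsilon}(\pi_{\varepsilon,\vg}) = \E_\mu[\mynotation] + \langle \hatb(\vg), \vg\rangle = F_{\varepsilon}(\vg) + \langle \hatb(\vg) - \bb, \vg\rangle.
\]

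Next I invoke \Cref{lem:tv_chisq}, which gives $\nabla F_{\varepsilon}(\vg) = \bb - \hatb(\vg)$ (as a subgradient when $\varepsilon=0$), to rewrite the inner-product correction as $-\langle \nabla F_{\varepsilon}(\vg), \vg\rangle$. Combined with strong duality $F^\star_{\varepsilon} = \calC^\star_{\varepsilon}$ (stated after \eqref{eq:semidual}) and the trivial bound $F_{\varepsilon}(\vg) \leq F^\star_{\varepsilon}$, the identity implies
\[
\calC_{\varepsilon}(\pi_{\varepsilon,\vg}) - \calC^\star_{\varepsilon} \;\leq\; -\langle \nabla F_{\varepsilon}(\vg), \vg\rangle \;\leq\; \norm{\nabla F_{\varepsilon}(\vg)}\,\norm{\vg}
\]
by Cauchy--Schwarz, which is the claim.

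The only mildly delicate point is the $\varepsilon = 0$ case, where I need to justify that the argmax identity $c(\bx,\by_j) = g_j + \fnoreg(\bx)$ can be used inside the integral. This is fine since $s_{0,\vg}(\bx)_j$ vanishes outside $\argmax_\ell(g_\ell - c(\bx,\by_\ell))$, so replacing $c(\bx,\by_j)$ by $g_j + \fnoreg(\bx)$ under the integrand $\sum_j s_{0,\vg}(\bx)_j(\cdot)$ is an equality $\mu$-a.e.; no measurability or tie-breaking issue arises because the substitution holds pointwise on the support of the weight vector. Everything else is just bookkeeping, so there is no real obstacle: the proof is essentially a one-line Fenchel-style envelope computation dressed up with the semidiscrete parametrization.
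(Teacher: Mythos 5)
Your proposal is correct and follows essentially the same route as the paper: the key identity $\calC_{\varepsilon}(\pi_{\varepsilon,\vg}) = F_{\varepsilon}(\vg) - \langle \nabla F_{\varepsilon}(\vg),\vg\rangle$ is exactly the paper's relation \eqref{eq:dual_primal} (extended to $\varepsilon=0$), after which both arguments conclude via $F_{\varepsilon}(\vg)\leq \calC^\star_{\varepsilon}$ and Cauchy--Schwarz. The only difference is presentational: you verify the identity by a direct soft-max computation (and the argmax substitution on the support of $s_{0,\vg}$), whereas the paper reaches it by re-deriving the (semi)dual through the Lagrangian and Kantorovich formulations.
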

\begin{proof}
    Note that since $\nu$ is a discrete measure, any coupling with the correct marginals admits a density with respect to $\mu \otimes \nu$. Hence, we can use the change of variables $\dd\pi(\bx,\by) = \bs(\bx)\dd\mu(\bx)$. Then, we can write the OT cost as
    \[
        \calC^*_{c,\varepsilon} = \min_{\substack{\bs(\bx) \geq \vzero \\ \int \bs(\bx)\dd\mu(\bx) = \bb \\ \sum_{j=1}^N s(\bx)_j = 1}}\sum_{j=1}^N\int s(\bx)_j\left[c(\bx,\by_j) + \varepsilon\log\left(\frac{s(\bx)_j}{b_j}\right) - \varepsilon\right]\dd\mu(\bx) + \varepsilon,
    \]. 

    When $\varepsilon > 0$, we can drop the constraint $\pi \geq 0$ as it is satisfied by entropy regularization. The Lagrangian is then given by
    \begin{align*}
        \calL_{\varepsilon}(\bs,f,\vg) \coloneqq& \sum_{j=1}^N\int s(\bx)_j\left[c(\bx,\by_j) + \varepsilon\log\left(\frac{s(\bx)_j}{b_j}\right) - \varepsilon\right]\dd\mu(\bx)  + \varepsilon\\
        &+ \int f(\bx)\Big(1 - \sum_{j=1}^N s(\bx)_j\Big)\dd\mu(\bx) + \Big\langle \vg, \bb - \int \bs(\bx)\dd\mu(\bx)\Big\rangle.
    \end{align*}
    Maximizing over $\bs$ yields $s(\bx)_j = e^{(f(\bx) + g_j - c(\bx,\by_j)) / \varepsilon}$, and leads to the following dual problem
    \[
        \max_{f,g}\calD_{\varepsilon}(f,\vg) \coloneqq \int f(\bx)\dd\mu(\bx) + \langle \vg,\bb\rangle - \varepsilon \sum_{j=1}^N\int \exp\Big(\frac{f(\bx) + g_j - c(\bx,\by_j)}{\varepsilon}\Big)b_j\dd\mu(\bx) + \varepsilon,
    \]
    This dual is maximized by
    \[
        \mynotation(\bx_0) = -\varepsilon \log \sum_{j=1}^N \exp\Big({\frac{g_j - c(\bx,\by_j)}{\varepsilon}}\Big)b_j.
    \]
    Note that at $f = \mynotation$, $s_{f,\vg,\vg}$ coincides with $s_{\varepsilon,\vg}$ of \eqref{eq:pi_eps}. 
    The semidual is defined as $F_{\varepsilon}(\vg) \coloneqq \max_f \calD_{\varepsilon}(f,\vg)$, and given by
    \[
        F_{\varepsilon}(\vg) = \langle \vg, \bb\rangle - \varepsilon \int \Big[\log \sum_{j=1}^N \exp\Big(\frac{g_j - c(\bx,\by_j)}{\varepsilon}\Big)b_j\Big]\dd\mu(\bx).
    \]
    for $\varepsilon > 0$. Thus, we derived the semidual \eqref{eq:semidual} for $\varepsilon > 0$.
    Importantly, one can verify
    \begin{equation}\label{eq:dual_primal}
        F_{\varepsilon}(\vg) = \calL_{c,\varepsilon}(\bs_{\varepsilon,\vg},\mynotation,\vg) = \calC_{\varepsilon}(\pi_{\varepsilon,\vg}) + \Big\langle \vg, \bb - \int \bs_{\varepsilon,\vg}(\bx)\dd\mu(\bx)\Big\rangle.
    \end{equation}

    We now turn to the case of $\varepsilon = 0$. The Kantorovich dual in this case is given by
    \[
        \max_{\{f,g \,:\, f\oplus g \leq c\}}\calD_{0}(f,g) \coloneqq \int f(\bx)\dd\mu(\bx) + \langle \vg, \bb\rangle
    \]
    when $\varepsilon = 0$, where $f\oplus g \leq c$ means $f(\bx) + g_j \leq c(\bx,\by_j)$ for $(\mu \otimes \nu)$-a.e.\ $(\bx,\by_j)$. This is achieved by
    \[
        \fnoreg(\bx) = \min_{j} c(\bx,\by_j) - g_j.
    \]
    This yields the semidual
    \[
        F_{0}(\vg) = \langle \vg,\bb\rangle + \int [\min_jc(\bx,\by_j) - g_j]\dd\mu(\bx).
    \]
    We once again derived the semidual \eqref{eq:semidual}.
    Using the definition of $\pi_{0,\vg}$, we have
    \begin{align*}
        F_{0}(\vg) &= \langle \vg,\bb\rangle + \sum_{j=1}^N\int [c(\bx,\by_j) - g_j]s_{0,\vg}(\bx)_j\dd\mu(\bx)\\
        &= \sum_{j=1}^N\int c(\bx,\by_j)s_{0,\vg}(\bx)_j\dd\mu(\bx) + \Big\langle \bg, \bb - \int \bs_{0,\vg}(\bx)\dd\mu(\bx)\Big\rangle.
    \end{align*}
    Once again, we obtain
    $$F_{0}(\vg) = \calC_{0}(\pi_{0,\vg}) + \Big\langle \vg,\bb - \int \bs_{\varepsilon,\vg}(\bx)\dd\mu(\bx)\Big\rangle,$$
    i.e.\ we can extend \eqref{eq:dual_primal} to $\varepsilon = 0$. By \Cref{lem:tv_chisq}, for any $\varepsilon \geq 0$ we have
    \[
        \nabla F_{\varepsilon}(\vg) = \bb - \int \bs_{\varepsilon,\vg}\dd\mu(\bx_0).
    \]
    The above formula directly implies the total variation bound.
    Therefore,
    \[
        F_{\varepsilon}(\vg) = \calC_{\varepsilon}(\pi_{\varepsilon,\vg}) + \langle\vg, \nabla F_{\varepsilon}(\vg)\rangle.
    \]
    By duality, we additionally have $F_{\varepsilon}(\vg) \leq \calC^*_{\varepsilon}$. Combined with the Cauchy-Schwartz inequality, we obtain
    $\calC_{\varepsilon}(\pi_{\varepsilon,\vg}) \leq \calC^*_{\varepsilon} + \norm{\vg}\norm{\nabla F_{\varepsilon}(\vg)}$, completing the proof.
\end{proof}

\subsection{Convergence of SGD}
We begin by recalling the following classical result from convex optimization, and we include its proof for completeness.
\begin{proposition}\label{prop:generic_sgd_conv}
    Suppose $f : \sR^N \to \sR$ is a stochastic convex function with subgradient $\nabla f$, and there exists a constant $G$ such that $\E[\Vert \nabla f(\vg)\Vert^2] \leq G^2$ for all $\vg \in \sR^N$. 
    Define $F : \sR^N \to \sR$ via $F(\cdot) = \E[f(\cdot)]$, where expectation is over the stochasticity of $f$. 
    Let $\vg^*$ denote any minimizer of $F$, and $F^* = F(\vg^*)$.
    Let $\{\vg_t\}_{t=0}^{T-1}$ denote the iterates of SGD with the sequence of step size $\{\eta_t\}_{t=0}^{T-1}$. Define $\bar{\vg}_T = \frac{\sum_{t=0}^{T-1}\eta_t \vg_t}{\sum_{t=0}^{T-1}\eta_t}$. Then,
    \[ \E[\Vert \vg_t - \vg^*\Vert^2] \leq \Vert \vg_0 - \vg^*\Vert^2 + G^2\sum_{t=0}^{T-1}\eta_t^2, \quad \forall t \in [T].
    \]
    If additionally $F$ is $L$-smooth, i.e. $\Vert \nabla^2 F\Vert_{\mathrm{op}} \leq L$. Then
    \[
        \frac{\sum_{t=0}^{T-1}\eta_t\E[\Vert \nabla F({\vg}_t) \Vert^2]}{\sum_{t=0}^{T-1}\eta_t} \leq \frac{F(\vg_0) - F^*}{\sum_{t=0}^{T-1}\eta_t} + \frac{\sum_{t=0}^{T-1}\eta_t^2 L G^2}{2\sum_{t=0}^{T-1}\eta_t}.
    \]
\end{proposition}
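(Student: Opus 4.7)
The plan is to handle the two bounds separately using classical stochastic optimization tools: a non-expansion argument for the iterate bound, and a descent-lemma argument for the gradient-norm bound.

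For the first claim, I would start by expanding the squared distance to the optimum along one SGD step. Writing $\vg_{t+1} = \vg_t - \eta_t \nabla f(\vg_t)$, I obtain $\|\vg_{t+1} - \vg^*\|^2 = \|\vg_t - \vg^*\|^2 - 2\eta_t \langle \nabla f(\vg_t), \vg_t - \vg^*\rangle + \eta_t^2 \|\nabla f(\vg_t)\|^2$. Conditioning on $\vg_t$ and taking expectation over the stochasticity of $f$, the cross term becomes $-2\eta_t \langle \nabla F(\vg_t), \vg_t - \vg^*\rangle$, which is nonpositive by convexity of $F$ (this is the only place convexity enters). The quadratic term is bounded by $\eta_t^2 G^2$ using the second-moment assumption. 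Taking full expectations and telescoping from $0$ to $t-1$ (with $t \le T$) gives $\E[\|\vg_t - \vg^*\|^2] \leq \|\vg_0 - \vg^*\|^2 + G^2 \sum_{s=0}^{t-1} \eta_s^2$, and upper-bounding the partial sum by the full sum $\sum_{s=0}^{T-1} \eta_s^2$ yields the stated inequality.

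For the second claim, under the additional $L$-smoothness of $F$, I would invoke the descent lemma $F(\vg_{t+1}) \leq F(\vg_t) + \langle \nabla F(\vg_t), \vg_{t+1} - \vg_t\rangle + \tfrac{L}{2}\|\vg_{t+1} - \vg_t\|^2$. Substituting the SGD update and conditioning on $\vg_t$, the linear term in expectation becomes $-\eta_t \|\nabla F(\vg_t)\|^2$ (since $\E[\nabla f(\vg_t)\mid\vg_t] = \nabla F(\vg_t)$), and the quadratic term is bounded by $\tfrac{L \eta_t^2 G^2}{2}$. Rearranging gives $\eta_t \E[\|\nabla F(\vg_t)\|^2] \leq \E[F(\vg_t)] - \E[F(\vg_{t+1})] + \tfrac{L \eta_t^2 G^2}{2}$. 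Summing over $t = 0, \dots, T-1$ telescopes the function-value differences to $F(\vg_0) - \E[F(\vg_T)] \leq F(\vg_0) - F^*$, and dividing both sides by $\sum_{t=0}^{T-1} \eta_t$ recovers the claimed weighted-average bound.

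Both steps are essentially textbook, so I do not anticipate any real obstacle. The only subtle point to flag is that the first bound only requires convexity of $F$ in expectation (not pathwise convexity of $f$), and that $\nabla f$ is understood as a subgradient when $f$ is nonsmooth; measurability of $\vg_t$ with respect to the past randomness is needed so that $\E[\nabla f(\vg_t)\mid \vg_t] = \nabla F(\vg_t)$, which is standard for SGD. No use of the averaged iterate $\bar{\vg}_T$ is required here since the statement bounds the $\eta_t$-weighted average of $\E[\|\nabla F(\vg_t)\|^2]$ directly.
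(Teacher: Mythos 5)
Your proposal is correct and follows essentially the same route as the paper's proof: the same expansion of $\|\vg_{t+1}-\vg^*\|^2$ with the convexity-plus-minimality argument to discard the cross term (the paper keeps the nonpositive terms $2\eta_l(F^*-F(\vg_l))$ in the sum and drops them at the end, which is the same thing), and the same descent-lemma telescoping with $F^*\leq \E[F(\vg_T)]$ for the gradient-norm bound. No gaps; your side remarks on subgradients and measurability are consistent with the paper's (implicit) assumptions.
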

\begin{proof}
    By the convexity of $F$, for any $\vg$ we have $F^* \geq F(\vg) + \nabla F(\vg)^\top(\vg^* - \vg)$. With this property, for any $l$ we can write
    \begin{align*}
        \E[\Vert \vg_{l+1} - \vg^*\Vert^2 \,|\, \vg_t] &= \Vert \vg_t - \vg^*\Vert^2 - 2\eta_l \E[\nabla f(\vg_l) \,|\, \vg_l]^\top(\vg_l - \vg^*) + \eta_t^2 \E[\Vert \nabla f(\vg_l)\Vert^2 \,|\, \vg_l]\\
        &= \Vert\vg_l - \vg^*\Vert^2 - 2\eta_l \nabla F(\vg_l)^\top(\vg_l - \vg^*) + \eta_l^2 \E[\Vert \nabla f(\vg_l)\Vert^2 \,|\, \vg_t]\\
        &\leq \Vert\vg_l - \vg^*\Vert^2 + 2\eta_l (F^* - F(\vg_l)) + \eta_l^2 \E[\Vert \nabla f(\vg_l)\Vert^2 \,|\, \vg_l].
    \end{align*}
    Taking an expectation over $\vg_l$, we obtain
    \[
        \E[\Vert \vg_{l+1} - \vg^*\Vert^2 ] \leq \E[\Vert\vg_l - \vg^*\Vert^2] - 2\eta_l \E[F^* - F(\vg_l)] + \eta_l^2 G^2.
    \]
    By summing both sides from $l=0$ to $t-1$, we have
    \begin{align*}
        \E[\Vert \vg_{t+1} - \vg^*\Vert^2 ] &\leq \Vert \vg_0 - \vg^*\Vert^2 + \sum_{l=0}^{t-1}\eta_l \E[F^* - F(\vg_l)] + G^2\sum_{l=0}^{t-1}\eta_l^2\\
        &\leq \Vert \vg_0 - \vg^*\Vert^2 + G^2\sum_{l=0}^{T-1}\eta_l^2,
    \end{align*}
    which proves the first inequality.

    For the second inequality, we start with the smoothness lemma
    \[
    F(\vg_{t+1}) \leq F(\vg_t) + \nabla F(\vg_t)^\top (\vg_{t+1} - \vg_t) + \frac{L}{2}\norm{\vg_{t+1} - \vg_t}^2.
    \]
    Taking expectation conditioned on $\vg_t$, we obtain
    \[
    \E[F(\vg_{t+1}) \,|\, \vg_t] \leq F(\vg_t) -\nabla \norm{\nabla F(\vg_t)}^2 + \frac{\eta_t^2 L}{2}\E[\norm{\nabla f(\vg_t)}^2 \,|\, \vg_t].
    \]
    Taking another expectation, we obtain
    \[
    \E[F(\vg_{t+1})] - \E[F(\vg_t)] \leq -\eta_t\E[\norm{\nabla F(\vg_t)}]^2 + \frac{\eta_t^2LG^2}{2}.
    \]
    By rearranging the terms, summing over $t$ from $0$ to $T-1$, and noticing that $F^* \leq \E[F(\vg_T)]$, we obtain
    \[
    \sum_{t=0}^{T-1}\eta_t\E[\norm{\nabla F(\vg_t)}^2] \leq F(\vg_0) - F^* + \frac{1}{2}LG^2\sum_{t=0}^{T-1}\eta_t^2,
    \]
    which completes the proof.
\end{proof}

To apply \Cref{prop:generic_sgd_conv}, we need to estimate the smoothness constant of the dual, achieved by the following lemma.
\begin{lemma}\label{lem:Lip_bound}
    Consider the semidual $F_{\varepsilon}$ defined in \eqref{eq:semidual}.
    For any cost $c : \sR^d \times \sR^d \to \sR$ and any $\varepsilon > 0$, we have $\norm{\nabla^2 F_{\varepsilon}}_{\mathrm{op}} \leq 1/\varepsilon$. 
    Furthermore, if $c(\bx,\by) = -\bx^\top \by$ and $\mu$ admits a density, then for $\varepsilon = 0$ we have $\norm{\nabla^2 F_{0}}_{\mathrm{op}} \leq 2\Cmax/\delta$, where $\delta = \min_{i \neq j}\norm{\bx^{(i)}_1 - \bx^{(j)}_1}$, and $\Cmax$ is the maximum $\mu$-surface area of any convex set in $\sR^d$.
\end{lemma}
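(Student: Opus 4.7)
The plan is to treat the two cases separately, since they have very different characters. For $\varepsilon > 0$, start from the semidual $F_\varepsilon(\vg) = \langle \vg, \bb\rangle - \varepsilon \int \log \sum_j b_j \exp((g_j - c(\bx, \by_j))/\varepsilon)\, d\mu(\bx)$ and differentiate twice. The first derivative (already computed in Lemma~\ref{lem:tv_chisq}) is $\partial_{g_j} F_\varepsilon(\vg) = b_j - \int s_{\varepsilon,\vg}(\bx)_j\, d\mu(\bx)$, and a direct softmax calculation gives
\[
\nabla^2 F_\varepsilon(\vg) = -\tfrac{1}{\varepsilon} \int \bigl[\operatorname{diag}(\bs_{\varepsilon,\vg}(\bx)) - \bs_{\varepsilon,\vg}(\bx)\bs_{\varepsilon,\vg}(\bx)^\top\bigr] d\mu(\bx).
\]
At each $\bx$ the integrand is the covariance matrix of a categorical law on $[N]$, so for any $\bv \in \sR^N$ we have $\bv^\top [\operatorname{diag}(\bs) - \bs\bs^\top] \bv = \mathrm{Var}_{\bs}(\bv) \leq \sum_j s_j v_j^2 \leq \|\bv\|_\infty^2 \leq \|\bv\|_2^2$. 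Integrating and dividing by $\varepsilon$ yields $\|\nabla^2 F_\varepsilon\|_{\mathrm{op}} \leq 1/\varepsilon$.

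For $\varepsilon = 0$ and $c(\bx,\by) = -\bx^\top \by$, the semidual is only piecewise smooth: its gradient $\partial_{g_j} F_0(\vg) = b_j - \mu(\calL_j(\vg))$ involves the Laguerre cell $\calL_j(\vg) := \{\bx : g_j + \bx^\top \by_j \geq g_k + \bx^\top \by_k \text{ for all } k\}$, an intersection of half-spaces and hence convex. The approach is to compute $\nabla^2 F_0$ (which exists $\mu$-a.e.\ in $\vg$ by standard arguments in semidiscrete OT) via an infinitesimal sweeping argument: the facet $\calL_j \cap \calL_k$ lies on the hyperplane $\{\bx : \bx^\top(\by_j - \by_k) = g_k - g_j\}$, and perturbing $g_k$ by $\eta$ shifts this hyperplane along its unit normal by $\eta/\|\by_j - \by_k\|$. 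Tracking the signed mass swept gives, for $k \neq j$,
\[
\partial_{g_k} \partial_{g_j} F_0(\vg) = \frac{1}{\|\by_j - \by_k\|} \int_{\calL_j \cap \calL_k} \rho(\bx)\, dS(\bx) \geq 0,
\]
where $\rho$ is the density of $\mu$. Because $F_0$ is invariant under uniform shifts of $\vg$, one has $\sum_k H_{jk} = 0$ and therefore $H_{jj} = -\sum_{k\neq j} H_{jk}$.

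With this structure, $-H = -\nabla^2 F_0$ is a weighted graph Laplacian on $N$ nodes: nonnegative off-diagonals flipped in sign, nonnegative diagonals equal to $\sum_{k\neq j} H_{jk}$, and zero row sums. Gershgorin then bounds every eigenvalue of $-H$ in $[0, 2\max_j \sum_{k\neq j} H_{jk}]$, so $\|\nabla^2 F_0\|_{\mathrm{op}} \leq 2 \max_j \sum_{k\neq j} H_{jk}$. Using $\|\by_j - \by_k\| \geq \delta$ and the fact that the facets of $\calL_j$ together make up $\partial\calL_j$,
\[
\sum_{k \neq j} H_{jk} \leq \tfrac{1}{\delta} \sum_{k\neq j} \int_{\calL_j \cap \calL_k} \rho\, dS = \tfrac{1}{\delta} \int_{\partial \calL_j} \rho\, dS \leq \tfrac{\Cmax}{\delta},
\]
where the final inequality is precisely the hypothesis on $\Cmax$ applied to the convex set $\calL_j$. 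This yields the claimed $\|\nabla^2 F_0\|_{\mathrm{op}} \leq 2\Cmax/\delta$.

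The main obstacle is rigorously justifying the differentiation formula for $\mu(\calL_j(\vg))$ in the $\varepsilon = 0$ case: one needs to control the measure of the set of $\vg$ where the combinatorial structure of the Laguerre diagram changes and to invoke a coarea-type identity to identify the facet integral as the derivative. This is standard in the semidiscrete OT literature (see e.g.\ \citet{merigot2011multiscale}), and since $F_0$ is concave with Lipschitz gradient, controlling the Hessian on the full-measure set of regular $\vg$ is sufficient to bound the operator norm globally.
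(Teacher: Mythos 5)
Your proof is correct and follows essentially the same route as the paper's: the softmax-covariance bound giving $1/\varepsilon$ for $\varepsilon>0$, and for $\varepsilon=0$ the Laguerre-cell facet formula for the Hessian (which the paper imports from \citet[Theorem 1.3]{kitagawa2019convergence} rather than re-deriving via your sweeping argument) combined with the weighted-graph-Laplacian structure and the bound $\sum_{k\neq j}W_{jk}\leq \Cmax/\delta$. Your explicit Gershgorin step is in fact slightly more careful than the paper's, whose displayed inequality $\norm{\nabla^2 F_0}\leq\max_i\sum_{j\neq i}W_{ij}$ omits the factor $2$ that the lemma statement (and your argument) correctly includes.
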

\begin{proof}
    By \Cref{lem:tv_chisq}, we have
    \[
        \nabla F_{\varepsilon}(\vg) = \bb - \int\bs_\varepsilon(\vg,\bx_0)\dd\mu(\bx_0),
    \]
    where we notice that for $\varepsilon = 0$ we can write
    \[
    s_{0,\vg}(\bx_0)_i = 
        \Indfunc[g_i + \bx^\top\by_i \geq g_j + \bx^\top \by_j,\, \forall j], 
    \]
    since $\mu$ admits density, and consequently, ties in the maximum occur with probability zero.
    Thus, for $\varepsilon > 0$, by a direct calculation we obtain
    \[
        \nabla^2F_{\varepsilon}(\vg) = \frac{1}{\varepsilon}\int [\bs_{\varepsilon,\vg}(\bx)\bs_{\varepsilon,\vg}(\bx)^\top - \diag(\bs_{\varepsilon,\vg}(\bx))]\dd\mu(\bx).
    \]
    Using this calculation, one immediately obtains $\norm{\nabla^2 F_{\varepsilon}}_{\mathrm{op}} \leq \frac{1}{\varepsilon}$, which in fact does not depend on the cost function (the argument above holds for any other cost $c$).

    For $\varepsilon = 0$, by \citet[Theorem 1.3]{kitagawa2019convergence}, we have
    \[
        \nabla^2 F_{0}(\vg) = \mW - \diag(\vd),
    \]
    where $d_i = \sum_{j \neq i}W_{ij}$
    \[
        W_{ii} = 0, \quad \mathrm{and}, \quad W_{ij} = \frac{1}{\norm{\by_i - \by_j}}\int_{A(\vg,i)\cap A(\vg,j)}\mu(\bx_0)\dd S(\bx),  \forall i \neq j
    \]
    where $A(\vg,i) = \{\bx : g_i + \bx^\top\by_i \geq g_j + \bx^\top\by_j, \forall j\}$ and $\dd S$ denotes the $d-1$-dimensional surface measure in a Euclidean space. Note that $-\nabla^2 F_{0}$ is the Laplacian of a weighted graph with weights $W_{ij} \geq 0$, therefore $\norm{\nabla^2 F_{0}(\vg)} \leq \max_{i=1}^N\sum_{j\neq i}W_{ij}$.
    Further,
    \[
        \sum_{j \neq i}W_{ij} = \sum_{j \neq i}\frac{1}{\norm{\by_i - \by_j}}\int_{A(\vg,i)\cap A(\vg,j)}\mu(\bx)\dd S(\bx) \leq \frac{1}{\delta}\int_{\partial A(\vg,i)}\mu(\bx)\dd S(\bx) \leq \frac{\Cmax}{\delta},
    \]
    which finishes the proof.
\end{proof}

We are now in a position to state the proof of \Cref{thm:sgd_convergence}.

\begin{proof}[Proof of \Cref{thm:sgd_convergence}]
    Let $F = -F_{\varepsilon}$ in \Cref{prop:generic_sgd_conv}.
    We begin by noting that for constant learning rate $\eta_k = \eta \coloneqq \sqrt{\frac{\Delta}{L_\varepsilon K}}$ and $t \sim \Unif(\{0,\hdots,K-1\})$, we can interpret the gradient norm bound of \Cref{prop:generic_sgd_conv} as
    \[
    \E[\norm{\nabla F_{\varepsilon}(\bg_t)}^2] \leq \frac{\Delta}{\eta K} + \frac{\eta L_\varepsilon G^2}{2},
    \]
    where we recall from \Cref{lem:Lip_bound} that $\norm{\nabla^2 F_{\varepsilon}}_{\mathrm{op}} \leq L_\varepsilon$.
    Moreover, let
    \[
    \nabla \hat{F}_{\varepsilon}(\bg) = \bb - \frac{1}{B}\sum_{i=1}^B \bs_{\varepsilon,\bg}(\bx^{(i)}),
    \]
    denote the stochastic gradient where $(\bx^{(i)}) \stackrel{\mathrm{i.i.d.}}{\sim} \mu$. We can define the probability distribution \[
    \hat{\hatb}(\bg) \coloneqq \frac{1}{B}\sum_{i=1}^B\bs_{\varepsilon,\bg}(\bx^{(i)}_0) \in \Delta^N.
    \]
    Then,
    \[
    \norm{\nabla \hat{F}_{\varepsilon}(\bg)} \leq \norm{\nabla \hat{F}_{\varepsilon}(\bg)}_1 = 2\TV(\bb, \hat{\hatb}(\bg)) \leq 2.
    \]
    As a result, $G = 2$ in \Cref{prop:generic_sgd_conv}, and we obtain
    \begin{equation}\label{eq:norm_bound}
        \E[\norm{\nabla F_{\varepsilon}(\bg_t)}^2] \leq 3\sqrt{\frac{L_\varepsilon \Delta}{K}}.
    \end{equation}
    Let $\nu_{\min} \coloneqq \min_j b_j$ for simplicity. Combined with \Cref{lem:tv_chisq}, we conclude that
    \[
    \Chisq(\hatb(\bg) \,\Vert\, \bb) \leq \frac{1}{\nu_{\min}}\norm{\nabla F_{\varepsilon}(\bg)}^2 \leq \frac{3}{\nu_{\min}} \cdot \sqrt{\frac{L_\varepsilon \Delta}{K}},
    \]
    which concludes the first part of the proof.

    For the second part, we use \Cref{lem:cost_bound} and the Cauchy-Schwartz inequality to obtain
    \[
    \E[\calC_{\varepsilon}(\pi_{\varepsilon,\bg_t})] \leq \calC^*_{\varepsilon} + \E[\norm{\bg_t}^2]^{1/2}\E[\norm{\nabla F_{\varepsilon}(\bg_t)}^2]^{1/2}.
    \]
    From \Cref{prop:generic_sgd_conv}, we have for any fixed $k \in [K]$
    \[
    \E[\norm{\bg_k}^2] \leq \norm{\bg^\star}^2 + K\eta^2 G^2 \leq \norm{\bg^\star}^2 + \frac{4\Delta}{L_\varepsilon}.
    \]
    As a result, the above bound also holds for $\E[\norm{\bg_t}]$. Combining this with \eqref{eq:norm_bound} the proof.
\end{proof}

\section{Algorithms}
We provide the details of all algorithms introduced in the paper in this section.

We begin by detailing the optimization algorithm for solving SD-OT. If $\eta_k$ only depends on $k$, \Cref{alg:sgd} is SGD on the semidual loss \eqref{eq:semidual}. For SGD, the typical choice of learning rate is $\eta_k \propto 1/\sqrt{k}$ after an optional constant learning rate phase. By allowing $\eta_k$ to depend on the gradients, \Cref{alg:sgd} becomes adaptive stochastic optimization on the semidual loss. We found AdaGrad~\citep{duchi2011adagrad} to be a particularly effective choice of learning rate schedule, and provide details on the schedule of SGD and Adagrad in \Cref{sec:appendix_experiments}.

\begin{algorithm}[H]
\caption{\textsc{SolveSDOT}}
\label{alg:sgd}
\small
\begin{algorithmic}[1]
    \Statex {\bfseries Input}: data $\by_1,\dots,\by_N\in\sR^d, \bb\in\Delta^N$,  $\varepsilon \geq 0$, threshold $\tau$, step sizes $\eta_t$, batch size $M$.
    \Statex {\bfseries Output}: Semidiscrete dual potential $\bm{g}$.
    \State{$\vg \leftarrow \mathbf{0}, \bar{\vg} \leftarrow \mathbf{0}$, $k \leftarrow 0$.}
    \While{$\hChisq(\hatb(\bg)|\bb)>\tau$}
        \State Sample $\bx_0,\dots,\bx_M \iidsim \mu_0^{\otimes M}$
        \State $\vg \leftarrow \vg + \eta_k \left(\bb-\tfrac1M\sum_{\ell=1}^{M}s_{\varepsilon,\bg}(\bx_\ell)\right)$ \\
        \State $\bar{\vg} \leftarrow \frac{1}{k + 1}\vg + \frac{k}{k+1}\bar{\vg}$
        \State{$k \leftarrow k + 1$}
    \EndWhile
    \State{{\bfseries return} $\bar{\vg}$}
\end{algorithmic}
\end{algorithm}

Once dual potentials are obtained from \Cref{alg:sgd}, we can use the following algorithm to pair noise $\bx_0$ with data $\bx_1$, which can be used to train flow-based models.
\begin{algorithm}[H]
\caption{\textsc{Assign}}
\label{alg:couple}
\small
\begin{algorithmic}[1]
    \Statex {\bfseries Input}: noise $\bx_0$, data $\bx^{(1)}_1,\dots,\bx^{(N)}_1$, $\vg$, $\varepsilon \geq 0$.
    \Statex {\bfseries Output}: Coupled noise-data pair $(\bx_0, \bx_1')$ 
    \State {\bfseries return} $\bx^{(k)}_1, k \sim s_{\varepsilon,\bg}(\bx_0)\in\Delta^N.$
\end{algorithmic}
\end{algorithm}

Next, to better see the contrast between I-FM and OT-FM/SD-FM, we recall the usual I-FM algorithm for training flow models.
\begin{algorithm}[H]
\caption{I-FM}
\label{alg:ifm}
\small
\begin{algorithmic}[1]
    \Statex {\bfseries Input}: Noise $\mu_0$, Data $\mu_1=\tfrac1N\sum_{j=1}^N\delta_{\bx_1^{(j)}}$, FM batch size $B$, flow model $\bm{v}_\theta(t, \bx)$, Epochs $E$.
    \For{$\ell=1,\dots, NE/B$}\Comment{$NE\Theta$}
        \State Draw $(\bx_0^{(i)})_{i = 1}^B \sim \mu_0^{\otimes B}$
        \State Draw $(\bx_1^{(i)})_{i = 1}^B \sim \mu_1^{\otimes B}$
        \State $\theta \gets \textsc{FMStep}(\theta,(\bx_0^{(i)})_{i=1}^B, (\bx_1^{(i)})_{i=1}^B)$. \Comment{$\Theta\times B$}
    \EndFor
\end{algorithmic}
\end{algorithm}

We now present OT-FM and SD-FM. Steps specific to OT-FM are highlighted in {\color{blue}blue}, while those specific to SD-FM are highlighted in {\color{red}red}.

\begin{algorithm}[H]
\caption{OT-FM}
\label{alg:otfm-cached}
\small
\begin{algorithmic}[1]
    \State {\bfseries Input}: Noise $\mu_0$, Data $\mu_1=\tfrac1N\sum_{j=1}^N\delta_{\bx_1^{(j)}}$, FM batch size $B$, flow model $\bm{v}_\theta(t, \bx)$, Epochs $E$, \\{\color{blue}Regularization $\varepsilon > 0$, OT batch-size $n$}.
    \For{$\ell=1,\dots, NE/n$}\Comment{{\color{blue}$NE\Theta + NE/n \times O(dn^2/\varepsilon^2)=NE\Theta + O(NEn/\varepsilon^2)$}}
        \State Draw $(\bx_0^{(i)})_{i = 1}^n \sim \mu_0^{\otimes n}$
        \State Draw $(\bx_1^{(j)})_{j = 1}^n \sim \mu_1^{\otimes n}$
        \State {\color{blue}$\mathbf{P}\leftarrow \textsc{Sinkhorn}((\bx_0^{(i)})_{i = 1}^n, (\bx_1^{(j)})_{j = 1}^n, \varepsilon) \in\Delta^{n\times n}$} \Comment{{\color{blue}$O(dn^2/\varepsilon^2)$}}
        \State {\color{blue}Sample from coupling matrix: $\tilde{\bx}_1^{(i)}\leftarrow \bx_1^{(k)}, k\sim B \mathbf{P}_{i\cdot}$, for every $i\leq n$.}\Comment{{\color{blue}$O(n^2)$ (neglected)}}
        \For{{\color{blue}$k = 0, .., n/B-1$}}\Comment{{\color{blue}$\Theta\times n$}}
            \State $\theta \gets \textsc{FMStep}(\theta,(\bx_0^{(i)})_{i=kB+1}^{(k+1)B}, {\color{blue}(\tilde{\bx}_1^{(i)})_{i=kB+1}^{(k+1)B)}})$.\Comment{$\Theta\times B$}
        \EndFor
    \EndFor
\end{algorithmic}
\end{algorithm}

\begin{algorithm}[H]
\caption{OT-FM*, Cached Sinkhorn computations.}
\label{alg:otfm}
\small
\begin{algorithmic}[1]
    \State {\bfseries Input}: Noise $\mu_0$, Data $\mu_1=\tfrac1N\sum_{j=1}^N\delta_{\bx_1^{(j)}}$, FM batch size $B$, flow model $\bm{v}_\theta(t, \bx)$, Epochs $E$, \\{\color{blue}Regularization $\varepsilon > 0$, OT batch-size $n$}.
    \For{{\color{blue}$\ell=1,\dots, NE/n$}} \Comment{{\color{blue}$O(NE/n \times dn^2/\varepsilon^2)=O(NEdn/\varepsilon^2)$\, --- Memory: $2NE$ }}
        \State {\color{blue}Store $X_\ell=(\bx_0^{(i)})_{i = 1}^n \sim \mu_0^{\otimes n}$} \Comment{{\color{blue}Memory: $n$ (storing \texttt{rng})}}
        \State {\color{blue}Draw $(\bx_1^{(j)})_{j = 1}^B \sim \mu_1^{\otimes n}$}
        \State {\color{blue}$\mathbf{P}\leftarrow \textsc{Sinkhorn}((\bx_0^{(i)})_{i = 1}^n, (\bx_1^{(j)})_{j = 1}^n, \varepsilon) \in\Delta^{n\times n}$} \Comment{{\color{blue}$O(dn^2/\varepsilon^2)$}}
        \State {\color{blue}Sample from coupling matrix: $k_\ell[i]\leftarrow k\sim B \mathbf{P}_{i\cdot}$, for every $i\leq n$.}\Comment{{\color{blue}$O(dn^2)$\, --- Memory: $n$}}
    \EndFor
    \For{{\color{blue}$\ell=1,\dots, NE/n$}}\Comment{$NE\Theta$}
        \State {\color{blue}Load $(\bx_0^{(i)})_{i = 1}^n = X_\ell$}
        \State {\color{blue}Load paired $\tilde{\bx}_1^{(i)}= \bx_1^{(k_\ell[i])}$, for $i\leq n$}
        \For{{\color{blue}$k = 0, .., n/B-1$}}
            \State $\theta \gets \textsc{FMStep}(\theta,(\bx_0^{(i)})_{i=kB+1}^{(k+1)B}, {\color{blue}(\tilde{\bx}_1^{(i)})_{i=kB+1}^{(k+1)B)}})$.\Comment{$\Theta\times B$}
        \EndFor
    \EndFor
\end{algorithmic}
\end{algorithm}

\begin{algorithm}[H]
\caption{SD-FM}
\label{alg:sdfm}
\small
\begin{algorithmic}[1]
    \Statex {\bfseries Input}: Noise $\mu_0$, Data $\mu_1=\tfrac1N\sum_{j=1}^N\delta_{\bx_1^{(j)}}$, FM batch size $B$, flow model $\bm{v}_\theta(t, \bx)$, Epochs $E$, \\
    {\color{red}Regularization $\varepsilon \geq 0$, Threshold $\tau$}
    \State {\color{red}$\bg \gets \textsc{SolveSDOT}(\mu_0, \mu_1, \varepsilon,\tau)$}\Comment{{\color{red}$NdK$\, --- Memory: $N$}}
    \For{$\ell=1,\dots, NE/B$} \Comment{{\color{red}$NE(\Theta + Nd)$}}
        \State Draw $(\bx_0^{(i)})_{i = 1}^B \sim \mu^{\otimes B}$
        \State {\color{red}$\tilde{\bx}_1^{(i)} \gets \textsc{Assign}(\bx_0^{(i)}, \bg, \varepsilon) \text{ for } i\leq B$.}\Comment{{\color{red}$dNB$}}
        \State $\theta \gets \textsc{FMStep}(\theta,(\bx_0^{(i)})_{i=1}^B, {\color{red}(\tilde{\bx}_1^{(i)})_{i=1}^B})$.  \Comment{$\Theta\times B$}
    \EndFor
\end{algorithmic}
\end{algorithm}

\section{Experiment details}\label{sec:appendix_experiments}

\subsection{Semidiscrete optimal transport}

To solve for the semidiscrete dual potential $\bg$ in practice, we implement Algorithm \ref{alg:sgd} using JAX and utilize built-in data parallelism to scale to handle large datasets over multiple GPUs and nodes. Since the problem \eqref{eq:semidual} is concave (and strictly, for $\varepsilon > 0$)
one can use a variety of different optimizers and obtain a suitable convergence behavior.
We use the dot-product cost and scale $\varepsilon$ with the standard deviation of the cost, as proposed by \cite{zhang2025fitting}. In the class-conditional setting, we choose $\beta = 10^2$ for ImageNet-32 and $\beta = 2 \times 10^2$ for ImageNet-64 and PetFace. 

In all cases, we opt to use AdaGrad with a constant learning rate for \num{200000} iterations followed by inverse-square root decay for an additional \num{100000} iterations and perform iterate averaging over the final \num{50000} iterates. To choose the learning rate, we notice that typically $\norm{\vg}$ is of order $\sqrt{N}$ while $\norm{\nabla F_\varepsilon(\vg)}$ is of order $1/\sqrt{N}$. Thus, we expect $L$, Lipschitz constant of the gradient, to be of the order $1/N$, which is smaller than the conservative estimation $L_\varepsilon$ used in \Cref{thm:sgd_convergence}. 
As \Cref{thm:sgd_convergence} suggests a learning rate that scales with $\sqrt{1/L}$ (the typical scaling under bounded gradient norm) , we find $\sqrt{N}$ to be a suitable heuristic for setting the learning rate. As $N$ for our datasets is of order $10^6$, we use $10^3$ as the initial learning rate to optimize the potential.
Once computed, the optimal dual potential $\bg^\star$ can be cached and reused. Optimal dual potentials for our ImageNet and PetFace experiments will be made public upon publication. 

To estimate $\Chisq(\hatb(\vg) \,\Vert\, \bb)$, we use a total of $2^{20}$ noise samples, which we divide into batches of size $2^{13}$ that can further be sharded along multiple devices, and average \eqref{eq:hchisq} over these batches. For the ImageNet-32x32 dataset, this calculation takes less than 2 minutes on a node of 8 H100 GPUs.

\subsection{Flow model training}

\begin{table}[ht]
    \small
    \centering
    \begin{tabular}{lcc}
        \hline
         & \textbf{ImageNet (32x32)} & \textbf{ImageNet (64x64), PetFace} \\
        \hline
        Channels & 256 & 192 \\
        Depth & 3 & 3 \\
        Channels multiple & 1,2,2,2 & 1,2,3,4 \\
        Heads & 4 & 4 \\
        Heads Channels & 64 & 64 \\
        Attention resolution & 4 & 8 \\
        Dropout & 0.0 & 0.1 \\
        Batch size / GPU & 256 & 50 \\
        GPUs & 4 & 16 \\
        Effective Batch size & 1024 & 800 \\
        Epochs & 350 & 575 \\
        Effective Iterations & \num{438000} & \num{957000} \\
        Learning Rate & 0.0001 & 0.0001 \\
        Learning Rate Scheduler & Polynomial Decay & Constant \\
        Warmup Steps & \num{20000} & - \\
        \hline
    \end{tabular}
    \caption{Hyperparameters used for flow model training, adapted from \citet{pooladian2023multisample}.}
    \label{tab:hyperparams}
\end{table}

\paragraph{ImageNet}
In all cases, we use the same U-Net architecture and hyperparameter choices as described in \citet[Appendix E]{pooladian2023multisample} and reproduced in \Cref{tab:hyperparams} for completeness. For SD-FM we sample noise-data pairs using \Cref{alg:couple}. Flow matching models are trained on a single 8 x NVIDIA H100 node for 32x32 resolution and two 8 x NVIDIA H100 nodes for 64x64 resolution. 

\paragraph{PetFace} 
Images from PetFace are resized to 64x64 resolution. We use the same U-Net architecture and hyperparameter choices as used for ImageNet at 64x64 resolution. Flow matching models are trained on two 8 x NVIDIA H100 nodes.

\paragraph{CelebA}
Images from CelebA are first rescaled to 64x64 resolution. From each image $\bx$, low-resolution, noisy images are constructed by first downscaling to 16x16 (4x SR) or 8x8 (8x SR) and adding Gaussian noise with standard deviation $\sigma = 0.1$ or $0.2$, i.e. $\bz = \texttt{downscale}(\bx) + \mathbf{\eta}$, $\mathbf{\eta} = \mathcal{N}(0, \sigma^2 \Id)$. The sampled corrupted image $\bz$ is treated as the condition paired with $\bx$, and we solve conditional optimal transport with $\beta = 25$. 
Flow matching models are then trained to generate samples of 64x64 images $\bx$ conditional on downscaled, noisy observations $\bz$. 
We parameterize the flow with a U-Net with the same hyperparameter choices as for 64x64 ImageNet, and conditions are handled by resizing the low-resolution 8x8 or 16x16 image to 64x64 and stacking together with the input to the U-Net. A batch size of 128 and a learning rate of 0.0002 are used for FM training for a total of \num{500000} iterations. 

\section{Additional experimental results}

\begin{figure}[ht]
    \centering
    \begin{subfigure}[b]{\textwidth}
        \caption*{Full dimensional}
        \includegraphics[width=\textwidth]{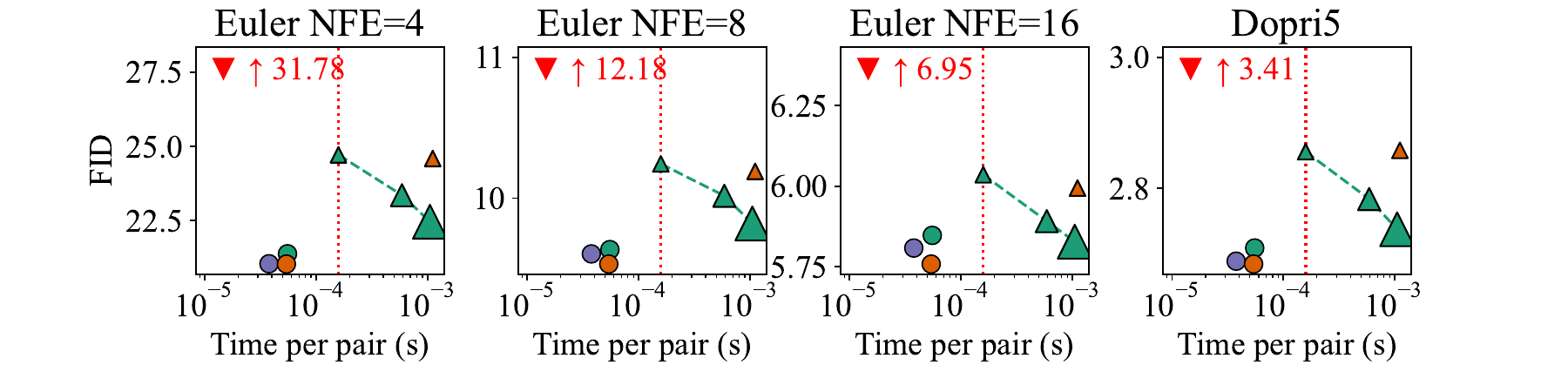}
    \end{subfigure}
    \begin{subfigure}[b]{\textwidth}
        \caption*{PCA 1000}
        \includegraphics[width=\textwidth]{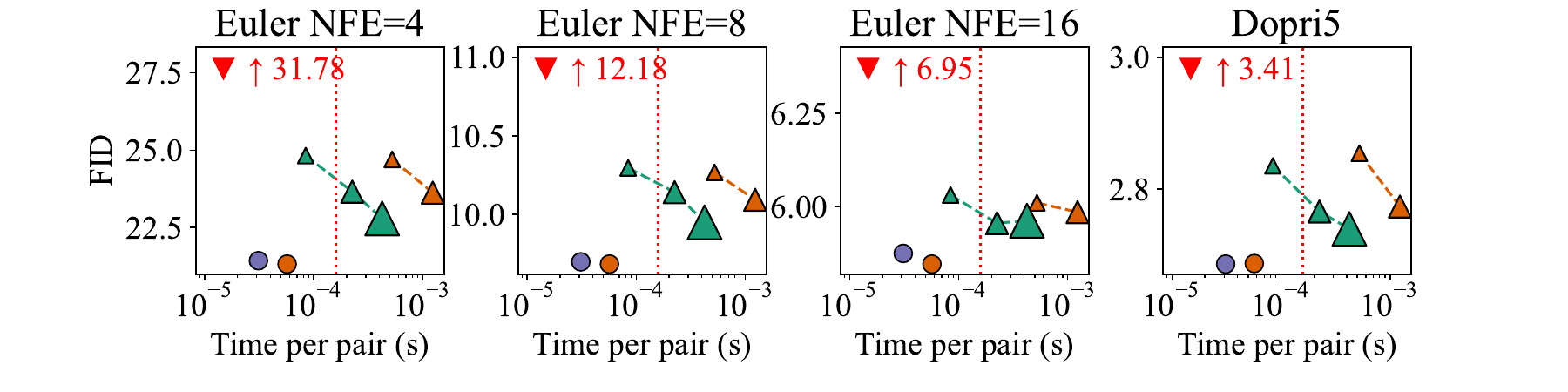}
    \end{subfigure}
    \includegraphics[width=0.75\textwidth]{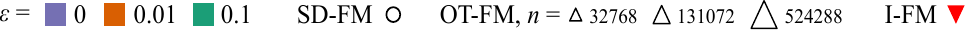}
    \caption{\textbf{FID vs total training time for class-conditional ImageNet 32x32.}}
    \label{fig:img32_cond_fid_vs_time}
\end{figure}

\begin{figure}[ht]
    \centering
    \begin{subfigure}[b]{\textwidth}
        \caption*{Full dimensional}
        \includegraphics[width=\textwidth]{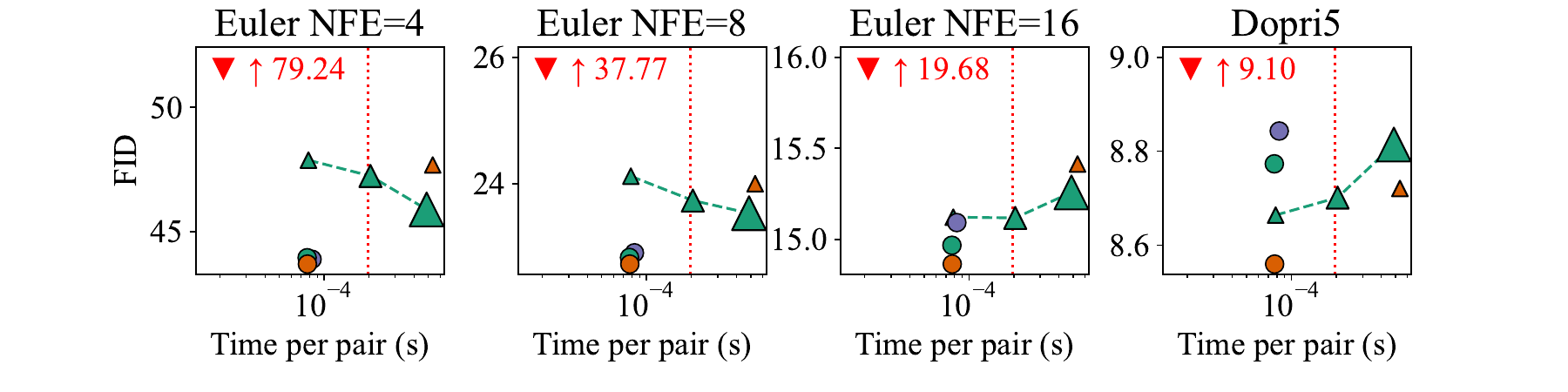}
    \end{subfigure}
    \begin{subfigure}[b]{\textwidth}
        \caption*{PCA 500}
        \includegraphics[width=\textwidth]{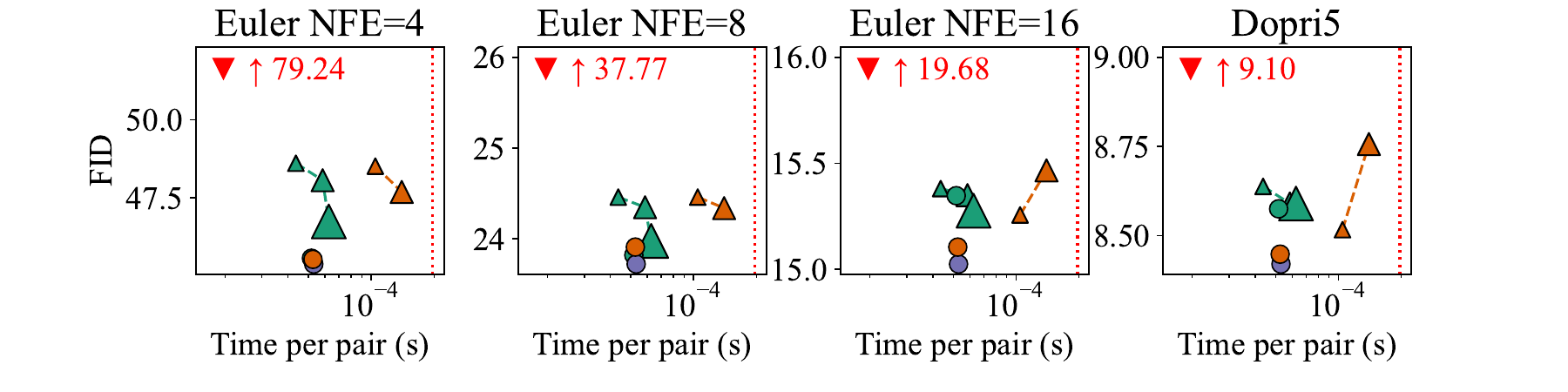}
    \end{subfigure}
    \includegraphics[width=0.75\textwidth]{figures/legend.pdf}
    \caption{\textbf{FID vs time-per-pair for ImageNet 64x64.} We report time-per-pair for training flow matching using SD-FM compared to I-FM and OT-FM with batch sizes $2^{15}, 2^{17}, 2^{19}$.}
    \label{fig:img64_fid_vs_time}
\end{figure}

\begin{figure}
\centering
\input{figures/grid_imgnet32_unconditional}
\caption{Images generated from unconditional models trained on \textbf{ImageNet (32x32)}. \textbf{(a)} denotes independent coupling while \textbf{(b)} denotes semidiscrete OT coupling with $\varepsilon = 0$ used for training the model.}
\label{fig:imagenet32_grid_uncond}
\end{figure}

\begin{figure}
\centering
\input{figures/grid_imgnet32_conditional}
\caption{Images generated from conditional models trained on \textbf{ImageNet (32x32)}. \textbf{(a)} denotes independent coupling while \textbf{(b)} denotes semidiscrete OT coupling with $\varepsilon = 0$ used for training the model. Rows are selected classes: English setter, mountain bike, banana, plane, pizza, marmot, red wine, wall clock, cheeseburger, king penguin.}
\label{fig:imagenet32_grid_cond}
\end{figure}

\begin{figure}
\centering
\input{figures/grid_imgnet64_unconditional}
\caption{Images generated from unconditional models trained on \textbf{ImageNet (64x64)}. \textbf{(a)} denotes independent coupling while \textbf{(b)} denotes semidiscrete OT coupling with $\varepsilon = 0$ used for training the model.}
\label{fig:imagenet64_grid_uncond}
\end{figure}

\begin{figure}
\centering
\input{figures/grid_imgnet64_conditional}
\caption{Images generated from conditional models trained on \textbf{ImageNet (64x64)}. \textbf{(a)} denotes independent coupling while \textbf{(b)} denotes semidiscrete OT coupling with $\varepsilon = 0$ used for training the model. Classes are the same as in \Cref{fig:imagenet32_grid_cond}. }
\label{fig:imagenet64_grid_cond}
\end{figure}

\begin{figure}
\centering
\input{figures/grid_petface_unconditional}
\caption{Images generated from unconditional models trained on \textbf{PetFace (64x64)}.}
\label{fig:petface64_grid_uncond}
\end{figure}

\begin{figure}
\centering
\input{figures/grid_petface_conditional}
\caption{Images generated from conditional models trained on \textbf{PetFace (64x64)}. 
Rows correspond to the following classes: cat, chimp, chinchilla, degus, dog, ferret, guineapig, hamster.
\label{fig:petface64_grid_cond}
}
\end{figure}

\begin{figure}[ht]
    \centering
    \begin{subfigure}[b]{0.95\textwidth}
        \centering
        \includegraphics[width=\textwidth]{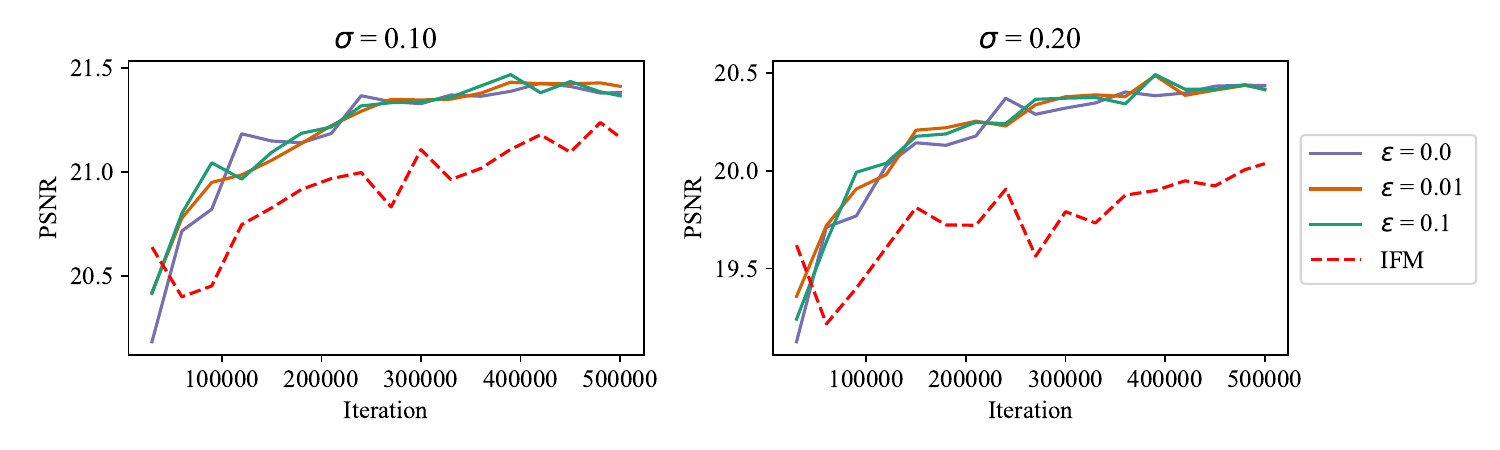}
        \caption{4x SR}
        \includegraphics[width=\textwidth]{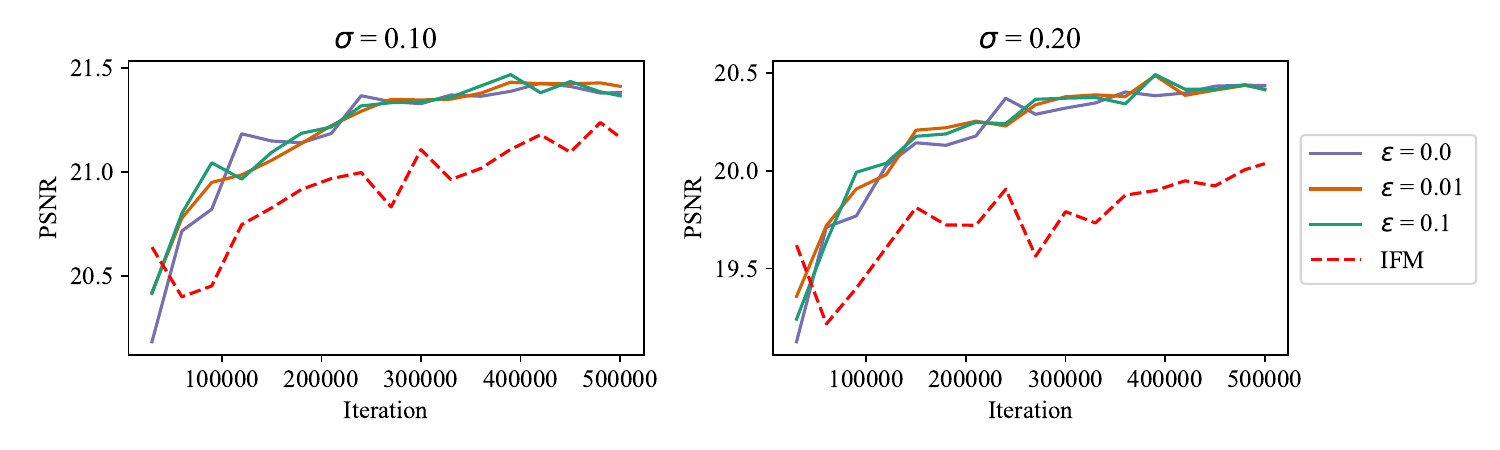}
        \caption{8x SR}
    \end{subfigure}
    \hfill
    \begin{subfigure}[b]{0.95\textwidth}
        \centering
        \includegraphics[width=\textwidth]{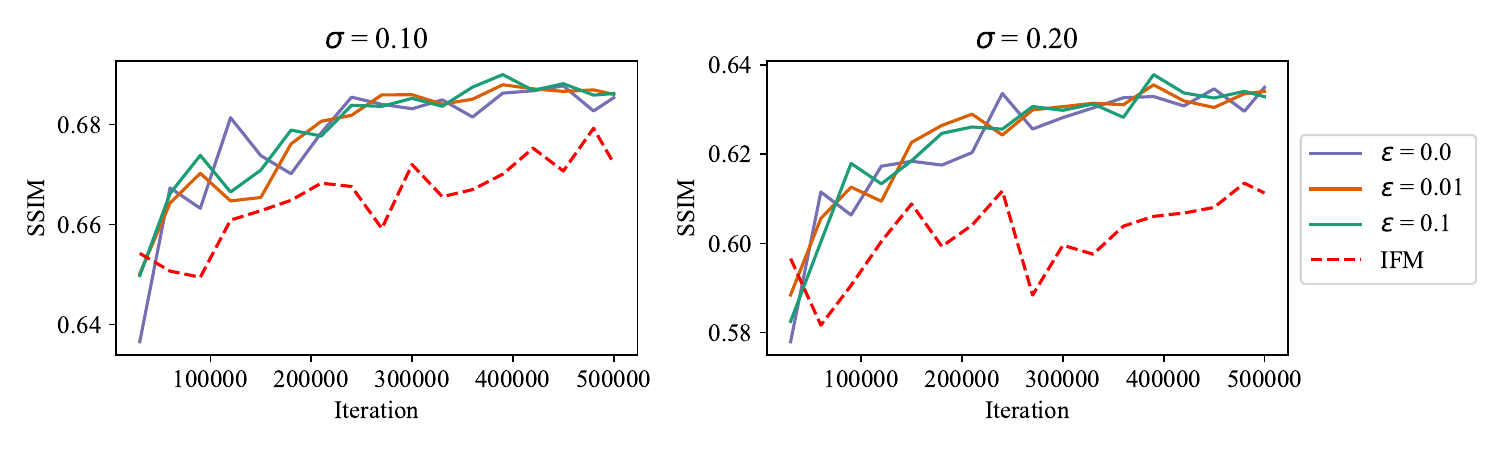}
        \caption{4x SR}
        \includegraphics[width=\textwidth]{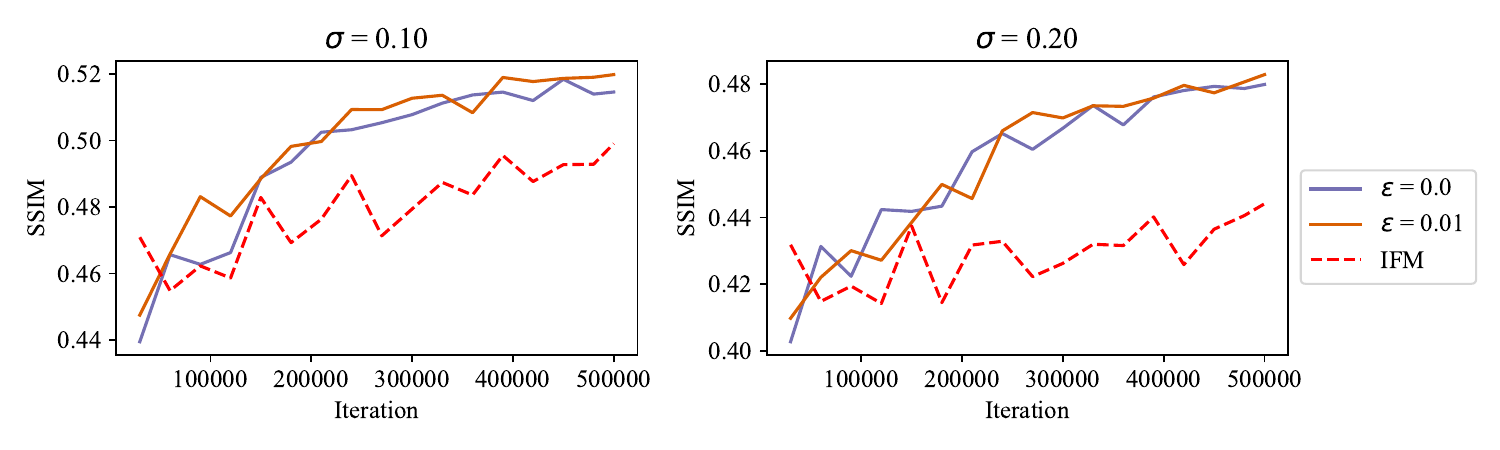}
        \caption{8x SR}
    \end{subfigure}
    \caption{Evolution of \textbf{PSNR} and \textbf{SSIM} over the course of training for 4x and 8x image super-resolution on \textbf{CelebA}.}
    \label{fig:celeba_psnr_ssim}
\end{figure}

\begin{figure}[ht]
    \centering
    \begin{subfigure}[b]{\textwidth}
        \includegraphics[width=\textwidth]{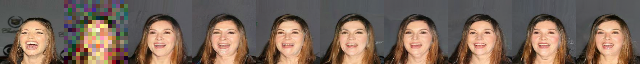}
        \includegraphics[width=\textwidth]{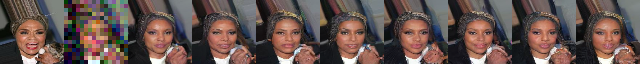}
        \includegraphics[width=\textwidth]{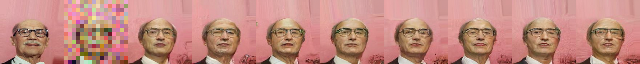}
        \caption{4x SR, \textbf{SD}-FM, $\varepsilon = 0, \beta = 25$}
    \end{subfigure}
    \hfill 
    \begin{subfigure}[b]{\textwidth}
        \centering
        \includegraphics[width=\textwidth]{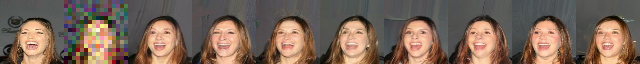}
        \includegraphics[width=\textwidth]{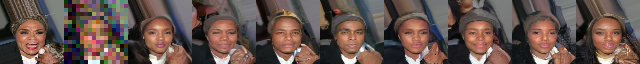}
        \includegraphics[width=\textwidth]{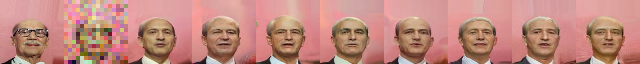}
        \caption{4x SR, \textbf{I}-FM}
    \end{subfigure}
    \hfill
    \begin{subfigure}[b]{\textwidth}
        \includegraphics[width=\textwidth]{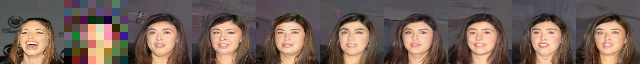}
        \includegraphics[width=\textwidth]{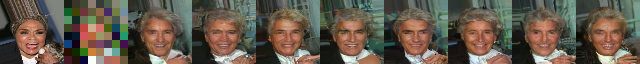}
        \includegraphics[width=\textwidth]{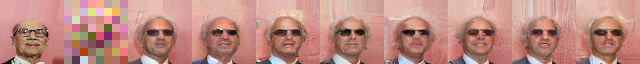}
        \caption{8x SR, \textbf{SD}-FM, $\varepsilon = 0, \beta = 25$}
    \end{subfigure}
    \hfill
    \begin{subfigure}[b]{\textwidth}
        \centering
        \includegraphics[width=\textwidth]{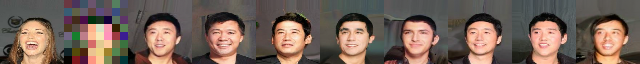}
        \includegraphics[width=\textwidth]{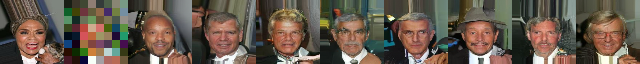}
        \includegraphics[width=\textwidth]{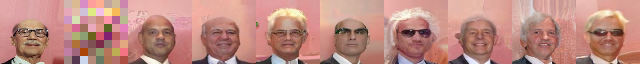}
        \caption{8x SR, \textbf{I}-FM}
    \end{subfigure}
    \caption{Sample images for \textbf{CelebA} 4x and 8x super-resolution with noise level $\sigma = 0.2$, for models trained with \textbf{SD-FM} and \textbf{I-FM}. Columns correspond to the original clean image, downscaled and noisy image, respectively, followed by 8 samples from the flow model.}
    \label{fig:celeba_samples}
\end{figure}

\newpage 
\FloatBarrier

\section{\rev{Sensitivity of dual potential to $\varepsilon$}}
\rev{In \ref{fig:gvariationepsilon}, we provide a fine-grained picture of the impact of $\varepsilon$ when computing $\mathbf{g}$ on the same ImgN64 dataset (in full dimensions).}

\begin{figure}
\centering
\includegraphics{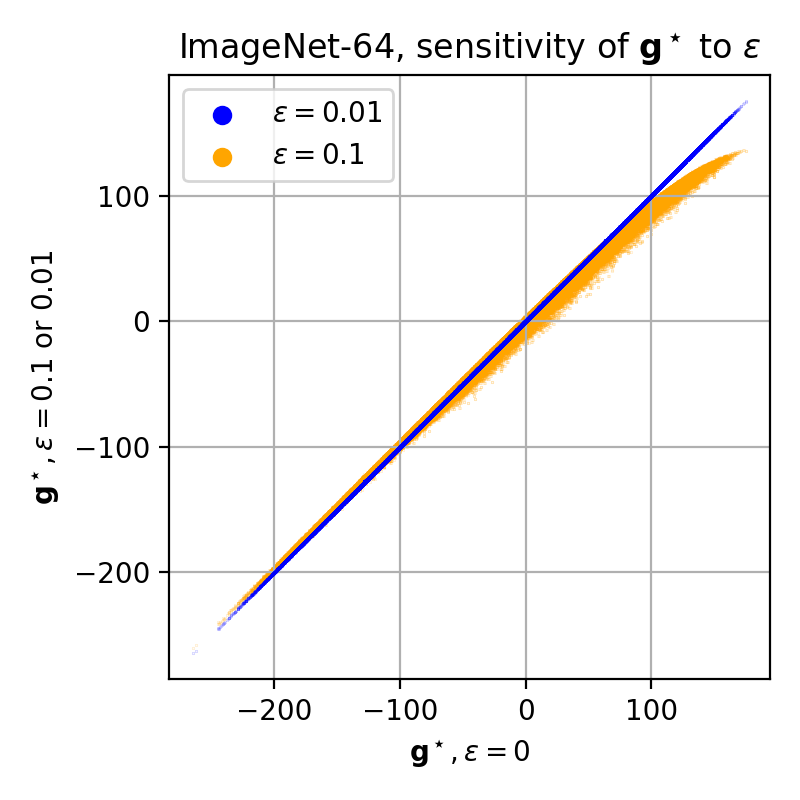}
\caption{\rev{We compare the optimal dual potentials $\mathbf{g}^\star_\varepsilon$ for 3 different levels of $\varepsilon$ using scatter plots. Each scatter plot depicts 2.56M points, where 2D points $((g^\star_\varepsilon)_j, (g^\star_{\varepsilon'})_j)$ are plotted for different $\varepsilon\neq\varepsilon'$. As can be shown, the potential vector $\mathbf{g}^\star_\varepsilon$ varies smoothly, for each datapoint, with $\varepsilon$. These computations are carried out on full-dimension $d=12,288$ on ImgN64.}}\label{fig:gvariationepsilon}
\end{figure}

\section{\rev{SD-FM: Ablations on ImgN64}}
\subsection{\rev{Distribution of MIPS Sampling as a function of $\tau$}}
\rev{Figure~\ref{fig:reb1} proposes to study the distribution of selected images in the dataset of $N$ ImgN64 (full-dimension) as SD-FM training occurs. This pictures provides a complementary view to Fig.~\ref{fig:fidcurv_chi} showing why the FID metric improve as the $\chi_2$ metric decreases.}
\begin{figure}
\centering
\includegraphics[width=.49\textwidth]{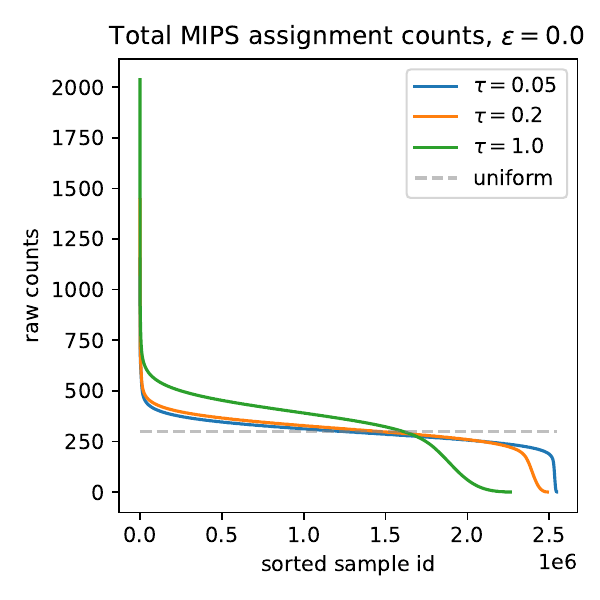}
\includegraphics[width=.49\textwidth]{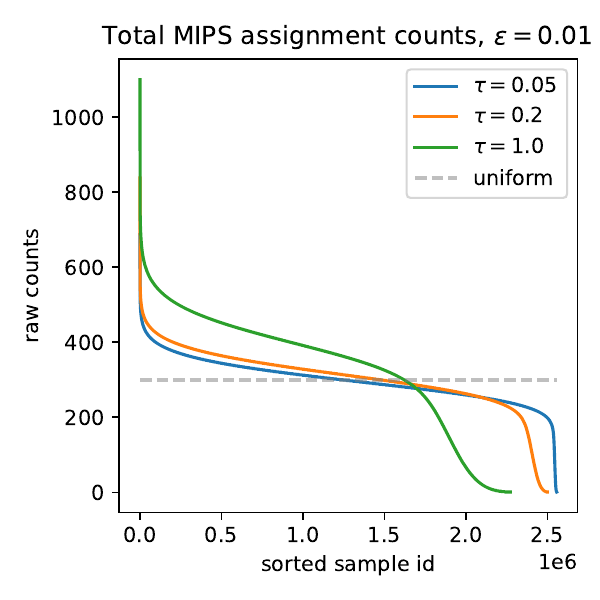}
\caption{
\rev{
Effect of varying threshold $\tau$ (in Alg.\ref{alg:sgd}) when computing $\mathbf{g}^\star_\varepsilon$ on the distribution of selected images in the dataset, for $\varepsilon=0.0$ on the left, $\varepsilon=0.01$ right. As can be seen, a loose threshold of $\tau=1$ (essentially no optimization for $\mathbf{g}$) results in images that are never sampled, whereas a finer threshold of $0.05$ equalizes the distribution of sampled images. These computations are carried out on PCA $k=1000$ of the data space originally in full-dimension $d=12,288$, on ImgN64.}}
\label{fig:reb1}
\end{figure}

\subsection{\rev{SD-FM Metrics a function of $\tau, \varepsilon$ and cost ablation}}
\rev{We simultaneously study in the following experiments three different factors that lie at the heart of SD-FM:}
\begin{itemize} 
\item \rev{does it matter that the potential $\mathbf{g}^\star_\varepsilon$ for a certain $\varepsilon$ is truly optimal, in the sense that it respects marginals and guarantees that every point in the dataset is properly sampled? In other words, does low $\tau$ always correlate with better FID and curvature metrics?}
\item \rev{does it matter for SD-FM to work to narrow down on the \textit{optimal transport} as defined using the "traditional" $\ell_2$ cost $c(x,y)=-\langle x, y\rangle$? Would results be different if one were to select different costs? To answer this we consider three other costs: the "anti-OT" cost $c(x,y)=\langle x, y\rangle$ that promotes, on the contrary, the longest paths, and two random corruptions of the $c_{\sigma}(x,y)=-\langle \sigma \circ x, y\rangle$. The $\sigma$ holds the realizations of $d$ random (Rademacher) variable in ${-1,1}$ with probability 30\% or 70\% of taking the value $-1$. This value is sampled only once per experiment.}
\item \rev{does it matter to have a deterministic association ($\varepsilon=0$) or is it better to inject more randomness ($\varepsilon>0$), having in mind that infinite $\varepsilon$ is equivalent to IFM.}
\end{itemize}
\rev{While all of Figures \ref{fig:rebu2}, \ref{fig:rebu3}, \ref{fig:rebu4}, \ref{fig:rebu5} provide a wide wealth of information, we can confidently say, by looking at all of them, that apart from FID in the lowest NFE-4 case, which is slightly more nuanced, all metrics improve as $\varepsilon$ and $\tau$ decrease (closer to "sharp" OT) and clearly highlight the crucial role of the $\ell_2$ cost (i.e. not "any" OT plan would work, that with shortest paths results in better performance).}

\begin{figure}
\centering
\includegraphics[width=\textwidth]{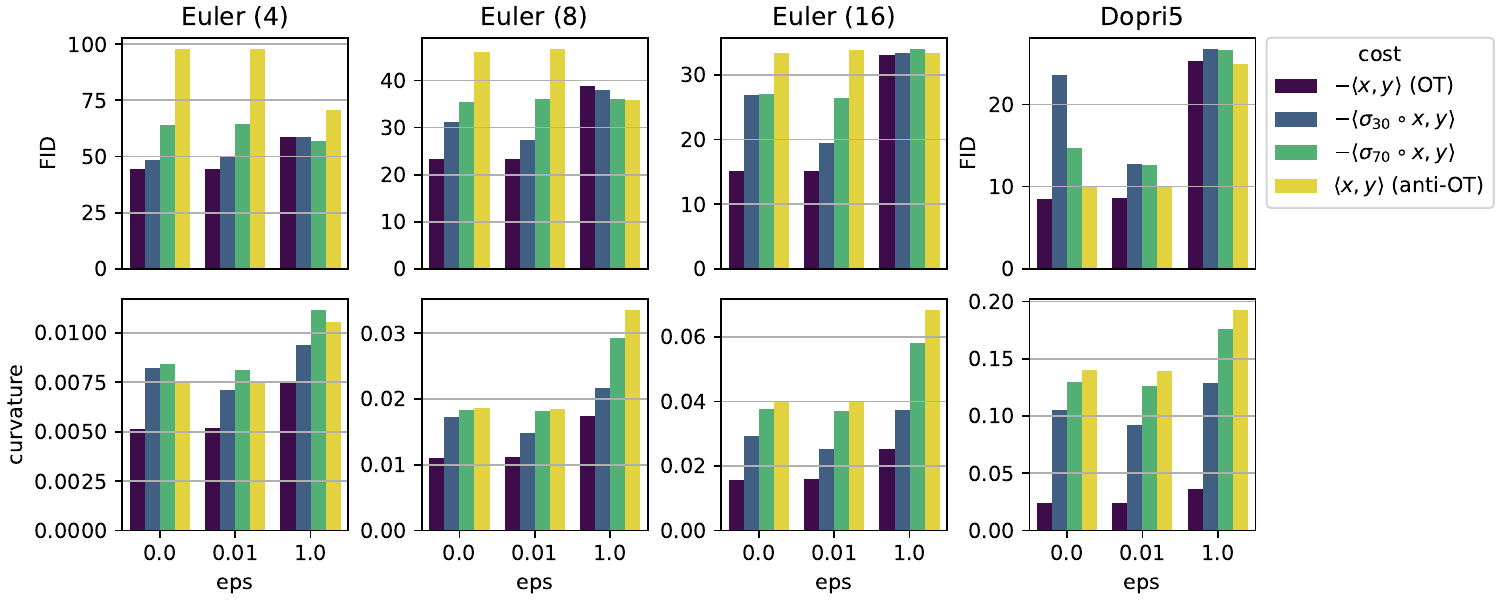}
\caption{
\rev{
Low $\tau=0.05$ regime (i.e. close to exact SD-OT computation).}}
\label{fig:rebu2}
\end{figure}

\begin{figure}
\centering
\includegraphics[width=\textwidth]{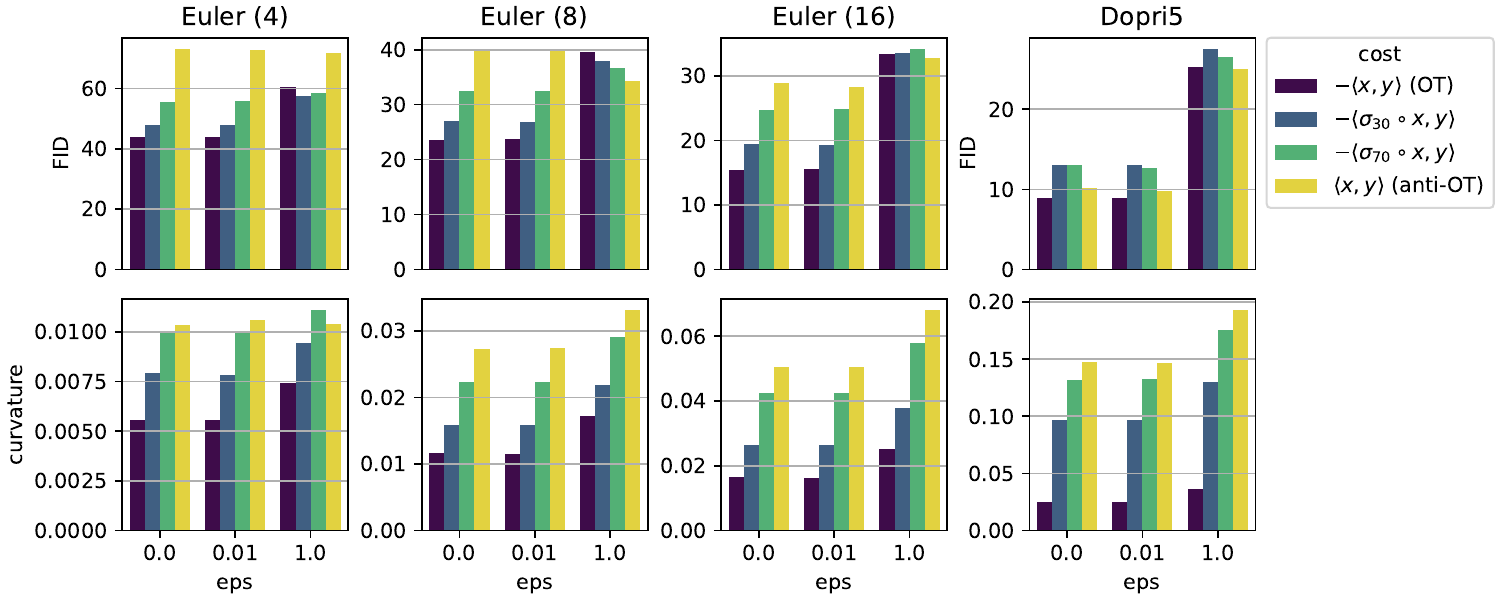}
\caption{
\rev{
Middle $\tau=0.2$ regime (i.e. exact SD-OT computation).}}
\label{fig:rebu3}
\end{figure}

\begin{figure}
\centering
\includegraphics[width=\textwidth]{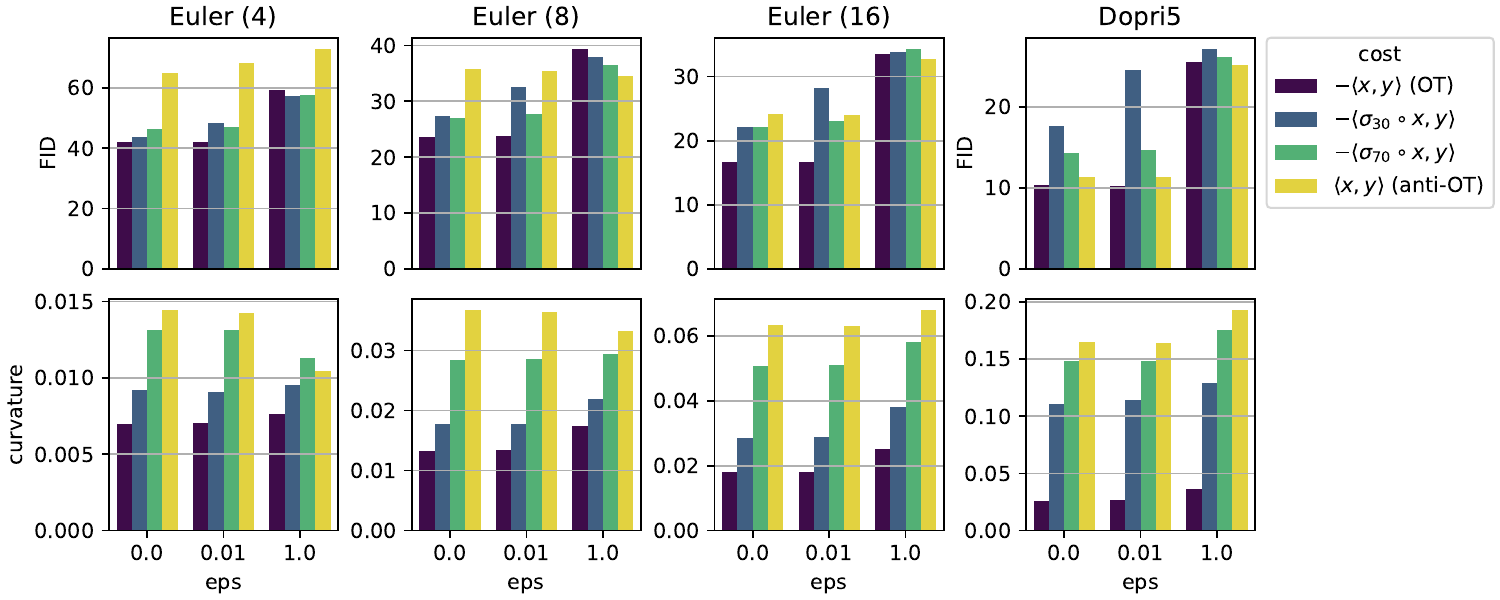}
\caption{
\rev{
High $\tau=1.0$ regime (no marginal preservation is guaranteed).}}
\label{fig:rebu4}
\end{figure}

\begin{figure}
\centering
\includegraphics[width=\textwidth]{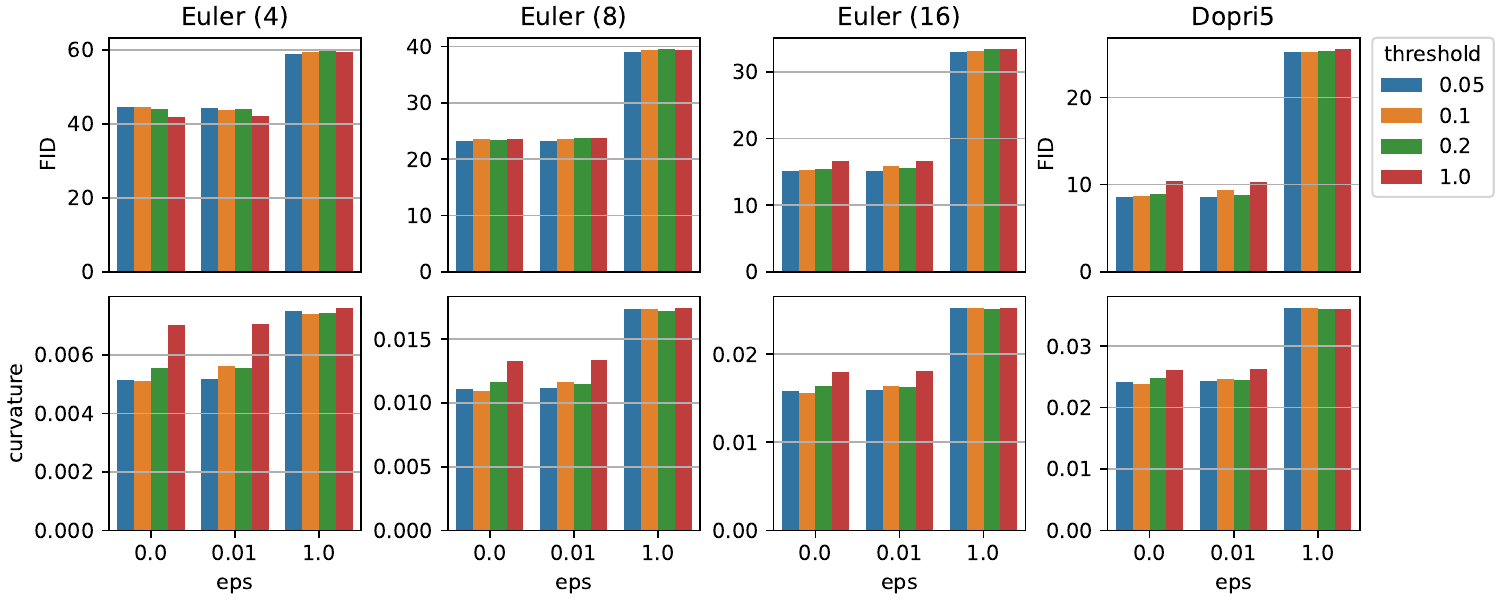}
\caption{
\rev{
Original cost $c(x,y)=-\langle x, y\rangle$, low $\varepsilon=0$ regularization, ablating $\tau$.}}
\label{fig:rebu5}
\end{figure}

\subsection{\rev{Training SD-OT on Pixel Space vs. Latent Space}}
\label{app:diffsinmean}
\subsection{\rev{SD-FM: Ablations on ImgN64}}
\begin{figure}
\centering
\includegraphics[width=.4\textwidth]{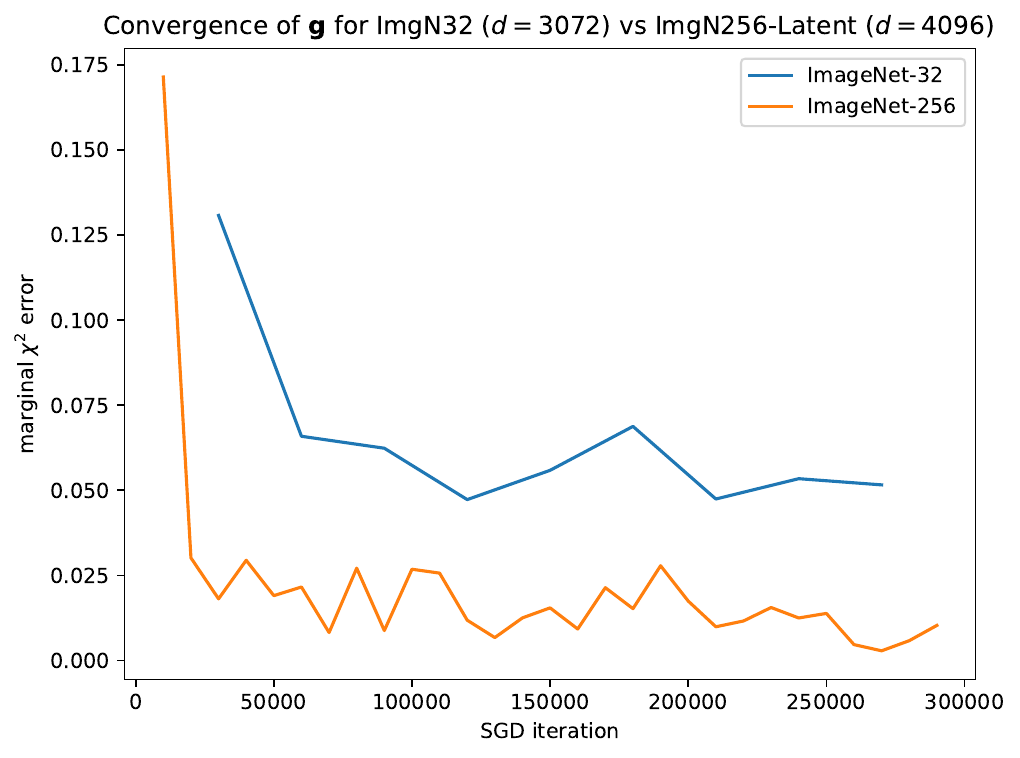}
\caption{\rev{We compare the evolution of the $\chi_2$ convergence criterion when computing SD-OT on either ImgN32 in original pixel space ($d=3072$) vs. ImgN256 in latent space ($d=4096$). Here the total dataset size is the same ($N=2.56$M). While one would expect SD-OT to converge faster for a smaller dimensional space (i.e. ImgN32), this is not the case, because of the far more regular distribution of latents that is designed to resemble a Gaussian.}}
\end{figure}

\end{document}